\documentclass{article} 
\usepackage{iclr2026_conference,times}

\usepackage{amsmath,amsfonts,bm}

\def\1{\bm{1}}

\DeclareMathAlphabet{\mathsfit}{\encodingdefault}{\sfdefault}{m}{sl}
\SetMathAlphabet{\mathsfit}{bold}{\encodingdefault}{\sfdefault}{bx}{n}

\usepackage[hidelinks]{hyperref}
\hypersetup{
  pdftitle={Asymptotic Preservation and Uniform Accuracy of Diffusion and Flow-Matching Samplers},
  pdfauthor={Shiheng Zhang},
  pdfsubject={Asymptotic preservation and uniform accuracy of generative samplers},
  pdfkeywords={diffusion models, flow matching, asymptotic-preserving methods, uniform accuracy, terminal noise}
}
\usepackage{url}
\usepackage{amssymb}
\usepackage{amsthm}
\usepackage{booktabs}
\usepackage{array}
\usepackage{graphicx}
\usepackage{enumitem}
\usepackage{placeins}

\theoremstyle{plain}
\newtheorem{theorem}{Theorem}
\newtheorem{proposition}{Proposition}

\newtheorem{corollary}{Corollary}
\theoremstyle{definition}
\newtheorem{definition}{Definition}

\theoremstyle{remark}
\newtheorem{remark}{Remark}

\newcommand{\smax}{\sigma_{\max}}
\newcommand{\Wtwo}{W_{2}}
\newcommand{\ExpE}{\mathbb{E}}
\newcommand{\dd}{\mathrm{d}}
\newcommand{\damb}{d_0}
\newcolumntype{L}[1]{>{\raggedright\arraybackslash}p{#1}}

\title{Asymptotic Preservation and \\
Uniform Accuracy of Diffusion and \\
Flow-Matching Samplers}

\author{Shiheng Zhang \\
University of Washington \\
\texttt{shzhang3@uw.edu}}

\iclrfinalcopy 
\begin{document}

\maketitle
\lhead{}\rhead{}\renewcommand{\headrulewidth}{0pt}

\begin{abstract}
Diffusion and Gaussian-interpolant flow-matching samplers approach data through
a terminal noise floor $\varepsilon$, a singular limit for manifold-supported
or rank-deficient data. We study two properties of a complete sampler
specification, comprising its update rule, time grid, and terminal rule.
Asymptotic preservation (AP) means a stable and consistent zero-noise
discretization with a step count bounded independently of $\varepsilon$.
Uniform accuracy (UA) of order $p$ means that, at numerical resolution $h$, the
endpoint $W_2$ error is $O(h^p)$ with a floor-independent constant. Bounded
log-noise stepping fails AP because
its step count diverges. Stopping a stable base solver at a positive switching
scale $a$ and appending one map fitted to the analytic normal mode restores AP.
On smooth compact boundaryless manifolds, the standard map has exact-input
error $O(a^2-\varepsilon^2)$ and sharp zero-floor error $\Theta(a^2)$. A base
solver with a floor-uniform order-$p$ estimate on the resolved interval retains
that order when $a=O(h^{p/2})$, provided the terminal transfer factor remains
bounded.
Along exact trajectories, the posterior-mean identity
$D(x(\sigma),\sigma)=x(\sigma)-\sigma x'(\sigma)$ cancels the linear terminal
defect and enables higher-order fitted maps. A three-evaluation Hermite
construction is uniformly third order for exact switching-scale input over
$0\le\varepsilon\le a$, and a
seven-evaluation construction is fourth order at zero. We classify
representative diffusion and flow-matching specifications by AP and UA. On EDM
and Rectified Flow checkpoints, a paired decomposition separates base-integration
from terminal-completion error and predicts held-out same-seed endpoint errors.
\end{abstract}

\section{Introduction}
\label{sec:intro}
Diffusion and flow-matching models sample by integrating a learned field from
noise toward data, usually stopping at a positive terminal noise level before
a final denoising step. Throughout, $\sigma$ denotes the running noise scale and
$\varepsilon$ its requested terminal value, so $\sigma\in[\varepsilon,\smax]$.
Sampler implementations often denote this terminal parameter by
$\sigma_{\min}$; we use $\varepsilon$ throughout. In
the manifold-supported or rank-deficient regime, the denoiser Jacobian has
normal eigenvalues that vanish with $\varepsilon$, whereas the score Jacobian
has normal eigenvalues of size $\varepsilon^{-2}$. The floor is therefore a
singular parameter. Existing
convergence analyses typically impose score regularity uniformly in time
\citep{chen2023sampling,lee2022convergence} or retain a positive early-stopping
scale \citep{benton2024linear,debortoli2022manifold}. We instead study the
zero-floor limit of the numerical method.

Asymptotic-preserving (AP) numerics asks whether a fixed discretization has a
stable and consistent limit when a small physical scale is removed
\citep{jin1999efficient,jin2022ap,roos2008robust}. Diffusion sampling adds a
feature absent from the usual fixed-domain setting: the log-noise horizon
$\log(\smax/\varepsilon)$ diverges. A method can therefore maintain a small local
step by taking increasingly many updates. In this expanding-horizon setting,
the classical non-resolution requirement becomes a floor-independent step
count. We reserve \emph{uniform
accuracy} (UA) for a quantitative error bound independent of $\varepsilon$.

We use this definition to ask which existing samplers are AP and with what UA
order. A method name alone does not determine the answer. AP belongs to the
complete specification
\[
\text{update rule}\; +\; \text{time parametrization and grid}\; +\;
\text{terminal rule}.
\]
Here $h>0$ denotes the base solver's resolution parameter, such as its maximum
native step, and $a:=a(h)>0$ is a terminal switching scale selected from that
resolution.
For example, a DPM-Solver update marched to an ever smaller floor with bounded
log-noise steps is not AP because its step count diverges. The same update used
only on a positive interval $[a,\smax]$, followed by one fitted terminal map,
can be AP. The specification-level view
distinguishes implementations that share a solver formula but use different
cutoffs or final denoising rules.

Two mechanisms determine our sampler classification. Bounded local log-noise
steps without a terminal map require
$N\gtrsim\log(\smax/\varepsilon)$ updates. By contrast, using the
posterior-mean denoiser $D$
and normalized residual $\eta(x,a)=(x-D(x,a))/a$, the fitted update
\[
x\longmapsto D(x,a)+\varepsilon\,\eta(x,a)
\]
extends a stable and consistent base solver from its interval
$[a,\smax]$ to every lower terminal noise level. The required transfer
condition is that the base-solver error at $a$ vanishes after multiplication
by the denoiser's Lipschitz factor.

The quantitative question is then how much order survives this completion and
how $a$ should scale with the resolution $h$. On smooth closed manifolds,
the exact-input fitted map has finite-floor error $O(a^2-\varepsilon^2)$ and
sharp zero-floor error $\Theta(a^2)$. Thus the usual one-evaluation terminal
rule imposes a genuine second-order bottleneck. We show how to remove it. The
identity $D(x(\sigma),\sigma)=x(\sigma)-\sigma x'(\sigma)$ cancels the linear
term of every smooth terminal trajectory. Scale extrapolation then recovers a
higher-order zero endpoint, and Hermite reconstruction supplies every
intermediate floor. For exact switching-scale input, our concrete
three-evaluation map is uniformly third order; numerical base output inherits
this order under a weighted Lipschitz transfer condition. A seven-evaluation
map is fourth order at zero under its trajectory and stage-accuracy hypotheses.
The endpoint error then separates into a transported base-integration term and
a terminal-completion term. This decomposition determines how the switching
scale should vary with the numerical resolution and can be measured directly
on learned checkpoints.

On a rank-deficient Gaussian model, the terminal map is exact in normal
directions and has tangential error $O(a^2)$, so an order-$p$ base method
retains its order when $a=O(h^{p/2})$. Complementary bounded-horizon
parametrizations show that AP specifications can still have different UA
orders. Together, these results establish the AP
status and UA orders of the analyzed DDIM, DPM-Solver-1/2, EDM,
Euler--Maruyama, and Rectified Flow configurations. They also give conditional
completion results for DPM-Solver-3, DEIS, PNDM, DPM-Solver++, and UniPC.

We work with the posterior-mean field of the true noised law in
variance-exploding coordinates. In these coordinates, deterministic and
stochastic diffusion samplers and Gaussian-interpolant flow-matching samplers
traverse the same marginal path under different time parametrizations and
stochasticity levels. This common representation separates the update rule,
time grid, and terminal rule within one AP and UA framework.

Finite-budget schedule optimization and adaptive solver allocation choose
nodes or solver orders for a fixed terminal protocol
\citep{sabour2024align,jo2026formalizing}. Complementary work studies terminal
flow-matching trajectories, denoising as approximate projection, and adaptation
to manifold support
\citep{DBLP:conf/icml/WanWM025,permenter2024interpreting,kumar2026flow}.
Small-noise analysis of Rao--Blackwellized tangent denoising targets also
exhibits intrinsic and extrinsic manifold corrections \citep{rawal2026rao}.
Concurrent local analysis derives tangent-cone boundary-layer expansions for
Gaussian-smoothed singular measures near boundaries and corners, including
their score and log-Hessian fields \citep{brosse2026boundary}. Here the
geometric object is the global $W_2$ error of posterior-mean terminal
completion. We follow the terminal floor to zero and classify the complete
specification of update, time grid, and terminal rule.

Extrapolation has also been used to refine consecutive clean-sample estimates
in LA-DPM \citep{zhang2023lookahead} and to combine multiple positive-time ODE
solutions in RX-DPM \citep{choi2025rxdpm}. Our terminal-order construction
instead uses nodes that shrink with the switching scale, transports that remain
exact on the normal mode, and moment conditions that determine how many
positive-scale predictions recover higher floor-uniform order.

\paragraph{Contributions.}
(i) We define AP for the vanishing-noise problem and prove the
expanding-horizon obstruction for bounded log-noise steps.
(ii) We quantify the UA of fitted completion: the standard map has sharp
$\Theta(a^2)$ zero-floor error on smooth closed manifolds. We then derive
minimal-node fitted extrapolation, a three-evaluation terminal map that is
uniformly third order for exact switching-scale input, and a seven-evaluation
fourth-order zero-floor map under the trajectory conditions. A weighted
Lipschitz condition transfers the FE3 order from numerical base output.
(iii) We classify representative deterministic, stochastic, and flow-matching
specifications by AP and Gaussian UA order. On two learned checkpoints, a
same-seed decomposition recovers the base-integration and terminal-completion
rates and predicts held-out endpoint errors.

\section{Preliminaries: one flow, many samplers}
\label{sec:prelim}

Diffusion and Gaussian-interpolant flow-matching samplers can be expressed on
the same family of noised data distributions. We define this family and write
its deterministic, stochastic, and flow-matching traversals in common
coordinates. A denoiser or velocity is \emph{exact} when it is determined
by the true law $p_\sigma$.

\paragraph{Noised marginals and residual velocity.}
Let $X_0\sim p_0$ be data in $\mathbb R^{\damb}$, where $\damb$ is the ambient
dimension, and set $X_\sigma=X_0+\sigma Z$ with
$Z\sim\mathcal N(0,I)$. The resulting variance-exploding (VE) marginals are
$p_\sigma:=p_0*\mathcal N(0,\sigma^2I)$
\citep{song2021sde,karras2022edm}. Sampling moves from
$p_{\smax}\approx\mathcal N(0,\smax^2I)$, with $\smax$ above the data scale,
toward $p_0$ by decreasing $\sigma$.
The theory initializes from the true noised law $p_{\smax}$, thereby separating
the terminal discretization from the usual high-noise Gaussian approximation.

The exact \emph{denoiser}
\[
D(x,\sigma):=\ExpE[X_0\mid X_\sigma=x]
=x+\sigma^2\nabla\!\log p_\sigma(x)
\]
is the minimum-mean-square-error (MMSE) estimate of the clean state from the noisy
observation $X_\sigma=x$; the second equality is Tweedie's formula
\citep{robbins1956empirical,efron2011tweedie}. The probability-flow ordinary
differential equation (PF-ODE) is therefore
\begin{equation}
\label{eq:pfode}
\frac{\dd X}{\dd\sigma}
=-\sigma\,\nabla\!\log p_\sigma(X)
=\frac{X-D(X,\sigma)}{\sigma}
=:\eta(X,\sigma),
\qquad \sigma:\smax\downarrow\varepsilon .
\end{equation}
The normalized residual $\eta=(x-D)/\sigma$ is both the PF-ODE velocity and
the posterior noise $\ExpE[Z\mid X_\sigma=x]$, the noise-prediction variable
used in diffusion models \citep{ho2020denoising,song2021ddim}. Equivalently,
\begin{equation}
\label{eq:micromacro}
x=D(x,\sigma)+\sigma\,\eta(x,\sigma).
\end{equation}
The decomposition expresses the state as its posterior mean plus a residual of
size $\sigma$. For
$X_\sigma\sim p_\sigma$, conditional Jensen gives
$\ExpE\|\eta(X_\sigma,\sigma)\|^2\le \damb$, so $\eta$ is uniformly $O(1)$
per coordinate in $L^2$; this bound is sharp in the normal directions.
Written as the bare score $-\sigma^{-1}\eta$, the same field has normal
amplitude $O(\sigma^{-1})$ and Jacobian $O(\sigma^{-2})$, so we formulate
the samplers in $D$ and $\eta$.

\paragraph{Deterministic and stochastic traversals.}
The PF-ODE exactly transports the family $\{p_\sigma\}$. A marginal-preserving
stochastic family is most simply written in log-noise time
$\lambda=\log(\smax/\sigma)$. We call a monotone choice of integration variable
a \emph{clock}. Set $\sigma_\lambda=\smax e^{-\lambda}$; $X_\lambda$ is the
state indexed by this clock, and $B_\lambda$ is standard Brownian motion in
$\lambda$-time. Then
\begin{equation}
\label{eq:sde}
\dd X_\lambda=-(1+\beta)\,\sigma_\lambda\,\eta(X_\lambda,\sigma_\lambda)\,\dd\lambda
+\sqrt{2\beta}\,\sigma_\lambda\,\dd B_\lambda,\qquad \beta\ge0.
\end{equation}
Here $\beta=0$ recovers the PF-ODE, while $\beta=1$ is the canonical reverse
stochastic differential equation (SDE)
\citep{anderson1982reverse,song2021sde}. The factor $1+\beta$ keeps the
marginals equal to $p_{\sigma_\lambda}$.

\paragraph{Flow matching in the same coordinates.}
Gaussian-path flow-matching and stochastic-interpolant models
\citep{lipman2023flowmatching,liu2023rectifiedflow,albergo2023interpolants,albergo2023stochastic}
use $X_t=\alpha_t X_0+\sigma_t Z$; rectified flow has
$\alpha_t=1-t$ and $\sigma_t=t$. Wherever $\alpha_t>0$, the scaling
$\tilde X=X_t/\alpha_t$ and $\sigma=\sigma_t/\alpha_t$ transforms this path
into the VE marginals above (cf.\ Proposition~2.2 of
\citealp{DBLP:conf/icml/WanWM025}). Its velocity is
\begin{equation}
v_t(x)=\dot\alpha_t\,D(\tilde x,\sigma)
      +\dot\sigma_t\,\eta(\tilde x,\sigma),
\qquad \tilde x=x/\alpha_t,\quad \sigma=\sigma_t/\alpha_t .
\label{eq:fm}
\end{equation}
Gaussian-interpolant diffusion and flow matching therefore differ only by a
scaling and a time parameterization; the same $D$ and $\eta$ determine both
exact velocities. Their sampler specifications differ through the
discretization and terminal rule.

\paragraph{Singular terminal regime and metric.}
As $\varepsilon\to0$, the split~\eqref{eq:micromacro} develops a singular terminal
regime. In
the exact linear-support model, write $x_\perp$ for the component normal to the
support. The posterior mean has zero normal component, so
$\dd x_\perp/\dd\sigma=x_\perp/\sigma$ and therefore
$x_\perp(\sigma)\propto\sigma$. Hence $x_\perp=\sigma\eta_\perp$ collapses
while $\eta_\perp$ remains order one. The same
normal scaling describes the leading local geometry near regular manifold
support \citep{kadkhodaie2024geometry}. Our analysis concerns this collapse of
the state with a nonvanishing normalized residual.

Because $p_\sigma$ may converge to a singular law, terminal convergence is
measured with the Wasserstein-2 metric
\begin{equation}
\Wtwo^2(\mu,\nu):=\inf_{\pi\in\Pi(\mu,\nu)}
\int_{\mathbb R^{\damb}\times\mathbb R^{\damb}}
\lVert x-y\rVert^2\,\dd\pi(x,y),
\label{eq:w2}
\end{equation}
for probability laws $\mu,\nu$ with finite second moments, where
$\Pi(\mu,\nu)$ is their set of couplings. The metric remains finite as
$p_\sigma$ collapses
onto lower-dimensional support, whereas total variation is maximal and
Kullback--Leibler divergence is infinite between a positive-$\sigma$
density and its singular limit
\citep{arjovsky2017towards,debortoli2022manifold}.

\providecommand{\norm}[1]{\left\lVert #1 \right\rVert}

\section{Asymptotic preservation and uniform accuracy}\label{sec:ap}

Fix a requested terminal floor $\varepsilon$ and a numerical resolution $h$,
defined as the maximum ordinary step in the chosen time variable.
Starting from $p_{\smax}$, the continuous dynamics in \eqref{eq:sde} have
endpoint law $p_\varepsilon$, and $p_\varepsilon\to p_0$ in $W_2$ as
$\varepsilon\to0$.

A numerical sampler specification consists of an update rule, a time
parametrization and grid rule, and a terminal rule. For a requested floor $\varepsilon$ and
resolution $h$, it starts from
$x^{(0)}\sim p_{\smax}$ and produces $x^{(0)},\ldots,x^{(N)}$. We denote the
distribution of its endpoint by $q_{h,\varepsilon}$, so that
$x^{(N)}\sim q_{h,\varepsilon}$; the run uses $N(h,\varepsilon)$ updates. Varying $h$ and
$\varepsilon$ while keeping the specification fixed gives the two-parameter
endpoint-law family $\{q_{h,\varepsilon}\}_{h,\varepsilon>0}$. Parenthesized superscripts
index numerical iterations; subscripts remain reserved for noise levels and
physical coordinates. At every fixed $\varepsilon>0$ we assume ordinary consistency,
$\Wtwo(q_{h,\varepsilon},p_\varepsilon)\to0$ as $h\to0$.

\begin{definition}[Asymptotic preservation]\label{def:apua}
A fixed sampler specification is AP with respect to $\varepsilon\to0$ when its
two-parameter family of runs satisfies the following four conditions. We write
$q_{h,\varepsilon}$ for the endpoint law of each run.
\begin{enumerate}[label=\textup{(\roman*)},leftmargin=2.1em,itemsep=2pt,topsep=3pt]
\item \emph{Uniform stability.} There exist $h_0,C>0$, independent of
$\varepsilon$, such that
\[
\sup_{0<\varepsilon\le\varepsilon_0}\sup_{0<h\le h_0}
\sup_{0\le n\le N(h,\varepsilon)}\ExpE\norm{x^{(n)}}^2\le C.
\]
The constant $C$ may depend on the model, $p_{\smax}$, and $\smax$ but is
uniform in $\varepsilon,n,N$; in particular, $h_0$ is floor-independent.
\item \emph{Floor-independent step count.} At every fixed $0<h\le h_0$,
$\sup_{0<\varepsilon\le\varepsilon_0}N(h,\varepsilon)<\infty$.
\item \emph{Discrete terminal limit.} At each fixed $h$, an endpoint law
$q_{h,0}$ exists and $\Wtwo(q_{h,\varepsilon},q_{h,0})\to0$ as $\varepsilon\to0$.
\item \emph{Consistency of the limit scheme.}
$\Wtwo(q_{h,0},p_0)\to0$ as $h\to0$.
\end{enumerate}
The last two conditions make the endpoint limits commute,
\begin{equation}
\lim_{h\to0}\,\lim_{\varepsilon\to0}\,q_{h,\varepsilon}
\;=\;p_0\;=\;
\lim_{\varepsilon\to0}\,\lim_{h\to0}\,q_{h,\varepsilon} .
\label{eq:commute}
\end{equation}
\end{definition}

For the fixed-stage specifications considered here, each update uses a
floor-independent number of denoiser or velocity evaluations. The step-count
condition therefore bounds the total number of neural-field evaluations up to
a method-dependent constant.

AP is a qualitative statement about the zero-floor limit at fixed numerical
resolution. Uniform accuracy additionally quantifies how the endpoint-law error
decays with $h$, uniformly over the requested floor.

\begin{definition}[Uniform accuracy]\label{def:acc}
For the same sampler specification at resolution $h$:
\begin{enumerate}[label=\textup{(\roman*)},leftmargin=2.1em,itemsep=2pt,topsep=3pt]
\item The \emph{limit scheme} has order $p$ if
$\Wtwo(q_{h,0},p_0)\le K_0h^p$.
\item The endpoint-law family is \emph{uniformly accurate} of order $p$ if
\[
\sup_{0<\varepsilon\le\varepsilon_0}
\Wtwo(q_{h,\varepsilon},p_\varepsilon)\le Kh^p,
\]
with $K$ independent of $\varepsilon$. The resolution $h$ controls the ordinary
steps on the base-solver interval; a terminal map is part of the specification.
\end{enumerate}
The constants are model-dependent but floor-independent. For an AP
specification, this bound passes to the limit scheme as $\varepsilon\to0$ and also
controls every intermediate floor.
\end{definition}

\section{Fitted terminal maps}\label{sec:fitted}

The floor-independent step-count condition forces some update to cross an
unresolved terminal interval. Suppose a base solver is stable and consistent
on $[a,\smax]$, but the requested floor satisfies
$\varepsilon<a$. We construct one additional update that reaches any lower
floor while transferring the stability and accuracy established at $a$.
The update is \emph{fitted}: its coefficients reproduce the analytic
asymptotic mode of the terminal equation \citep{roos2008robust}.

\paragraph{The normal-mode equation selects the map.}
On the normal-mode test equation, $D=0$ and $\eta=x/\sigma$ is conserved, so exact
transport from $a$ to $\varepsilon$ maps $x$ to $(\varepsilon/a)x$. For a general denoiser,
the decomposition \eqref{eq:micromacro} at scale $a$ is
$x=D(x,a)+a\eta(x,a)$. Preserving the denoised component while holding
$\eta(x,a)$ fixed as the scale changes from $a$ to $\varepsilon$ gives
\begin{equation}
T_{a\to\varepsilon}(x)
:=D(x,a)+\varepsilon\eta(x,a)
=\tfrac{\varepsilon}{a}x+\big(1-\tfrac{\varepsilon}{a}\big)D(x,a),
\qquad T_{a\to0}=D(\cdot,a).
\label{eq:fitstep}
\end{equation}
We call this map \emph{fitted to the normal-mode equation}: on that equation it
reproduces the analytic map for every $0\le\varepsilon\le a$ without resolving the
intervening scales.
Because $T_{a\to\varepsilon}(x)=x+(\varepsilon-a)\eta(x,a)$, this is one explicit
$\sigma$-clock Euler step. Its affine form is exactly the deterministic
denoising diffusion implicit model (DDIM) update in VE/EDM
coordinates \citep{song2021ddim}, and the constant-field exponential-integrator
step underlying \citet{lu2022dpmsolver,zhang2023deis}. The normal-mode equation motivates the
map for a general exact posterior-mean denoiser.

For a probability law $q$, write $T_\#q$ for the distribution of $T(X)$
when $X\sim q$.

\paragraph{Completing a base solver.}
Fix a resolution $h$, choose a floor-independent terminal switching scale
$a=a(h)$,
and denote the base solver's output law at scale $\sigma\in[a,\smax]$ by
$q_{h,\sigma}^{\mathrm{base}}$. The composite sampler is
\[
\smax\xrightarrow{\text{base solver}}a
\xrightarrow{\ T_{a\to\varepsilon}\ }\varepsilon,
\qquad
q_{h,\varepsilon}:=
\begin{cases}
q_{h,\varepsilon}^{\mathrm{base}}, & a\le\varepsilon\le\smax,\\
(T_{a\to\varepsilon})_\#q_{h,a}^{\mathrm{base}}, & 0\le\varepsilon<a.
\end{cases}
\]
For $\varepsilon\ge a$, only the base solver is used. For $\varepsilon<a$,
Proposition~\ref{prop:terminal-transfer} applies with
$q=q_{h,a}^{\mathrm{base}}$.

\begin{theorem}[Fitted terminal-map completion]
\label{thm:fittedap}
Assume the following.

\emph{(H1)} Initialized from $p_{\smax}$, the base solver is
uniformly stable on $[a,\smax]$, with a bound independent of $h$ and the
chosen switching scale $a$. At every fixed $\sigma>0$, it is consistent:
$\Wtwo(q_{h,\sigma}^{\mathrm{base}},p_\sigma)\to0$ as $h\to0$.

\emph{(H2)} As $h\to0$, $a\downarrow0$; the exact
denoiser $D(\cdot,a)$ is globally $L_a$-Lipschitz; and
$L_a\Wtwo(q_{h,a}^{\mathrm{base}},p_a)\to0$.

\emph{(H3)} For $\varepsilon\ge a$, stopping the base solver at $\varepsilon$
uses no more updates than stopping it at $a$.

Then the composite sampler specification is AP, with
\[
q_{h,\varepsilon}\xrightarrow[\varepsilon\to0]{W_2}
q_{h,0}:=\big(D(\cdot,a)\big)_\#q_{h,a}^{\mathrm{base}},
\qquad
N(h,\varepsilon)\le N_{\mathrm{base}}(h,a)+1.
\]
Here $N_{\mathrm{base}}(h,a)$ is the number of base-solver updates needed to
reach $a$.
It also satisfies
\begin{equation}
\sup_{0<\varepsilon\le\smax}\Wtwo\big(q_{h,\varepsilon},p_\varepsilon\big)
\le\max\left\{
\sup_{\sigma\in[a,\smax]}\Wtwo(q_{h,\sigma}^{\mathrm{base}},p_\sigma),
L_a\Wtwo(q_{h,a}^{\mathrm{base}},p_a)+a\sqrt{\damb}\right\}.
\label{eq:uatransfer}
\end{equation}
The composite specification is therefore UA of order $p$ whenever the
right-hand side is $O(h^p)$. Proofs are in Appendix~\ref{app:fitted-proofs}.
\end{theorem}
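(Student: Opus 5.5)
The central estimate is a one-step transfer bound for the fitted map. I would prove it first, in the form of Proposition~\ref{prop:terminal-transfer} referenced in the construction: for any law $q$ with finite second moment and any $0\le\varepsilon<a$,
\[
\Wtwo\big((T_{a\to\varepsilon})_\#q,\,p_\varepsilon\big)\le L_a\Wtwo(q,p_a)+(a-\varepsilon)\sqrt{\damb}.
\]
The proof splits by the triangle inequality, inserting $(T_{a\to\varepsilon})_\#p_a$.

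\emph{Propagation term.} Write $T_{a\to\varepsilon}=\tfrac{\varepsilon}{a}\mathrm{id}+(1-\tfrac{\varepsilon}{a})D(\cdot,a)$. Because the denoiser is a posterior mean, its Jacobian is a posterior covariance scaled by $a^{-2}$ and is positive semidefinite. I would therefore argue $L_a\ge1$, or note that we may replace $L_a$ by $\max\{L_a,1\}$ harmlessly; the paper's statement implicitly takes $L_a\ge1$ in the manifold regime. With that, $T_{a\to\varepsilon}$ is $L_a$-Lipschitz as a convex combination. Pushing an optimal coupling of $(q,p_a)$ through $T$ then gives $\Wtwo(T_\#q,T_\#p_a)\le L_a\Wtwo(q,p_a)$.

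\emph{Consistency term.} Couple $T_\#p_a$ and $p_\varepsilon$ through the same pair $(X_0,Z)$. Take $X_a=X_0+aZ$, so that $T(X_a)=D(X_a,a)+\varepsilon\eta(X_a,a)$. Compare this with $X_0+\varepsilon Z\sim p_\varepsilon$. Using \eqref{eq:micromacro}, $X_a-aZ=X_0$, and the difference becomes
\[
T(X_a)-(X_0+\varepsilon Z)=X_a-a\eta-X_0-\varepsilon Z+\varepsilon\eta=(a-\varepsilon)(Z-\eta(X_a,a)).
\]
Since $\eta(X_a,a)=\ExpE[Z\mid X_a]$, we get $\ExpE\|Z-\eta\|^2\le\ExpE\|Z\|^2=\damb$, which yields $(a-\varepsilon)\sqrt{\damb}\le a\sqrt{\damb}$. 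This step is clean and is the heart of the proposition.

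\emph{AP conditions.} I would then check the four conditions in turn.
\begin{enumerate}[label=\textup{(\roman*)},leftmargin=2.1em,itemsep=2pt,topsep=3pt]
\item Stability on $[a,\smax]$ is (H1). For the final step, $\ExpE\|T(x)\|^2\le 2\ExpE\|D(x,a)\|^2+2\varepsilon^2\ExpE\|\eta\|^2$. I would bound this by comparison with $p_a$: $\|D(x,a)\|\le\|D(y,a)\|+L_a\|x-y\|$ under an optimal coupling, with (H2) making $L_a\Wtwo$ bounded and $\ExpE\|D(Y,a)\|^2\le\ExpE\|X_0\|^2$. The $\eta$ term is handled the same way. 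More simply, the transfer bound already gives $\Wtwo(q_{h,\varepsilon},p_\varepsilon)$ bounded, and the second moments of $p_\varepsilon$ are bounded by those of $p_0$ plus $\smax^2\damb$.
\item The step count is $N_{\mathrm{base}}(h,a)+1$, which is floor-independent by (H3) and because $a$ depends only on $h$.
\item For the discrete terminal limit, $T_{a\to\varepsilon}(x)-D(x,a)=\varepsilon\eta(x,a)$. Hence $\Wtwo(q_{h,\varepsilon},q_{h,0})\le\varepsilon(\ExpE_{q^{\mathrm{base}}_{h,a}}\|\eta\|^2)^{1/2}$. This is finite: $\eta=(x-D)/a$ has finite second moment because $D(\cdot,a)$ is Lipschitz and $q^{\mathrm{base}}_{h,a}$ has finite second moment. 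So the distance tends to $0$ as $\varepsilon\to0$.
\item For consistency of the limit scheme, the transfer bound at $\varepsilon=0$ gives $\Wtwo(q_{h,0},p_0)\le L_a\Wtwo(q^{\mathrm{base}}_{h,a},p_a)+a\sqrt{\damb}\to0$ by (H2).
\end{enumerate}

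\emph{Uniform bound and main obstacle.} The uniform bound \eqref{eq:uatransfer} follows by splitting $\varepsilon\ge a$, where only the base solver is used, from $\varepsilon<a$, where the transfer bound applies. The main obstacle I expect is the Lipschitz constant of $T$ itself. This requires the positive-semidefiniteness argument for the convex combination, or else the convention $L_a\ge1$. If that fails, I would fall back on bounding by $\max\{1,L_a\}$.
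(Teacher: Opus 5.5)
Your architecture matches the paper's. You use the same shared-$(X_0,Z)$ coupling for the true-input term: your identity $T(X_a)-X_\varepsilon=(a-\varepsilon)(Z-\eta(X_a,a))$ is the paper's $-(1-\varepsilon/a)(X_0-D(Y,a))$ rewritten, and $\ExpE\|Z-\ExpE[Z\mid X_a]\|^2\le\damb$ is the same posterior-mean optimality. The identity coupling for the discrete terminal limit and the $\varepsilon=0$ consistency estimate also match.

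The gap is in the propagation term. A positive semidefinite Jacobian gives no lower bound on $L_a$, and $L_a\ge1$ is false in general. On the paper's own Gaussian benchmark, $D(x,a)=C(C+a^2I)^{-1}x$, so $L_a=\max_i c_i/(c_i+a^2)<1$; for a point mass, $L_a=0$. In that regime your transfer inequality $\Wtwo((T_{a\to\varepsilon})_\#q,p_\varepsilon)\le L_a\Wtwo(q,p_a)+(a-\varepsilon)\sqrt{\damb}$ is actually false. As $\varepsilon\uparrow a$ the map tends to the identity, so the left side tends to $\Wtwo(q,p_a)$ while the right side tends to $L_a\Wtwo(q,p_a)$. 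Your fallback $\max\{1,L_a\}$ restores validity, but for $\varepsilon<a$ it only gives $\Wtwo(q,p_a)+a\sqrt{\damb}$. That quantity is not bounded by $\max\{\Wtwo(q,p_a),L_a\Wtwo(q,p_a)+a\sqrt{\damb}\}$, so \eqref{eq:uatransfer} as stated is not proved when $L_a<1$. The AP conditions and the order-$p$ conclusion do survive. At $\varepsilon=0$ the map is $D(\cdot,a)$ itself, so your step (iv) correctly uses $L_a$. The weaker bound is also still $O(h^p)$ whenever the stated right-hand side is.

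The repair needs no spectral argument, so the "main obstacle" you anticipated is not an obstacle. With $r=\varepsilon/a$, the triangle inequality shows that $r\,\mathrm{id}+(1-r)D(\cdot,a)$ is $[r+(1-r)L_a]$-Lipschitz. Keep the factor $(1-r)a\sqrt{\damb}$ in the true-input term rather than coarsening it to $a\sqrt{\damb}$. The total bound is then $r\,\Wtwo(q,p_a)+(1-r)\bigl[L_a\Wtwo(q,p_a)+a\sqrt{\damb}\bigr]$. This is a convex combination of its values at $r=1$ and $r=0$, so it is at most their maximum. Combined with the base-solver supremum for $\varepsilon\ge a$, this is exactly how the paper obtains \eqref{eq:uatransfer}.
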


The theorem isolates the interaction between the two stages. The base solver
enters only through its error at the switching scale,
$\Wtwo(q_{h,a}^{\mathrm{base}},p_a)$, amplified by the Lipschitz factor $L_a$;
the fitted map adds the terminal term $a\sqrt{\damb}$. This general term is
sufficient for AP but is not sharp enough to determine UA orders.

On a smooth closed manifold, the universal $O(a)$ posterior-variance term
sharpens to a finite-floor completion error that vanishes when
$\varepsilon=a$.

\begin{theorem}[Finite-floor smooth-manifold completion]
\label{thm:manifold-finite-floor}
Let $\mathcal M\subset\mathbb R^{\damb}$ be a compact, boundaryless, embedded
$C^4$ manifold of positive reach and positive dimension, and let
$p_0=f\,\mathrm{vol}_{\mathcal M}$ for a strictly positive
$f\in C^2(\mathcal M)$. There are constants $a_0,C>0$ such that, for every
$0\le\varepsilon\le a\le a_0$,
\begin{equation}
\Wtwo\big((T_{a\to\varepsilon})_\#p_a,p_\varepsilon\big)
\le C(a^2-\varepsilon^2).
\label{eq:manifold-finite-floor}
\end{equation}
If $T_{a\to\varepsilon}$ is globally $K_{a,\varepsilon}$-Lipschitz, then every
input law $q$ with finite second moment also satisfies
\[
\Wtwo\big((T_{a\to\varepsilon})_\#q,p_\varepsilon\big)
\le K_{a,\varepsilon}\Wtwo(q,p_a)+C(a^2-\varepsilon^2).
\]
The proof is in Appendix~\ref{app:quadratic-closure}.
\end{theorem}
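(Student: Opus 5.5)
We will bound $W_2$ through an explicit coupling. Couple $X_a=X_0+aZ$ with $X_\varepsilon=X_0+\varepsilon Z$, using the same $(X_0,Z)$. This coupling preserves the marginals $p_a$ and $p_\varepsilon$, so
\[
\Wtwo\big((T_{a\to\varepsilon})_\#p_a,p_\varepsilon\big)^2\le \ExpE\norm{T_{a\to\varepsilon}(X_a)-X_\varepsilon}^2 .
\]
Expanding the map, $T_{a\to\varepsilon}(X_a)-X_\varepsilon=(1-\tfrac{\varepsilon}{a})\big(D(X_a,a)-X_0\big)$. This only gives $(1-\varepsilon/a)\sqrt{\mathrm{mmse}(a)}=O(a-\varepsilon)$, which matches the $a\sqrt{\damb}$ term of Theorem~\ref{thm:fittedap}. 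The synchronous coupling is therefore too crude, and the gain to $a^2-\varepsilon^2=(a-\varepsilon)(a+\varepsilon)$ must come from a better coupling.

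Instead, I compare both laws to the PF-ODE flow. Let $\Phi_{a\to\varepsilon}$ be the exact PF-ODE transport \eqref{eq:pfode}; it pushes $p_a$ to $p_\varepsilon$. Then
\[
\Wtwo\le \big(\ExpE\norm{T_{a\to\varepsilon}(X)-\Phi_{a\to\varepsilon}(X)}^2\big)^{1/2},\qquad X\sim p_a .
\]
Since $T_{a\to\varepsilon}(x)=x+(\varepsilon-a)\eta(x,a)$ is one Euler step of $x'=\eta(x,\sigma)$, we have
\[
T(x)-\Phi(x)=\int_\varepsilon^a\big(\eta(x(\sigma),\sigma)-\eta(x,a)\big)\,\dd\sigma ,
\]
with $x(\sigma)$ the exact trajectory. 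Differentiating along trajectories, $\tfrac{\dd}{\dd\sigma}\eta(x(\sigma),\sigma)=-\tfrac1\sigma\tfrac{\dd}{\dd\sigma}D(x(\sigma),\sigma)$. The key structural step is that the normal-mode contribution of $\eta$ is exactly conserved: in the linear model $D$ has zero normal component, and $\eta_\perp$ stays constant. Near a curved manifold, I expect $\tfrac{\dd}{\dd\sigma}D(x(\sigma),\sigma)=O(\sigma)$ in $L^2(p_\sigma)$. This is the same statement as $D(x(\sigma),\sigma)=\pi(x(\sigma))+O(\sigma^2)$ with a smooth $\sigma^2$-coefficient, where $\pi$ is the nearest-point projection. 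If that holds, then $\tfrac{\dd}{\dd\sigma}\eta=O(1)$, so $\eta(x(\sigma),\sigma)-\eta(x,a)=O(a-\sigma)$. Integrating gives only $O((a-\varepsilon)^2)$, and that is not $\le C(a^2-\varepsilon^2)$ uniformly when $\varepsilon=0$; the target is $a^2$, and indeed $(a-\varepsilon)^2\le a^2-\varepsilon^2$ for $0\le\varepsilon\le a$. So either route suffices, and the bound $(a-\varepsilon)^2\le (a-\varepsilon)(a+\varepsilon)$ closes the first claim.

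The main obstacle is the small-noise expansion. One needs $D(x,\sigma)=\pi(x)+\sigma^2 G(x)+O(\sigma^3)$ uniformly on the tubular neighbourhood $\{d(x,\mathcal M)\le r\}$, with $\nabla_x$ and $\partial_\sigma$ derivatives controlled. Outside the tube, Gaussian tails of mass $e^{-c/\sigma^2}$ handle the remainder. I would obtain the expansion by a Laplace expansion of the posterior $\propto f(y)e^{-\|x-y\|^2/2\sigma^2}\,\mathrm{vol}(dy)$ in normal coordinates around $\pi(x)$. This is where $C^4$ regularity of $\mathcal M$, $C^2$ regularity of $f$, positive reach, and compactness give uniform constants. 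For trajectories, $x(\sigma)$ stays in the tube with high probability because $\ExpE\|\eta\|^2\le\damb$ gives $d(x(\sigma),\mathcal M)=O(\sigma)$. Combining the expansion with the trajectory equation $x'=(x-D)/\sigma$ yields the needed $L^2$ bound on $\tfrac{\dd}{\dd\sigma}\eta$. The bound must be uniform in $\sigma\le a_0$, and the derivative must be handled carefully where $\pi$ is taken along the trajectory.

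The second claim is routine. By the triangle inequality,
\[
\Wtwo(T_\#q,p_\varepsilon)\le \Wtwo(T_\#q,T_\#p_a)+\Wtwo(T_\#p_a,p_\varepsilon).
\]
Pushing an optimal coupling of $(q,p_a)$ through the $K_{a,\varepsilon}$-Lipschitz map $T$ bounds the first term by $K_{a,\varepsilon}\Wtwo(q,p_a)$. The first claim bounds the second term.
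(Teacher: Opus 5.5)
Your route differs from the paper's, and I think it can be completed, but the one step that carries all the content is only conjectured. Coupling through the exact probability-flow map and treating $T_{a\to\varepsilon}$ as one $\sigma$-Euler step is valid. Minkowski's inequality then reduces the claim to $\sup_{0<\sigma\le a_0}\|x''(\sigma)\|_{L^2}<\infty$, i.e.\ $\|\tfrac{\dd}{\dd\sigma}D(x(\sigma),\sigma)\|_{L^2(p_\sigma)}\le C\sigma$. Since $x(\sigma)\sim p_\sigma$, only marginal bounds are needed, not a pathwise tube argument.

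\textbf{The missing lemma.} That derivative bound is not ``the same statement'' as $D=\operatorname{proj}_{\mathcal M}+O(\sigma^2)$. The zeroth-order bound, which is all the paper proves in \eqref{eq:posterior-projection}, gives no control of the material derivative. In your expansion route two ingredients are missing.
\begin{enumerate}
\item The normal velocity $\eta\approx n/\sigma$ is of order one. So $\tfrac{\dd}{\dd\sigma}\operatorname{proj}_{\mathcal M}(x(\sigma))=O(\sigma)$ holds only because $\nabla\operatorname{proj}_{\mathcal M}(x)\,(x-\operatorname{proj}_{\mathcal M}(x))=0$. Without this cancellation you get $x''=O(\sigma^{-1})$, which integrates only to the trivial $O(a)$.
\item You would also need $x$- and $\sigma$-derivative bounds on a Laplace remainder, which is the hard part.
\end{enumerate}

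\textbf{A cleaner closure.} From $\nabla_xD=\Sigma/\sigma^2$ and $\partial_\sigma D=\sigma^{-3}\operatorname{Cov}(X_0,\|x-X_0\|^2\mid X_\sigma=x)$, where $\Sigma=\operatorname{Cov}(X_0\mid X_\sigma=x)$, one gets the exact identity
\[
x''(\sigma)=\sigma^{-3}\,\Sigma\,\eta-\sigma^{-4}\,\mathbb{E}\big[(X_0-D)\|X_0-D\|^2\,\big|\,X_\sigma=x\big].
\]
The first term is $O(\sigma^3)$ times $\sigma^{-3}$ using only the paper's zeroth-order estimates and Cauchy--Schwarz. This is because $\Sigma n=\operatorname{Cov}(X_0,\langle X_0,n\rangle\mid x)$ sees only the $O(\sigma^2\|n\|)$ normal fluctuation. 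The real work is the tangential third central moment. It must be $O(\sigma^4)$, whereas Cauchy--Schwarz gives only $O(\sigma^3)$. Getting it needs a quantitative, rate-$O(\sigma)$ version of the rescaled-posterior Gaussian limit used in the lower-bound part of Proposition~\ref{prop:manifold-quadratic}, together with polynomial bounds off the tube.

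\textbf{Comparison with the paper.} The paper never differentiates $D$. It combines the zeroth-order estimate with the tubular expansion to write $T_{a\to\varepsilon}(Y)=X_0+aP_{X_0}Z+\varepsilon P_{X_0}^{\perp}Z+O_{L^2}(\delta^2)$, with $\delta^2=a^2-\varepsilon^2$. It then uses $aPZ+\varepsilon P^\perp Z\stackrel{d}{=}\delta PZ_1+\varepsilon Z_0$ to isolate an excess tangential smear of variance exactly $\delta^2$. That smear is coupled to intrinsic Brownian motion and removed via $\Wtwo(H_tp_0,p_0)\le Ct$ and nonexpansiveness of Gaussian convolution. This is why $a^2-\varepsilon^2$ appears naturally, and why nothing beyond the estimates for the zero-floor rate is needed.

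Your trajectory route, once the lemma is proved, gives the sharper bound $C(a-\varepsilon)^2$. Your parenthetical worry about $\varepsilon=0$ is unfounded: $(a-\varepsilon)^2\le a^2-\varepsilon^2$ for all $0\le\varepsilon\le a$. The sharper bound matches the paper's Gaussian computation $s_{\mathrm{exact}}-s_{\mathrm{fit}}\le(a-\varepsilon)^2/(2\sqrt c)$. It also verifies the $p=2$ trajectory hypothesis of Theorem~\ref{thm:fitted-order-recovery}. The price is a derivative-level posterior analysis that the law-level argument avoids. Your Lipschitz-input step matches the paper.
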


At zero floor the exponent is sharp:
$\Wtwo((D(\cdot,a))_\#p_a,p_0)=\Theta(a^2)$ under the same assumptions
(Proposition~\ref{prop:manifold-quadratic}). Thus quadratic order is the actual
smooth closed-manifold rate, not only a Gaussian upper bound.

\paragraph{Recovering higher uniform order.}

The quadratic barrier comes from an identity that also shows how to remove it.
For a fixed state at scale $a$, let $x(s)$ denote the exact probability-flow
trajectory on $0\le s\le a$, with $x(a)=x$. Along this trajectory,
\begin{equation}
D(x(s),s)=x(s)-s x'(s).
\label{eq:terminal-identity}
\end{equation}
Indeed, this is just the probability-flow equation rearranged. If
$x(\cdot)\in C^p([0,a])$, Taylor expansion of \eqref{eq:terminal-identity}
gives
\begin{equation}
D(x(s),s)
=x(0)+\sum_{j=2}^{p-1}\frac{1-j}{j!}x^{(j)}(0)s^j+O(s^p).
\label{eq:terminal-denoiser-expansion}
\end{equation}
The linear term vanishes identically. This is the structural fact behind the
usual quadratic terminal map and the higher-order construction below.

We call the concrete third- and fourth-order constructions FE3 and FE4. For
FE3, let $D_a=D(x,a)$ and
$\eta_a=(x-D_a)/a$. One explicit-midpoint transport to $a/2$ is
\[
\widehat x_{1/2}
=x-\frac a2\eta\!\left(x-\frac a4\eta_a,\frac{3a}{4}\right),
\qquad D_{1/2}:=D(\widehat x_{1/2},a/2).
\]
The three-evaluation zero-floor prediction and its finite-floor Hermite
extension are
\begin{align}
z_0^{[3]}&=-\frac13D_a+\frac43D_{1/2},
\label{eq:fe3-zero}\\
T_{a\to\varepsilon}^{[3]}(x)
&=z_0^{[3]}+\frac{\varepsilon}{a}\big(x+D_a-2z_0^{[3]}\big)
+\left(\frac{\varepsilon}{a}\right)^2\big(z_0^{[3]}-D_a\big).
\label{eq:fe3-hermite}
\end{align}

\begin{theorem}[Higher-order fitted terminal maps]
\label{thm:fitted-order-recovery}
\begin{enumerate}[label=\textup{(\roman*)},leftmargin=2.1em,itemsep=3pt,topsep=3pt]
\item Let $p\ge2$. For any $p-1$ distinct positive fractions of the switching
scale, there is a unique weighted sum of denoiser predictions that cancels the
terms of orders $2,\ldots,p-1$ in
\eqref{eq:terminal-denoiser-expansion}. If the terminal trajectory has a
uniformly bounded $p$th derivative, the transported stage states are
$O(a^p)$-accurate, and the denoiser is uniformly Lipschitz between exact and
numerical stage states, then this sum approximates $x(0)$ with error $O(a^p)$.
No universal linear construction using fewer than $p-1$ positive-scale
predictions can achieve this cancellation. Square-integrable versions of the
same bounds imply $W_2$ error $O(a^p)$ for exact input $X_a\sim p_a$.

\item Suppose the FE3 endpoint satisfies
$\|z_0^{[3]}-x(0)\|\le C_0a^3$ and
$\sup_{0\le s\le a}\|x^{(3)}(s)\|\le M_3$. Then
\begin{equation}
\sup_{0\le\varepsilon\le a}
\big\|T_{a\to\varepsilon}^{[3]}(x)-x(\varepsilon)\big\|
\le\left(C_0+\frac{2M_3}{81}\right)a^3.
\label{eq:fe3-uniform-bound}
\end{equation}
The map returns $x$ at $\varepsilon=a$ and is exact for every pure normal
trajectory $x(\sigma)=C\sigma$. If the pointwise hypotheses have
square-integrable envelopes and the FE3 map is Lipschitz, the same coupling
transfers this estimate to an arbitrary input law. In particular, $a=O(h)$ and
a Lipschitz-amplified base error of $O(h^3)$ give floor-uniform endpoint error
$O(h^3)$.
\end{enumerate}
The weights, explicit constants, law-level bound, and minimum-node proof are in
Appendix~\ref{app:high-order-terminal}.
\end{theorem}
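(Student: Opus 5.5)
Part (i) reduces to a Vandermonde moment problem, and part (ii) to comparing the Hermite polynomial in $\varepsilon/a$ with the exact trajectory $x(\varepsilon)$.

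\textbf{Part (i).} Fix distinct fractions $c_1,\dots,c_{p-1}\in(0,1]$ and seek weights $w_i$ so that $\sum_i w_i D(x(c_ia),c_ia)$ reproduces $x(0)$ through order $p-1$. Substituting \eqref{eq:terminal-denoiser-expansion}, the conditions are
\[
\sum_i w_i=1,\qquad \sum_i w_i c_i^j=0\quad (j=2,\dots,p-1).
\]
This is $p-1$ linear equations in $p-1$ unknowns. Its matrix has rows $(c_i^0,c_i^2,\dots,c_i^{p-1})$, a Vandermonde matrix with the linear row deleted. I will show it is nonsingular: if a polynomial $\alpha_0+\sum_{j\ge2}\alpha_j t^j$ vanished at $p-1$ distinct positive points, then by Descartes' rule of signs (or a Rolle argument applied to $t^{-1}$ times the derivative) it would have at most $p-2$ positive roots, a contradiction.

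The error then splits into three pieces:
\begin{enumerate}
\item the Taylor remainder $O(a^p)$, using the bounded $p$th derivative;
\item the stage transport error, which the Lipschitz denoiser turns into $L\cdot O(a^p)$;
\item the weights, which are bounded because the $c_i$ are fixed, so all constants are uniform in $a$.
\end{enumerate}

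For minimality, suppose only $m<p-1$ predictions are used. The universal cancellation must hold for every choice of derivative vectors $x^{(j)}(0)$. It therefore needs the $m\times(p-1)$ system above, that is, $p-1$ independent conditions on $m$ unknowns. For $m\le p-2$ these are inconsistent: the vector $(1,0,\dots,0)$ cannot lie in the span of $m$ columns of the form $(1,c^2,\dots,c^{p-1})$. The reason is that a nonzero polynomial in the span $\{1,t^2,\dots,t^{p-1}\}$ can be chosen to vanish at all $m$ nodes while not annihilating the constant term. The law-level statement follows by taking $L^2$ norms of the pointwise bound under the synchronous coupling $X_a\sim p_a$, $x(0)\sim p_0$.

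\textbf{Part (ii).} I will first verify the interpolation identities for the polynomial $P(\varepsilon/a)$ defined by \eqref{eq:fe3-hermite}.
\begin{itemize}
\item At $t=0$ it gives $P(0)=z_0^{[3]}$.
\item At $t=1$ it gives $P(1)=z_0+x+D_a-2z_0+z_0-D_a=x$.
\item Its derivative gives $aP'(1)/a=x+D_a-2z_0+2z_0-2D_a=x-D_a=a\eta_a=a\,x'(a)$.
\end{itemize}
So $P$ is the quadratic Hermite interpolant with data $(0,z_0)$, value $x(a)$, and slope $x'(a)$.

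Let $Q$ be the same interpolant built with exact data $x(0)$. Then $P-Q=(z_0-x(0))(1-t)^2$, which is bounded by $C_0a^3$ uniformly in $t\in[0,1]$. The Hermite remainder for data at $t=0$ (value) and $t=1$ (value and slope) is
\[
Q(t)-x(at)=-\tfrac{a^3}{6}x^{(3)}(\xi)\,t(t-1)^2 .
\]
The factor $t(1-t)^2$ is maximized at $t=1/3$, where it equals $4/27$. This gives $\tfrac{M_3}{6}\cdot\tfrac{4}{27}a^3=\tfrac{2M_3}{81}a^3$. Since the remainder formula for vector-valued functions holds componentwise only, I would instead use the Peano-kernel integral form to keep the constant in norm.

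Exactness on $x=C\sigma$ follows because then $D_a=0$, $D_{1/2}=0$ (the midpoint transport is exact on linear $x$), and so $z_0=0$. Hence $P(t)=tx=x(at)$.

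For the law-level transfer, couple numerical input with exact input optimally and use the Lipschitz constant of the FE3 map. This yields base error times Lipschitz factor plus the exact-input $O(a^3)$ envelope. Choosing $a=O(h)$ then gives $O(h^3)$.

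The main obstacle is the unisolvence and minimality argument in (i), which must rule out all lacunary polynomial systems. The vector Hermite constant is routine by comparison.
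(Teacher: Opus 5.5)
Your proposal is correct and follows essentially the same route as the paper: the moment system made unisolvent by the Chebyshev property (your Descartes argument), the add-and-subtract split into a Taylor remainder plus Lipschitz stage error, and for (ii) the quadratic Hermite interpolant of $g(t)=x(at)$, perturbed by $(1-t)^2(z_0^{[3]}-x(0))$, with remainder bounded by $\tfrac{a^3M_3}{6}t(1-t)^2\le\tfrac{2M_3}{81}a^3$. The one step you assert without proof is in the minimality argument: you need a polynomial in $\mathrm{span}\{1,t^2,\dots,t^{p-1}\}$ that vanishes at the $m\le p-2$ nodes but has nonzero constant term, and $1-t^2R(t)$, with $R$ of degree at most $m-1$ interpolating the values $c_i^{-2}$, supplies it; this is exactly the paper's construction. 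Your Peano-kernel remark for the vector-valued Hermite remainder is a detail the paper passes over silently.
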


The two manifold results above operate at different regularity levels.
Theorem~\ref{thm:manifold-finite-floor} is a law-level quadratic statement,
whereas Theorem~\ref{thm:fitted-order-recovery} is trajectory-level and requires
uniform terminal derivatives together with accurate transported stages. The
sphere calculation below verifies these stronger hypotheses on a curved
singular family.

The endpoint hypothesis follows from
Theorem~\ref{thm:fitted-order-recovery} with $p=3$ when the midpoint transport
has local error $O(a^3)$. The finite-floor result then uses the exact endpoint
derivative $a x'(a)=x-D_a$; no additional denoiser evaluation is needed.

An analogous seven-evaluation construction, FE4, is fourth order at zero when
its third-order Runge--Kutta (RK3) stage transports satisfy
Theorem~\ref{thm:fitted-order-recovery}. Both
maps preserve the exact normal mode; FE3 additionally provides the proved
higher-order map at every nonzero floor. Appendix~\ref{app:high-order-terminal}
gives the FE4 formula and verifies both orders in closed form on a sphere.

Figure~\ref{fig:high-order-terminal} verifies the nonsymmetric case and the
finite-floor order. On the nonuniform circle, ordinary Heun completion remains
second order at the same three- and seven-evaluation costs. The fitted maps
have log--log slopes $3.24$ and $4.57$ over the tested range, consistent with
the proved third- and fourth-order rates; the Hermite map remains at least third
order at every tested floor ratio. Proofs and exact-model details are in
Appendix~\ref{app:high-order-terminal}.

\begin{figure}[!t]
\centering
\includegraphics[width=0.94\linewidth]{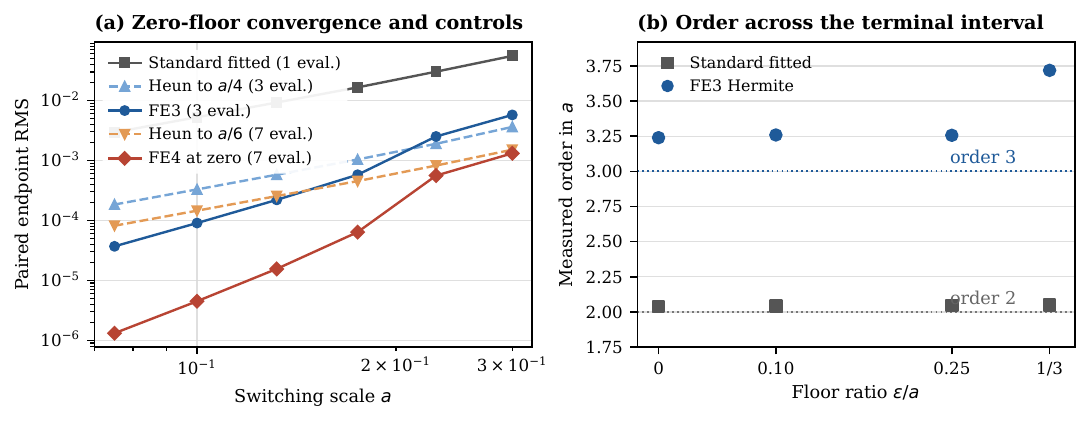}
\caption{Higher-order fitted terminal maps on the exact nonuniform-circle model.
(a) Paired zero-floor error against switching scale. FE3 and FE4 are compared
with ordinary positive-noise Heun completion at the same denoiser cost.
(b) Measured finite-floor order. The three-evaluation Hermite closure remains
third order across the unresolved terminal interval, while the standard fitted
map remains second order.}
\label{fig:high-order-terminal}
\end{figure}

For an order-$p$ base solver with a floor-uniform error estimate, balancing its
$O(h^p)$ error with the standard map's $O(a^2)$ error requires
$a=O(h^{p/2})$. The third-order FE3 map instead allows a third-order base
solver to use the natural scale $a=O(h)$.

Appendix~\ref{app:nfe-switching-proof} extends this balance to
switching-scale-dependent base estimates and fixed update budgets.

On a rank-deficient Gaussian target, the fitted map is exact in the normal
coordinates, nonexpansive in every coordinate, and has $O(a^2)$ true-input
error. The closed-form estimate is in Appendix~\ref{app:fitted-proofs} and is
used in the classification below.

\FloatBarrier

\section{Classification of existing samplers}
\label{sec:classification}

Table~\ref{tab:classification} classifies complete sampler specifications:
its columns mirror the three ingredients in Definition~\ref{def:apua}, followed
by the AP verdict and the Gaussian UA order. Changing the parametrization,
grid, or terminal rule can therefore change the verdict while leaving the
update rule unchanged. In particular, the first row applies to every update family
that is continued to $\varepsilon$ solely through bounded local log-noise
steps. A fitted terminal map is a sufficient completion mechanism, not a
necessary one: power-clock Euler and affine Rectified Flow reach the endpoint
on bounded native intervals without a separate fitted jump.

Because AP is problem-dependent, the table separates proved AP and UA results
on the rank-deficient Gaussian model of
Proposition~\ref{prop:gaussian-closure} from conditional results for the
remaining higher-order families. Theorem~\ref{thm:fittedap} supplies general
sufficient conditions for fitted specifications. The \emph{evolution
variable} is the independent variable in which the continuous sampler is
written: noise scale $\sigma$, log-noise time
$\lambda=\log(\smax/\sigma)$, power-noise time $\tau=\sigma^\gamma$, or
Rectified Flow's native time $t$. The next column specifies the grid and any
switching scale. For log-noise-controlled grids,
$\ell_n:=\log(\sigma_n/\sigma_{n+1})=\Delta\lambda_n$.

\begin{theorem}[Sampler AP and Gaussian UA classification]
\label{thm:sampler-classification}
On the rank-deficient Gaussian model, the specifications in
the first panel of Table~\ref{tab:classification} have the stated AP results
and UA orders. The second panel gives a conditional completion rule: if the
cited order-$p$ method has a floor-uniform error bound on $[a,\smax]$, then the
standard quadratic completion preserves that order under
$a=O(h^{p/2})$. FE3 instead preserves order three with the natural switching
scale $a=O(h)$ when the trajectory and stage-accuracy assumptions of
Theorem~\ref{thm:fitted-order-recovery} hold and the weighted input error
after Lipschitz amplification is $O(h^3)$. With the common switching scale
$a\asymp h$, the standard completion has guaranteed UA order $\min\{p,2\}$.
\end{theorem}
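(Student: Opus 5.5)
The plan is to reduce everything to the rank-deficient Gaussian model, where all fields are affine and each sampler is a linear recursion that can be diagonalized. Take $p_0=\mathcal N(\mu,\Sigma)$ with $\Sigma$ of rank $r<\damb$. In an eigenbasis of $\Sigma$ with eigenvalues $s_i\ge0$, the exact denoiser acts coordinatewise:
\[
D_i(x,\sigma)=\mu_i+\tfrac{s_i}{s_i+\sigma^2}(x_i-\mu_i),\qquad
\eta_i=\tfrac{\sigma}{s_i+\sigma^2}(x_i-\mu_i).
\]
Every update rule in the table then becomes a scalar multiplier $g_i(\sigma_n,\sigma_{n+1})$ per step and per coordinate. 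Since the input law $p_{\smax}$ is Gaussian and the maps are affine, every endpoint law is Gaussian and diagonal in the same basis. Its $W_2$ distance to $p_\varepsilon$ reduces to explicit means and variances, through the commuting-covariance formula $\Wtwo^2=\|m-m'\|^2+\sum_i(\sqrt{v_i}-\sqrt{v_i'})^2$. For the stochastic rows, the Euler--Maruyama variance recursion is again scalar per coordinate.

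I would first handle the rows continued by bounded log-noise steps. With $\ell_n\le h$, one gets $N\ge\log(\smax/\varepsilon)/h\to\infty$ as $\varepsilon\to0$, so condition (ii) of Definition~\ref{def:apua} fails and these rows are not AP. Next come the fitted-completion rows (DDIM/DPM-Solver-1/2/EDM/EM stopped at $a$ and followed by $T_{a\to\varepsilon}$). Here I would verify (H1)--(H3) of Theorem~\ref{thm:fittedap} coordinatewise and conclude AP:
\begin{enumerate}[label=\textup{(\alph*)},leftmargin=2.1em,itemsep=2pt,topsep=3pt]
\item The normal multipliers equal $\sigma_{n+1}/\sigma_n$ exactly for DDIM and the exponential integrators, and are bounded by $1$ for Heun-type schemes under the stated grids. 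This gives stability independent of $a$.
\item Tangential coordinates have $s_i>0$, hence smooth coefficients. There the method's classical order $p$ gives a base error of $O(h^p)$ uniformly on $[a,\smax]$.
\item For the Lipschitz factor, on this model $D(\cdot,a)$ has $L_a\le1$, so (H2) holds.
\end{enumerate}
For the UA order, I would not use the crude $a\sqrt{\damb}$ term of \eqref{eq:uatransfer}. Instead I would use the closed-form Gaussian estimate from Appendix~\ref{app:fitted-proofs}: the map is exact in the normal coordinates, nonexpansive in every coordinate, and has tangential true-input error $O(a^2)$. These combine, by the triangle inequality in $W_2$, into $\sup_\varepsilon\Wtwo(q_{h,\varepsilon},p_\varepsilon)\le C h^p+C a^2$. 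Taking $a\asymp h$ then gives order $\min\{p,2\}$, and $a=O(h^{p/2})$ gives order $p$. For the bounded-horizon rows (power clock $\tau=\sigma^\gamma$ and affine Rectified Flow), AP follows directly: the native interval is bounded, so the step count is floor-independent. The UA order comes from computing the local truncation error of the scalar recursion in the native variable. Its accuracy is governed by the smoothness of the coefficients in that variable near $\tau=0$ or $t=1$, and this is where different AP specifications yield different orders.

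The second panel is essentially a corollary. The $a=O(h^{p/2})$ claim is the same triangle-inequality balance, with the assumed floor-uniform order-$p$ bound used as hypothesis. The FE3 claim comes from applying Theorem~\ref{thm:fitted-order-recovery}(ii) with $a=O(h)$: its law-level transfer adds the Lipschitz-amplified input error, assumed to be $O(h^3)$, to the $O(a^3)=O(h^3)$ terminal term. The main obstacle is the first panel's UA orders for the stochastic and bounded-horizon rows. One must show the per-coordinate error constants remain bounded uniformly over $s_i\in[0,\max s_i]$ and $\varepsilon$, especially for $s_i\to0$ where the coefficient $s_i/(s_i+\sigma^2)$ transitions sharply at $\sigma\approx\sqrt{s_i}$. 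I would handle this by bounding the multiplier errors through derivatives of $\sigma\mapsto\sigma^2/(s+\sigma^2)$ in the native clock. These derivatives are uniformly bounded in $s$, which should give $s$-independent constants.
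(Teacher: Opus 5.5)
Your architecture matches the paper's. You diagonalize on the Gaussian, use the telescoping step-count bound for the first row, and, for each fitted row, combine a floor-uniform base estimate with the closure bound \eqref{eq:gaussian-closure}. That gives order $p$ for $a=O(h^{p/2})$ and $\min\{p,2\}$ for $a\asymp h$, and the FE3 row comes from \eqref{eq:fe3-law-transfer}. The gap is the first panel itself.

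Every fitted row needs a base error on $[a,\smax]$ that is $O(h^p)$ uniformly in $a$, even though the log-noise length $\log(\smax/a)$ of that interval diverges. Your step (b) only asserts this from ``classical order.'' A classical global bound in $\lambda$ with a uniform local-error constant gives just $h^p\log(\smax/a)$. Your proposed remedy does not close this:
\begin{itemize}
\item In the $\sigma$, $\tau=\sigma^\gamma$ and Rectified Flow clocks, the derivatives of $\sigma^2/(s+\sigma^2)$ are \emph{not} bounded uniformly in $s$. For example, $\partial_\sigma[\sigma^2/(s+\sigma^2)]=1/(2\sqrt s)$ at $\sigma=\sqrt s$.
\item In the $\lambda$ clock they are bounded, but the horizon dependence remains.
\item Uniformity as $s\to0$ is not what is needed anyway. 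The benchmark has fixed eigenvalues; what must be uniform is the dependence on $a$ and $\varepsilon$.
\end{itemize}
The paper gets this uniformity from explicit one-step ratios whose local defects carry weights integrable over the whole log-noise line: $u/(1+u)^2$ for $\sigma$-Euler, $y(1-y)\ell^3$ for DPM-Solver-2, $u^2/(1+u)^3$ for Heun, and $\sigma_n^2$ for the Euler--Maruyama tangential variance. For a fixed $c>0$ you could instead argue in $\sigma$-time on the bounded interval $[0,\smax]$, where the coefficient is smooth. That route has to be stated and checked method by method.

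Several rows also need ingredients your plan lacks.
\begin{itemize}
\item \textbf{Euler--Maruyama normal coordinate.} It is not exact. Its relative variance $R_n=V_n/\sigma_n^2$ must stay within $O(h)$ of $1$ across the diverging horizon. The paper obtains this from the contraction $|R_{n+1}-1|\le(1-c_0\ell_n)|R_n-1|+C\ell_n^2$, which is available only because $\beta>0$ is fixed. Your items (a)--(b) do not cover this coordinate, and a Gronwall-type estimate would again lose $\log(\smax/a)$.
\item \textbf{Heun normal coordinate.} You state only that the normal multiplier is at most $1$. You need exactness, which holds because both stage slopes equal the conserved $x/\sigma$.
\item \textbf{Power clock, $\gamma>1$.} The normal coefficient $\alpha/\tau$ is singular at $\tau=0$, so no local-truncation argument produces the order $1/\gamma$. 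The paper evaluates the Euler product $\prod_j\bigl(1-\alpha/(\rho+j)\bigr)$ in closed form through Gamma functions and bounds it by log-convexity. Stability comes from these multipliers lying in $[0,1]$, not from the bounded horizon alone.
\item \textbf{Second table row (fixed $N$, uniform $\lambda$-grid).} This is not a bounded-step row, so your step-count argument does not reach it. It fails uniform stability instead: $\ell=\log(\smax/\varepsilon)/N\to\infty$, and the normal multiplier $(1-\ell)^N$ makes the second moment diverge.
\item \textbf{Rectified Flow Runge--Kutta row.} This is a fitted row with switching scale $a_t=O(h^{p/2})$, not a direct bounded-horizon row. A stage may otherwise evaluate the singular normal field $x/t$ at the data endpoint, which is $t=0$, not $t=1$.
\end{itemize}
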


\FloatBarrier
\begin{table}[ht!]
\centering
\footnotesize
\setlength{\tabcolsep}{2.4pt}
\renewcommand{\arraystretch}{1.16}
\begin{tabular}{@{}L{0.18\linewidth}L{0.12\linewidth}L{0.22\linewidth}L{0.18\linewidth}L{0.06\linewidth}L{0.08\linewidth}@{}}
\toprule
Update rule or method & Evolution variable & Grid and switching scale & Terminal rule & AP & UA order \\
\midrule
\multicolumn{6}{@{}l}{\textbf{Proved on the rank-deficient Gaussian benchmark}}\\[2pt]
Any update rule
& Log-noise $\lambda$
& $\max_n\ell_n\le h$; continue to $\varepsilon$
& None; stop at $\varepsilon$
& no & -- \\
Explicit Euler
& Log-noise $\lambda$
& Fixed $N$; uniform grid to $\varepsilon$
& None; stop at $\varepsilon$
& no & -- \\
Explicit Euler
& Power noise $\tau=\sigma^\gamma$, $\gamma\ge1$
& Uniform grid to $\varepsilon$; $N=\lceil\smax^\gamma/h\rceil$
& Direct to $\varepsilon$
& yes & $1$ if $\gamma=1$; $1/\gamma$ otherwise \\
Frozen-$\eta$ Euler (DDIM, DPM-Solver-1, EDM Euler)
& Noise scale $\sigma$
& $\max_n\ell_n\le h$ on $[a,\smax]$; $a=O(h^{1/2})$
& Fitted $T_{a\to\varepsilon}$
& yes & $1$ \\
Midpoint exponential update (DPM-Solver-2)
& Log-noise $\lambda$
& $\max_n\ell_n\le h$ on $[a,\smax]$; $a=O(h)$
& Fitted $T_{a\to\varepsilon}$
& yes & $2$ \\
Heun predictor--corrector (EDM)
& Noise scale $\sigma$
& $\max_n\ell_n\le h$ on $[a,\smax]$; $a=O(h)$
& Fitted $\sigma$-Euler
& yes & $2$ \\
Euler--Maruyama, fixed $\beta>0$
& Log-noise $\lambda$
& $\max_n\ell_n\le h$ on $[a,\smax]$; $a=O(h^{1/2})$
& Fitted deterministic map
& yes & $1$ \\
Explicit Euler
& Rectified Flow time $t$
& $[0,t_{\max}]$; $\max_n|\Delta t_n|\le h$
& Direct to $t=0$
& yes & $1$ \\
Order-$p$ internally consistent Runge--Kutta
& Rectified Flow time $t$
& $[a_t,t_{\max}]$; switching scale $a_t=O(h^{p/2})$
& Fitted map to $t=0$
& yes & $p$ \\
\midrule
\multicolumn{6}{@{}l}{\textbf{Conditional fitted completion}}\\[2pt]
DPM-Solver-3, DEIS, PNDM, DPM-Solver++, or UniPC (order $p$)
& Method-specific
& Resolved interval corresponding to $[a,\smax]$; maximum native step $h$;
$a=O(h^{p/2})$
& Fitted $T_{a\to\varepsilon}$
& yes$^\ast$ & $p^\ast$ \\
Third-order base satisfying the FE3 transfer bound
& Method-specific
& Resolved interval corresponding to $[a,\smax]$; maximum native step $h$;
$a=O(h)$
& FE3 Hermite map \eqref{eq:fe3-hermite}
& yes$^\dagger$ & $3^\dagger$ \\
\bottomrule
\end{tabular}
\caption{Configuration-level exact-field classification. Each row fixes the
update rule, evolution variable, grid, and terminal rule before reporting
AP and its floor-uniform $W_2$ order on a fixed rank-deficient Gaussian
benchmark. The first row applies to every named update family under the stated
no-completion policy. Entries marked $^\ast$ are conditional on the
floor-uniform order-$p$ base estimate and the hypotheses of
Theorem~\ref{thm:fittedap}. The $^\dagger$ entry additionally assumes the
terminal trajectory, stage-accuracy, and weighted input-error conditions in
Theorem~\ref{thm:fitted-order-recovery}.}
\label{tab:classification}
\end{table}

\paragraph{Two ways AP fails.}
If every ordinary step has log-noise length
$\ell_n\le h$, telescoping the mesh gives
\[
\log(\smax/\varepsilon)=\sum_{n=0}^{N-1}\ell_n\le Nh.
\]
Without a terminal update that crosses the unresolved interval, the step-count
condition in Definition~\ref{def:apua} fails. This includes any
positive-cutoff protocol extended to smaller floors solely by adding bounded
log-noise steps. For plain $\lambda$-Euler, fixing $N$ produces
the other failure: the uniform step
$\ell=\log(\smax/\varepsilon)/N$ diverges, and its normal-mode multiplier
$(1-\ell)^N$ violates uniform stability. Bounded steps fail through a diverging
step count; stretched Euler steps fail through instability.

\begin{figure}[t]
\centering
\includegraphics[width=0.93\linewidth]{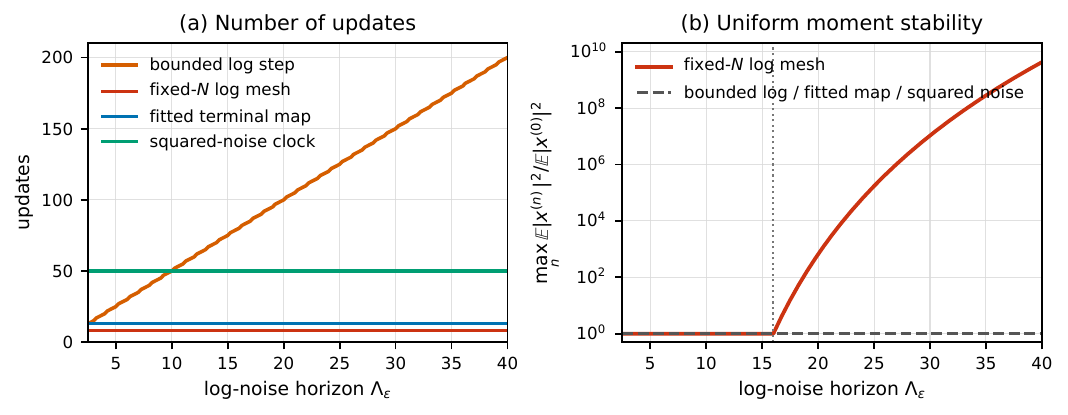}
\caption{The expanding-horizon obstruction on the normal-mode equation
$\dd x/\dd\lambda=-x$. (a) Keeping every log-noise step below $0.2$ is stable
but requires $N\asymp\Lambda_\varepsilon$ updates, where
$\Lambda_\varepsilon:=\log(\smax/\varepsilon)$. The other three displayed policies
have floor-independent counts. (b) Fixing $N=8$ instead stretches the Euler
step to $\Lambda_\varepsilon/N$; beyond the dotted threshold $\Lambda_\varepsilon/N=2$, its
maximum second moment diverges. Bounded-log stepping, the fitted terminal map,
and the squared-noise clock $\tau=\sigma^2$ remain moment-stable.
Their accuracy orders need not agree.}
\label{fig:ap-obstruction}
\end{figure}

\paragraph{Bounded-horizon clocks.}
On the Gaussian benchmark, the power clocks $\tau=\sigma^\gamma$,
$\gamma\ge1$, provide a family of AP
specifications that do not rely on a separate terminal map. Their native
horizon is bounded, so a uniform $\tau$-mesh has a floor-independent step
count. On the normal-mode test equation,
$\gamma=1$ is exact, whereas every $\gamma>1$ is stable but has sharp
floor-uniform order $1/\gamma$. The squared-noise clock is the case $\gamma=2$.
A bounded native horizon removes the step-count obstruction, but the clock
still determines how much terminal accuracy is retained.

\paragraph{First-order fitted methods.}
On the normal-mode test equation, $D=0$ and $\eta=x/\sigma$ is constant. A
$\sigma$-Euler step therefore sends $x(a)$ to $(\varepsilon/a)x(a)$ exactly. This is
the deterministic DDIM update and DPM-Solver-1
\citep{song2021ddim,lu2022dpmsolver}; it is also the final Euler step in the
EDM algorithms, where the Heun correction is explicitly skipped at zero
\citep{karras2022edm}. Among one-stage Euler clocks, affine functions of
$\sigma$ are the only clocks with this exact normal-mode property. The positive
Gaussian modes then give the first-order UA result.

\paragraph{Higher-order diffusion solvers.}
Midpoint DPM-Solver-2 has a proved Gaussian result. Its two stages are
exact on normal modes, while its tangential local defect is cubic in the
log step and concentrated where $\sigma^2$ crosses a data eigenvalue. This
defect sums to $O(h^2)$ independently of the floor. DPM-Solver-3, DEIS, PNDM,
DPM-Solver++, and UniPC improve the
base integration on $[a,\smax]$ using exponential, polynomial, multistep, or
predictor--corrector structure
\citep{lu2022dpmsolver,zhang2023deis,liu2022pndm,
lu2022dpmsolverplusplus,zhao2023unipc}. Their fixed-interval order describes
the base-solver interval. The conditional panel means: run the stated
order-$p$ method only to $a$, then append the terminal map \eqref{eq:fitstep}.
The fitted completion is then AP and preserves order $p$
when $a=O(h^{p/2})$. If $a\asymp h$, the $O(a^2)$ terminal-map error caps every
$p>2$ method at second order. EDM Heun is the $p=2$ case, for which
$a\asymp h$ satisfies this balance.

\paragraph{Stochastic and flow-matching samplers.}
The fitted theorem acts on endpoint laws and therefore also accepts a
stochastic base solver on $[a,\smax]$. On the Gaussian model, the
Euler--Maruyama covariance recurrence is uniformly first-order for each fixed
$\beta>0$; the deterministic fitted step then closes the remaining
floor. For affine flow matching,
$X_t=(1-t)X_0+tZ$, the finite high-noise level $\smax$ corresponds to
$t_{\max}=\smax/(1+\smax)<1$. The normal velocity is constant and this native
interval is bounded. Euler is first-order UA. Internally consistent Runge--Kutta methods retain
their classical order on the positive Gaussian modes, while the
fitted terminal map with an order-retaining switching scale handles the
singular normal endpoint.

\subsection{Learned-checkpoint uniform-order transfer}
\label{sec:learned-transfer}
The floor sweep reproduces the qualitative AP mechanisms in the classification.
Fitted or bounded-native-time policies showed floor-independent update counts,
bounded endpoint second moments, and convergent fixed-resolution terminal
outputs. Bounded log-noise stepping without a terminal map instead used
$O(\log(\smax/\varepsilon))$ updates.

A second experiment separates base integration from terminal-map error on an EDM
variance-preserving (VP) checkpoint and a Rectified Flow checkpoint. At switching
scale $a$, let $x_a^h$ and $x_a^\star$ denote the coarse and dense-reference
states, $x_0^\star$ the dense-reference endpoint, and $T_a$ the zero-floor
terminal map used by the checkpoint.
The same-seed endpoint error has the exact decomposition
\begin{equation}
T_a(x_a^h)-x_0^\star
=\big[T_a(x_a^h)-T_a(x_a^\star)\big]
+\big[T_a(x_a^\star)-x_0^\star\big].
\label{eq:checkpoint-decomposition}
\end{equation}
The first bracket is propagated base-integration error; the second is
terminal-map error on the dense trajectory. We fit their root-mean-square
magnitudes separately and retain their measured correlation when reconstructing
the total error. All 64 runs use matched initial noise and switching scale.

The estimated base-integration orders are $1.967$
for EDM and $1.926$ for Rectified Flow; the corresponding terminal-map
exponents are $0.757$ and $0.898$. Over the tested range, both learned fields
therefore retain approximately second-order Heun integration at fixed switching scale. Their
slower terminal-map convergence accounts for the lower rates observed when the
step size and switching scale shrink together. Component models estimated on
nine configurations predict seven configurations from the
excluded finest-step row and smallest-switching-scale column. Without
refitting, the median and maximum relative errors are $1.34\%$ and $7.35\%$
for EDM, and $8.21\%$ and $31.54\%$ for Rectified Flow.
Figure~\ref{fig:checkpoint-classification} shows the bootstrap intervals and
held-out predictions. Appendix~\ref{app:checkpoint-study} gives the protocol,
provenance, and all fitted exponents and correlations.

\begin{figure}[ht!]
\centering
\includegraphics[width=\linewidth]{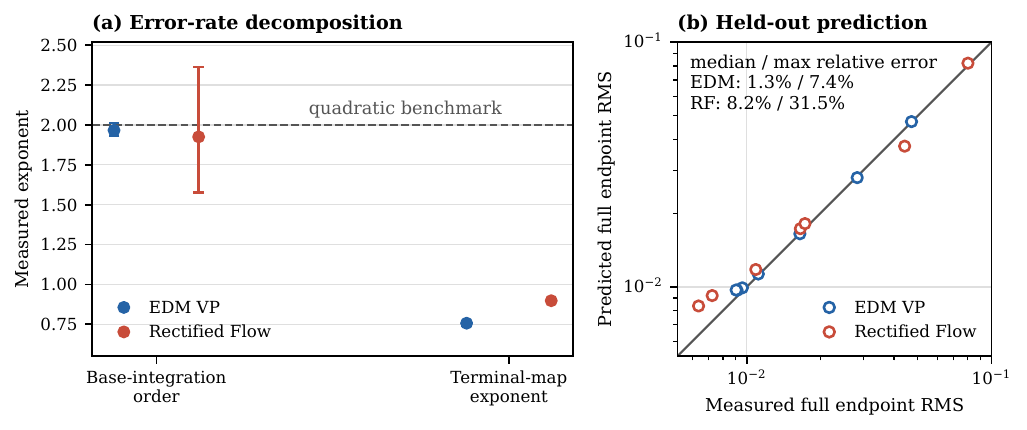}
\caption{Learned-checkpoint switching-scale decomposition. (a) Base-integration
order and terminal-map exponent; bars show 5--95\% paired-bootstrap intervals,
and the dashed line is the quadratic distributional $W_2$ benchmark for true
input. The checkpoint bars are exponents of same-seed endpoint root-mean-square
error. (b) Full endpoint
root-mean-square error predicted on configurations excluded from model fitting. Open circles show
held-out configurations and the diagonal is exact agreement.}
\label{fig:checkpoint-classification}
\end{figure}

\FloatBarrier
\section{Conclusion}
\label{sec:conclusion}
The vanishing terminal floor is a singular numerical limit governed by the
complete sampler specification, not by the update formula alone. Bounded local
log-noise stepping without a terminal map is not AP. A stable, consistent base
solver on $[a,\smax]$ satisfying the weighted transfer condition becomes AP
after one fitted terminal map.

The standard fitted construction also gives a quantitative order rule. On
smooth closed manifolds, its true-input error is
$O(a^2-\varepsilon^2)$ at every lower floor and sharply $\Theta(a^2)$ at
zero. The trajectory identity
$D(x(\sigma),\sigma)=x(\sigma)-\sigma x'(\sigma)$ exposes this
quadratic term and makes it cancellable. FE3 combines two positive-scale
denoiser predictions with endpoint Hermite reconstruction to obtain third
order uniformly over $0\le\varepsilon\le a$ for exact switching-scale input;
its weighted Lipschitz estimate transfers this order from numerical base
output. FE4 obtains fourth order at the zero floor under its stage-accuracy
hypothesis. The sphere calculation and nonuniform-manifold oracle verify these
orders against equal-evaluation Heun controls.

At the endpoint, the slower of the base-integration and terminal-completion
errors generally controls the observed order. On a rank-deficient Gaussian
benchmark, the standard map is exact in normal directions and contributes
$O(a^2)$ tangential error. An order-$p$ base method therefore retains order
$p$ when $a=O(h^{p/2})$, while FE3 lets a third-order method satisfying its
weighted transfer condition use $a=O(h)$. The resulting specification-level
AP and UA table covers
representative deterministic, stochastic, and flow-matching sampler families
(Table~\ref{tab:classification}). The classification also has a constructive
consequence: when the base solver satisfies the transfer conditions, a fitted map
repairs a non-AP terminal rule without modifying integration on $[a,\smax]$.
Appendix~\ref{app:nfe-switching-proof} gives more detailed switching-scale
bounds.

The checkpoint study separates exact-field order from its learned-field
transfer. At fixed switching scale, both checkpoints show approximately
second-order integration over the tested range, while their terminal-map errors
decay more slowly. The
held-out predictions show that these two terms account for the observed
same-seed endpoint errors across the tested switching scales and step sizes.
The coupled endpoint rate therefore depends on the full switching-scale rule.

\bibliography{iclr2026_conference}

@article{karras2022edm,
  title={Elucidating the design space of diffusion-based generative models},
  author={Karras, Tero and Aittala, Miika and Aila, Timo and Laine, Samuli},
  journal={Advances in Neural Information Processing Systems},
  volume={35},
  pages={26565--26577},
  year={2022}
}

@inproceedings{song2021ddim,
  title={Denoising diffusion implicit models},
  author={Song, Jiaming and Meng, Chenlin and Ermon, Stefano},
  booktitle={International Conference on Learning Representations},
  year={2021}
}

@inproceedings{song2021sde,
  title={Score-based generative modeling through stochastic differential equations},
  author={Song, Yang and Sohl-Dickstein, Jascha and Kingma, Diederik P and Kumar, Abhishek and Ermon, Stefano and Poole, Ben},
  booktitle={International Conference on Learning Representations},
  year={2021}
}

@article{ho2020denoising,
  title={Denoising diffusion probabilistic models},
  author={Ho, Jonathan and Jain, Ajay and Abbeel, Pieter},
  journal={Advances in Neural Information Processing Systems},
  volume={33},
  pages={6840--6851},
  year={2020}
}

@article{lu2022dpmsolver,
  title={{DPM-Solver}: A fast {ODE} solver for diffusion probabilistic model sampling in around 10 steps},
  author={Lu, Cheng and Zhou, Yuhao and Bao, Fan and Chen, Jianfei and Li, Chongxuan and Zhu, Jun},
  journal={Advances in Neural Information Processing Systems},
  volume={35},
  pages={5775--5787},
  year={2022}
}

@article{lu2022dpmsolverplusplus,
  title={{DPM-Solver++}: Fast solver for guided sampling of diffusion probabilistic models},
  author={Lu, Cheng and Zhou, Yuhao and Bao, Fan and Chen, Jianfei and Li, Chongxuan and Zhu, Jun},
  journal={Machine Intelligence Research},
  volume={22},
  number={4},
  pages={730--751},
  year={2025},
  publisher={Springer},
  doi={10.1007/s11633-025-1562-4}
}

@inproceedings{liu2022pndm,
  title={Pseudo numerical methods for diffusion models on manifolds},
  author={Liu, Luping and Ren, Yi and Lin, Zhijie and Zhao, Zhou},
  booktitle={International Conference on Learning Representations},
  year={2022}
}

@article{zhao2023unipc,
  title={{UniPC}: A unified predictor-corrector framework for fast sampling of diffusion models},
  author={Zhao, Wenliang and Bai, Lujia and Rao, Yongming and Zhou, Jie and Lu, Jiwen},
  journal={Advances in Neural Information Processing Systems},
  volume={36},
  year={2023}
}

@inproceedings{zhang2023deis,
  title={Fast sampling of diffusion models with exponential integrator},
  author={Zhang, Qinsheng and Chen, Yongxin},
  booktitle={International Conference on Learning Representations},
  year={2023}
}

@inproceedings{lipman2023flowmatching,
  title={Flow matching for generative modeling},
  author={Lipman, Yaron and Chen, Ricky TQ and Ben-Hamu, Heli and Nickel, Maximilian and Le, Matt},
  booktitle={International Conference on Learning Representations},
  year={2023}
}

@inproceedings{liu2023rectifiedflow,
  title={Flow straight and fast: Learning to generate and transfer data with rectified flow},
  author={Liu, Xingchao and Gong, Chengyue and Liu, Qiang},
  booktitle={International Conference on Learning Representations},
  year={2023}
}

@article{albergo2023stochastic,
  title={Stochastic interpolants: A unifying framework for flows and diffusions},
  author={Albergo, Michael S and Boffi, Nicholas M and Vanden-Eijnden, Eric},
  journal={Journal of Machine Learning Research},
  volume={26},
  number={209},
  pages={1--80},
  year={2025}
}

@inproceedings{albergo2023interpolants,
  title={Building normalizing flows with stochastic interpolants},
  author={Albergo, Michael S and Vanden-Eijnden, Eric},
  booktitle={International Conference on Learning Representations},
  year={2023}
}

@inproceedings{chen2023sampling,
  title={Sampling is as easy as learning the score: theory for diffusion models with minimal data assumptions},
  author={Chen, Sitan and Chewi, Sinho and Li, Jerry and Li, Yuanzhi and Salim, Adil and Zhang, Anru R},
  booktitle={International Conference on Learning Representations},
  year={2023}
}

@inproceedings{benton2024linear,
  title={Nearly $d$-linear convergence bounds for diffusion models via stochastic localization},
  author={Benton, Joe and De Bortoli, Valentin and Doucet, Arnaud and Deligiannidis, George},
  booktitle={International Conference on Learning Representations},
  year={2024}
}

@article{lee2022convergence,
  title={Convergence for score-based generative modeling with polynomial complexity},
  author={Lee, Holden and Lu, Jianfeng and Tan, Yixin},
  journal={Advances in Neural Information Processing Systems},
  volume={35},
  pages={22870--22882},
  year={2022}
}

@article{debortoli2022manifold,
  title={Convergence of denoising diffusion models under the manifold hypothesis},
  author={De Bortoli, Valentin},
  journal={Transactions on Machine Learning Research},
  year={2022}
}

@article{jin2022ap,
  title={Asymptotic-preserving schemes for multiscale physical problems},
  author={Jin, Shi},
  journal={Acta Numerica},
  volume={31},
  pages={415--489},
  year={2022},
  publisher={Cambridge University Press}
}

@article{jin1999efficient,
  title={Efficient asymptotic-preserving ({AP}) schemes for some multiscale kinetic equations},
  author={Jin, Shi},
  journal={SIAM Journal on Scientific Computing},
  volume={21},
  number={2},
  pages={441--454},
  year={1999},
  publisher={SIAM}
}

@book{roos2008robust,
  title={Robust numerical methods for singularly perturbed differential equations: convection-diffusion-reaction and flow problems},
  author={Roos, Hans-G{\"o}rg and Stynes, Martin and Tobiska, Lutz},
  year={2008},
  publisher={Springer}
}

@article{efron2011tweedie,
  title={Tweedie's formula and selection bias},
  author={Efron, Bradley},
  journal={Journal of the American Statistical Association},
  volume={106},
  number={496},
  pages={1602--1614},
  year={2011},
  publisher={Taylor \& Francis}
}

@inproceedings{kadkhodaie2024geometry,
  title={Generalization in diffusion models arises from geometry-adaptive harmonic representations},
  author={Kadkhodaie, Zahra and Guth, Florentin and Simoncelli, Eero P. and Mallat, St{\'e}phane},
  booktitle={International Conference on Learning Representations},
  year={2024}
}

@article{anderson1982reverse,
  title={Reverse-time diffusion equation models},
  author={Anderson, Brian DO},
  journal={Stochastic Processes and their Applications},
  volume={12},
  number={3},
  pages={313--326},
  year={1982},
  publisher={Elsevier}
}

@inproceedings{robbins1956empirical,
  title={An empirical {Bayes} approach to statistics},
  author={Robbins, Herbert E},
  booktitle={Proceedings of the Third Berkeley Symposium on Mathematical Statistics and Probability},
  volume={1},
  pages={157--163},
  year={1956},
  publisher={University of California Press}
}

@inproceedings{sabour2024align,
  title={Align Your Steps: Optimizing Sampling Schedules in Diffusion Models},
  author={Sabour, Amirmojtaba and Fidler, Sanja and Kreis, Karsten},
  booktitle={Proceedings of the 41st International Conference on Machine Learning},
  series={Proceedings of Machine Learning Research},
  volume={235},
  pages={42947--42975},
  year={2024}
}

@article{jo2026formalizing,
  title={Formalizing the Sampling Design Space of Diffusion-Based Generative Models via Adaptive Solvers and Wasserstein-Bounded Timesteps},
  author={Jo, Sangwoo and Choi, Sungjoon},
  journal={arXiv preprint arXiv:2602.12624},
  year={2026}
}

@article{arjovsky2017towards,
  title={Towards principled methods for training generative adversarial networks},
  author={Arjovsky, Martin and Bottou, L{\'e}on},
  journal={arXiv preprint arXiv:1701.04862},
  year={2017}
}

@inproceedings{DBLP:conf/icml/WanWM025,
  author={Zhengchao Wan and Qingsong Wang and Gal Mishne and Yusu Wang},
  title={Elucidating Flow Matching {ODE} Dynamics via Data Geometry and Denoisers},
  booktitle={Proceedings of the 42nd International Conference on Machine Learning},
  series={Proceedings of Machine Learning Research},
  volume={267},
  pages={62020--62083},
  year={2025}
}

@article{kumar2026flow,
  title={Flow matching is adaptive to manifold structures},
  author={Kumar, Shivam and Wang, Yixin and Lin, Lizhen},
  journal={arXiv preprint arXiv:2602.22486},
  year={2026}
}

@article{rawal2026rao,
  title={Rao-Blackwellized Score Matching on Manifolds},
  author={Rawal, Divit},
  journal={arXiv preprint arXiv:2605.25567},
  year={2026}
}

@article{brosse2026boundary,
  title={Boundary-Layer Asymptotics for Gaussian-Smoothed Singular Measures},
  author={Brosse, Nicolas and Dalalyan, Arnak S.},
  journal={arXiv preprint arXiv:2607.04514},
  year={2026}
}

@inproceedings{permenter2024interpreting,
  title={Interpreting and Improving Diffusion Models from an Optimization Perspective},
  author={Permenter, Frank and Yuan, Chenyang},
  booktitle={Proceedings of the 41st International Conference on Machine Learning},
  series={Proceedings of Machine Learning Research},
  volume={235},
  pages={40461--40483},
  year={2024},
  publisher={PMLR}
}

@article{leobacher2021projection,
  title={Existence, uniqueness and regularity of the projection onto differentiable manifolds},
  author={Leobacher, Gunther and Steinicke, Alexander},
  journal={Annals of Global Analysis and Geometry},
  volume={60},
  number={3},
  pages={559--587},
  year={2021},
  doi={10.1007/s10455-021-09788-z}
}

@book{hsu2002stochastic,
  title={Stochastic Analysis on Manifolds},
  author={Hsu, Elton P.},
  series={Graduate Studies in Mathematics},
  volume={38},
  publisher={American Mathematical Society},
  year={2002},
  doi={10.1090/gsm/038}
}

@article{erbar2010heat,
  title={The heat equation on manifolds as a gradient flow in the {Wasserstein} space},
  author={Erbar, Matthias},
  journal={Annales de l'Institut Henri Poincar{\'e}, Probabilit{\'e}s et Statistiques},
  volume={46},
  number={1},
  pages={1--23},
  year={2010},
  doi={10.1214/08-AIHP306}
}

@inproceedings{zhang2023lookahead,
  title={Lookahead Diffusion Probabilistic Models for Refining Mean Estimation},
  author={Zhang, Guoqiang and Niwa, Kenta and Kleijn, W. Bastiaan},
  booktitle={Proceedings of the IEEE/CVF Conference on Computer Vision and Pattern Recognition},
  pages={1421--1429},
  year={2023}
}

@inproceedings{choi2025rxdpm,
  title={Enhanced Diffusion Sampling via Extrapolation with Multiple {ODE} Solutions},
  author={Choi, Jinyoung and Kang, Junoh and Han, Bohyung},
  booktitle={International Conference on Learning Representations},
  year={2025}
}
\bibliographystyle{iclr2026_conference}

\appendix
\section{Proofs for the fitted completion}
\label{app:fitted-proofs}

\subsection{The general transfer estimate}

\begin{proposition}[One-step terminal transfer]\label{prop:terminal-transfer}
Let $D(x,\sigma)=\ExpE[X_0\mid X_\sigma=x]$ be the exact MMSE denoiser. Fix
$a>0$, let $q$ have finite second moment, and suppose $D(\cdot,a)$ is
globally $L_a$-Lipschitz. Then every $0\le\varepsilon\le a$ satisfies
\begin{equation}
\Wtwo\big((T_{a\to\varepsilon})_\#q,p_\varepsilon\big)
\le\left[\frac{\varepsilon}{a}+\left(1-\frac{\varepsilon}{a}\right)L_a\right]
\Wtwo(q,p_a)
+\left(1-\frac{\varepsilon}{a}\right)a\sqrt{\damb}.
\label{eq:fittedap}
\end{equation}
\end{proposition}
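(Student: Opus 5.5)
The plan is a triangle inequality through the intermediate law $(T_{a\to\varepsilon})_\#p_a$:
\[
\Wtwo\big((T_{a\to\varepsilon})_\#q,p_\varepsilon\big)\le
\Wtwo\big((T_{a\to\varepsilon})_\#q,(T_{a\to\varepsilon})_\#p_a\big)
+\Wtwo\big((T_{a\to\varepsilon})_\#p_a,p_\varepsilon\big).
\]
The first term produces the Lipschitz-amplified input error and the second the posterior-variance term.

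\textbf{First term.} Write $T_{a\to\varepsilon}(x)=\tfrac{\varepsilon}{a}x+(1-\tfrac{\varepsilon}{a})D(x,a)$, as in \eqref{eq:fitstep}. Since $0\le\varepsilon\le a$, both coefficients are nonnegative. Hence $T_{a\to\varepsilon}$ is globally Lipschitz with constant at most $\tfrac{\varepsilon}{a}+(1-\tfrac{\varepsilon}{a})L_a$. Pushing an optimal coupling of $(q,p_a)$ forward under $T_{a\to\varepsilon}\times T_{a\to\varepsilon}$ gives a coupling of the two pushforwards whose cost is bounded by that constant times $\Wtwo(q,p_a)$. Finite second moments are preserved because the map is Lipschitz.

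\textbf{Second term.} Here I use an explicit coupling built from the generative model $X_a=X_0+aZ$. Set $Y:=D(X_a,a)+\varepsilon Z$ with the same $X_0,Z$. Then $X_0+\varepsilon Z\sim p_\varepsilon$, which gives the second marginal. For the other marginal I compare with $T_{a\to\varepsilon}(X_a)=D(X_a,a)+\varepsilon\eta(X_a,a)$, where $\eta(X_a,a)=\ExpE[Z\mid X_a]$. This is not quite the intended law, so I will instead couple $\big(T_{a\to\varepsilon}(X_a),\,X_0+\varepsilon Z\big)$ directly. This is a valid coupling of $(T_{a\to\varepsilon})_\#p_a$ and $p_\varepsilon$. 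Its difference is
\[
T_{a\to\varepsilon}(X_a)-X_0-\varepsilon Z
=\big(D(X_a,a)-X_0\big)+\varepsilon\big(\ExpE[Z\mid X_a]-Z\big).
\]
Since $X_0=X_a-aZ$ and $D=X_a-a\ExpE[Z\mid X_a]$, we have $D-X_0=a(Z-\ExpE[Z\mid X_a])$. The difference therefore equals $(a-\varepsilon)\big(Z-\ExpE[Z\mid X_a]\big)$. By the conditional variance bound, $\ExpE\|Z-\ExpE[Z\mid X_a]\|^2\le\ExpE\|Z\|^2=\damb$. So the second term is at most $(a-\varepsilon)\sqrt{\damb}=(1-\tfrac{\varepsilon}{a})a\sqrt{\damb}$. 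Adding the two bounds gives \eqref{eq:fittedap}.

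\textbf{Main obstacle.} The delicate step is recognizing the right coupling for the second term, namely using the shared noise $Z$ so that the residual collapses to a multiple of the posterior-noise fluctuation. Once the algebraic identity $D-X_0=a(Z-\ExpE[Z\mid X_a])$ from Tweedie's decomposition \eqref{eq:micromacro} is in hand, everything else is routine. The endpoint cases also check out: $\varepsilon=a$ gives the identity map with zero extra error, and $\varepsilon=0$ gives the denoiser pushforward.
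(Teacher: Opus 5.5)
Your proof is correct and is essentially the paper's argument: the same triangle inequality through $(T_{a\to\varepsilon})_\#p_a$, the same convex-combination Lipschitz constant $\tfrac{\varepsilon}{a}+(1-\tfrac{\varepsilon}{a})L_a$, and the same coupling with shared $(X_0,Z)$ for the true-input term. Your residual $(a-\varepsilon)\big(Z-\ExpE[Z\mid X_a]\big)$ equals the paper's $-(1-\varepsilon/a)\big(X_0-D(X_a,a)\big)$, and your conditional-variance bound is the paper's posterior-mean optimality comparison against the estimator $X_a$, written in noise-prediction form.
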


\begin{proof}[Proof of Proposition~\ref{prop:terminal-transfer}]
Let $Y=X_0+aZ\sim p_a$ and $X_\varepsilon=X_0+\varepsilon Z\sim p_\varepsilon$ share the
same pair $(X_0,Z)$. Since
$T_{a\to\varepsilon}(Y)=(1-\varepsilon/a)D(Y,a)+(\varepsilon/a)Y$,
\begin{equation}
T_{a\to\varepsilon}(Y)-X_\varepsilon
=-(1-\varepsilon/a)\bigl(X_0-D(Y,a)\bigr).
\label{eq:fitcoupling-new}
\end{equation}
The posterior mean is the $L^2$-optimal estimator of $X_0$ from $Y$;
comparison with the estimator $Y$ gives
\[
\ExpE\norm{X_0-D(Y,a)}^2
\le\ExpE\norm{X_0-Y}^2=\damb a^2.
\]
Thus
$\Wtwo((T_{a\to\varepsilon})_\#p_a,p_\varepsilon)
\le(1-\varepsilon/a)a\sqrt{\damb}$.
The map
$T_{a\to\varepsilon}=(1-\varepsilon/a)D(\cdot,a)+(\varepsilon/a)\,\mathrm{id}$ is
$[\varepsilon/a+(1-\varepsilon/a)L_a]$-Lipschitz. Push a near-optimal coupling of
$q$ and $p_a$ through this map and add the true-input term to obtain
\eqref{eq:fittedap}.
\end{proof}

\begin{proof}[Proof of Theorem~\ref{thm:fittedap}]
Set $a=a(h)$. Assumption \textup{(H3)} and the single fitted update give
\[
N(h,\varepsilon)\le N_{\mathrm{base}}(h,a)+1,
\]
uniformly in the requested floor. Assumption \textup{(H1)} bounds all base
iterates. To bound the fitted endpoint, couple
$X_q\sim q_{h,a}^{\mathrm{base}}$ and $X_p\sim p_a$. Posterior-mean
contraction and \textup{(H2)} give, for sufficiently small $h$,
\[
\ExpE\norm{D(X_q,a)}^2
\le2\ExpE\norm{X_0}^2
+2L_a^2\Wtwo^2(q_{h,a}^{\mathrm{base}},p_a)
\le2\ExpE\norm{X_0}^2+2.
\]
Because $T_{a\to\varepsilon}$ is a convex combination of $D(\cdot,a)$ and the
identity, the fitted state has a second-moment bound independent of $\varepsilon$.

At fixed $h$, the maps defining $q_{h,\varepsilon}$ and
$q_{h,0}=(D(\cdot,a))_\#q_{h,a}^{\mathrm{base}}$ differ by
$(\varepsilon/a)(x-D(x,a))$. Their identity coupling yields
\[
\Wtwo^2(q_{h,\varepsilon},q_{h,0})
\le(\varepsilon/a)^2
\ExpE_{q_{h,a}^{\mathrm{base}}}\norm{x-D(x,a)}^2\to0.
\]
At $\varepsilon=0$, Proposition~\ref{prop:terminal-transfer} gives
\[
\Wtwo(q_{h,0},p_0)
\le L_a\Wtwo(q_{h,a}^{\mathrm{base}},p_a)+a\sqrt{\damb}\to0.
\]
Together with fixed-floor consistency, these estimates prove all four AP
conditions. Finally, the right-hand side of \eqref{eq:fittedap} is affine in
$\varepsilon/a\in[0,1]$; taking its maximum at the two endpoints and combining it
with the base-solver interval supremum proves \eqref{eq:uatransfer}.
\end{proof}

\subsection{Sharp rate transfer and the fixed-update scale}
\label{app:nfe-switching-proof}

\begin{corollary}[General switching-scale rate bound]
\label{cor:general-rate}
Let
\[
m(a):=\Wtwo\big((D(\cdot,a))_\#p_a,p_0\big)
\]
be the zero-floor terminal-map error for true input law $p_a$.
Suppose, for some $p,\nu>0$ and $r\ge0$, that
$L_a\Wtwo(q_{h,a}^{\mathrm{base}},p_a)\lesssim h^p a^{-r}$ and
$m(a)\lesssim a^\nu$.
Then
\[
\Wtwo(q_{h,0},p_0)
\lesssim h^p a^{-r}+a^\nu.
\]
For $r>0$, balancing the two powers with
$a\asymp h^{p/(r+\nu)}$ gives
$\Wtwo(q_{h,0},p_0)=O(h^{p\nu/(r+\nu)})$.
For $r=0$, every $a=O(h^{p/\nu})$ retains order $p$.
For every exact posterior mean, $m(a)\le a\sqrt{\damb}$; on the Gaussian
benchmark, $m(a)=O(a^2)$.
\end{corollary}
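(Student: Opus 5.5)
The corollary follows from the zero-floor case of Proposition~\ref{prop:terminal-transfer} after one refinement: the true-input term $a\sqrt{\damb}$ there is replaced by the sharper quantity $m(a)$. I would redo the last step of that proof with a triangle inequality in $W_2$ instead of the posterior-variance comparison. Since $q_{h,0}=(D(\cdot,a))_\#q_{h,a}^{\mathrm{base}}$ and $D(\cdot,a)$ is $L_a$-Lipschitz, pushing a near-optimal coupling of $q_{h,a}^{\mathrm{base}}$ and $p_a$ through $D(\cdot,a)$ gives
\[
\Wtwo(q_{h,0},p_0)\le \Wtwo\big(q_{h,0},(D(\cdot,a))_\#p_a\big)+m(a)
\le L_a\Wtwo(q_{h,a}^{\mathrm{base}},p_a)+m(a).
\]
Inserting the two assumed bounds yields $\Wtwo(q_{h,0},p_0)\lesssim h^pa^{-r}+a^\nu$.

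For the scaling claims, the case $r>0$ is handled by balancing the two terms. Setting $h^pa^{-r}=a^\nu$ gives $a^{r+\nu}=h^p$, so $a\asymp h^{p/(r+\nu)}$, and both terms then equal $h^{p\nu/(r+\nu)}$. For $r=0$, the first term is already $O(h^p)$. If $a\le Ch^{p/\nu}$, then $a^\nu\le C^\nu h^p$, so the total error is $O(h^p)$. The implied constants are the product of the assumed ones and are independent of $\varepsilon$, since the statement concerns only the zero floor.

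For the last two claims, the bound $m(a)\le a\sqrt{\damb}$ comes from the coupling in the proof of Proposition~\ref{prop:terminal-transfer} at $\varepsilon=0$. Take $Y=X_0+aZ$; then $D(Y,a)\sim(D(\cdot,a))_\#p_a$ is coupled with $X_0\sim p_0$. Moreover, $\ExpE\|X_0-D(Y,a)\|^2\le\ExpE\|X_0-Y\|^2=\damb a^2$ by $L^2$-optimality of the posterior mean.

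For the Gaussian benchmark, take $p_0=\mathcal N(0,\Sigma)$ with $\Sigma$ possibly singular, and diagonalize $\Sigma$ with eigenvalues $\lambda_i\ge0$. Then $D(x,a)=\Sigma(\Sigma+a^2I)^{-1}x$ is linear, and $(D(\cdot,a))_\#p_a=\mathcal N(0,\Sigma^2(\Sigma+a^2I)^{-1})$, which is diagonal in the same basis. Because the two Gaussians commute, their $W_2$ distance is computed coordinatewise:
\[
m(a)^2=\sum_i\Big(\sqrt{\lambda_i}-\lambda_i/\sqrt{\lambda_i+a^2}\Big)^2 .
\]
Normal coordinates with $\lambda_i=0$ contribute exactly zero. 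Each positive mode contributes
\[
\frac{\lambda_i\big(\sqrt{\lambda_i+a^2}-\sqrt{\lambda_i}\big)^2}{\lambda_i+a^2}\le \frac{a^4}{4\lambda_i},
\]
using $\sqrt{\lambda+a^2}-\sqrt\lambda\le a^2/(2\sqrt\lambda)$. Summing gives $m(a)\le\tfrac{a^2}{2}\big(\sum_{\lambda_i>0}\lambda_i^{-1}\big)^{1/2}=O(a^2)$, where the constant depends on the smallest positive eigenvalue.

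The only step needing care is that the $L_a$ factor attaches to the base error and not to $m(a)$, which is why the proof goes through the intermediate law $(D(\cdot,a))_\#p_a$. The Gaussian calculation is the one place where something must actually be computed.
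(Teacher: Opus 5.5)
Your proposal is correct and follows the paper's proof. The paper likewise combines Lipschitz pushforward with the triangle inequality through the intermediate law $(D(\cdot,a))_\#p_a$, and it obtains the universal bound from posterior-mean optimality. Your commuting-Gaussian computation reproduces the closed form that the paper records in Proposition~\ref{prop:gaussian-closure}.
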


\begin{proof}
Lipschitz pushforward and the triangle inequality give
$\Wtwo(q_{h,0},p_0)\le
L_a\Wtwo(q_{h,a}^{\mathrm{base}},p_a)+m(a)$. The universal estimate follows by
coupling $D(X_a,a)$ with $X_0$ and using posterior-mean optimality.
\end{proof}

Write $D_a:=D(\cdot,a)$ and define
\[
\bar\mu_a:=(D_a)_\#p_a,
\qquad
\mu_{h,a}:=(D_a)_\#q_{h,a}^{\mathrm{base}},
\]
\[
e_{\rm tr}:=\Wtwo(\mu_{h,a},\bar\mu_a),\qquad
e_{\rm cl}:=\Wtwo(\bar\mu_a,p_0),\qquad
e_{\rm end}:=\Wtwo(\mu_{h,a},p_0).
\]

\begin{theorem}[Sharp two-channel order transfer]
\label{thm:sharp-rate-transfer}
For every $h$ and $a$,
\begin{equation}
|e_{\rm tr}-e_{\rm cl}|
\le e_{\rm end}
\le e_{\rm tr}+e_{\rm cl}.
\label{eq:metric-sandwich}
\end{equation}
Let $a(h)=c h^\alpha$ and suppose, for $A,B,c>0$, $r\ge0$, and $\nu>0$, that
\[
e_{\rm tr}=Ac^{-r}h^{p-\alpha r}(1+o(1)),
\qquad
e_{\rm cl}=Bc^\nu h^{\alpha\nu}(1+o(1)).
\]
If the two exponents differ, the slower channel gives the full endpoint
asymptotic with the same leading constant. On the balance line
$p-\alpha r=\alpha\nu$,
\begin{equation}
|Ac^{-r}-Bc^\nu|
\le\liminf_{h\to0}\frac{e_{\rm end}}{h^{\alpha\nu}}
\le\limsup_{h\to0}\frac{e_{\rm end}}{h^{\alpha\nu}}
\le Ac^{-r}+Bc^\nu.
\label{eq:balanced-sandwich}
\end{equation}
Thus cancellation can alter the leading order only on the balance line.
\end{theorem}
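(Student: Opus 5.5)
The sandwich \eqref{eq:metric-sandwich} is a statement about three probability laws in the metric space of laws with finite second moment under $\Wtwo$, so I will prove it by the triangle inequality alone. The three laws $\mu_{h,a}$, $\bar\mu_a$, $p_0$ have finite second moments: $p_0$ by assumption, $\bar\mu_a$ by posterior-mean contraction, and $\mu_{h,a}$ by the Lipschitz pushforward bound used in the proof of Theorem~\ref{thm:fittedap}. Applying the triangle inequality to $\mu_{h,a},\bar\mu_a,p_0$ gives $e_{\rm end}\le e_{\rm tr}+e_{\rm cl}$. The reverse triangle inequality, used both as $e_{\rm tr}\le e_{\rm end}+e_{\rm cl}$ and as $e_{\rm cl}\le e_{\rm tr}+e_{\rm end}$, gives $|e_{\rm tr}-e_{\rm cl}|\le e_{\rm end}$. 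No structure of $D$ is needed beyond these moment bounds.

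For the asymptotic part, substitute $a=ch^\alpha$ into the assumed expansions, so that $e_{\rm tr}=Ac^{-r}h^{\beta_1}(1+o(1))$ and $e_{\rm cl}=Bc^\nu h^{\beta_2}(1+o(1))$, with $\beta_1=p-\alpha r$ and $\beta_2=\alpha\nu$.

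Suppose first that $\beta_1\neq\beta_2$, and let $\beta=\min\{\beta_1,\beta_2\}$ be the slower channel, with leading constant $K$. Dividing the sandwich by $h^\beta$, the faster channel contributes $O(h^{|\beta_1-\beta_2|})=o(1)$. The upper bound therefore gives $e_{\rm end}/h^\beta\le K+o(1)$. The lower bound gives $e_{\rm end}/h^\beta\ge |K(1+o(1))-o(1)|\to K$, because $K>0$. Hence $e_{\rm end}=Kh^\beta(1+o(1))$, which is the claimed asymptotic with the same leading constant.

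On the balance line $\beta_1=\beta_2=\alpha\nu$, divide the sandwich by $h^{\alpha\nu}$. The two sides converge to $|Ac^{-r}-Bc^\nu|$ and $Ac^{-r}+Bc^\nu$ respectively, since the absolute value is continuous. Taking $\liminf$ and $\limsup$ then yields \eqref{eq:balanced-sandwich}.

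The concluding remark follows by comparing the two cases. Off the balance line, the leading term of $e_{\rm end}$ is determined by the slower channel. Cancellation between the channels can only occur where the lower bound $|Ac^{-r}-Bc^\nu|$ may vanish, which requires equal exponents.

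No step is a real obstacle. The only points needing care are (i) checking that all three laws lie in the $\Wtwo$ space, which is supplied by the Lipschitz assumption (H2) and posterior-mean contraction, and (ii) keeping the $o(1)$ factors multiplicative so that the limits in the unbalanced case pass cleanly through the absolute value.
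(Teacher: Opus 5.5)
Your proposal is correct and follows the paper's proof. The sandwich is the triangle inequality for $\Wtwo$ together with its reverse form. In the unbalanced case the paper divides by $e_{\rm tr}$ itself, while you divide by $h^{\beta}$; the two are equivalent. On the balance line both arguments divide by $h^{\alpha\nu}$ and pass to lower and upper limits. Your extra check that the three laws have finite second moments is a harmless refinement that the paper leaves implicit.
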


\begin{theorem}[Bound-optimal fixed-update switching scale]
\label{thm:nfe-switching}
Let a fixed-stage base method use $M$ updates on a quasi-uniform log-noise mesh
from $\smax$ to $a$, and consider the zero-floor upper-bound model
\begin{equation}
\mathcal B_M(a)
=A M^{-p}\log(\smax/a)^p a^{-r}+Ba^\nu,
\qquad 0<a<\smax,
\label{eq:nfe-objective}
\end{equation}
where $p\ge1$, $\nu>0$, $r\ge0$, and $A,B>0$. For all sufficiently large
$M$, $\mathcal B_M$ has a unique interior minimizer $a_M$.
If $r>0$, then
\begin{equation}
a_M\sim
\left(\frac{Ar}{B\nu}\right)^{1/(r+\nu)}
\left(\frac{p}{r+\nu}\right)^{p/(r+\nu)}
M^{-p/(r+\nu)}(\log M)^{p/(r+\nu)},
\label{eq:nfe-scale-positive-r}
\end{equation}
and
\[
\min_a\mathcal B_M(a)
\asymp M^{-p\nu/(r+\nu)}(\log M)^{p\nu/(r+\nu)}.
\]
If $r=0$, then
\begin{equation}
a_M\sim
\left(\frac{Ap}{B\nu}\right)^{1/\nu}
\left(\frac p\nu\right)^{(p-1)/\nu}
M^{-p/\nu}(\log M)^{(p-1)/\nu},
\label{eq:nfe-scale-zero-r}
\end{equation}
and
\[
\min_a\mathcal B_M(a)
\sim A\left(\frac p\nu\right)^pM^{-p}(\log M)^p.
\]
\end{theorem}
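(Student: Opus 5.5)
We treat $\mathcal B_M$ as a one-variable calculus problem in the logarithmic variable. Put $u:=\log(\smax/a)\in(0,\infty)$, so that $a=\smax e^{-u}$. Write $\varepsilon_M:=AM^{-p}$, and absorb the powers of $\smax$ into the constants: $A':=A\smax^{-r}$ and $B':=B\smax^{\nu}$. The objective then becomes
\[
\mathcal B_M=\varepsilon_M A^{-1}A'\,u^pe^{ru}+B'e^{-\nu u}.
\]

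\textbf{Case $r>0$.} Here the first term is increasing and convex in $u$, and the second is decreasing and convex. Hence $\mathcal B_M$ is strictly convex, tends to $+\infty$ at $u\to\infty$, and tends to $B'$ at $u\to0$. The derivative in $u$ is
\[
\varepsilon'(pu^{p-1}+ru^p)e^{ru}-\nu B'e^{-\nu u},
\]
where $\varepsilon'$ denotes $\varepsilon_MA'/A$. This derivative is negative near $u=0$ once $M$ is large, since $\varepsilon'\to0$. Strict convexity then gives a unique interior critical point, which is the minimizer; this settles existence and uniqueness. The critical equation reads
\[
e^{(r+\nu)u}\,u^{p-1}(p+ru)=\nu B'/\varepsilon'\asymp M^p .
\]
Taking logarithms gives $u\sim \frac{p}{r+\nu}\log M$. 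Substituting this back, $ru^p(1+O(1/u))=r u^p$ to leading order. We then solve
\[
a^{r+\nu}\smax^{-(r+\nu)}=\varepsilon' r u^p/(\nu B')\,(1+o(1)),
\]
and insert $u^p\sim(p/(r+\nu))^p(\log M)^p$. Unwinding $A',B'$ recovers exactly the constant in \eqref{eq:nfe-scale-positive-r}: the $\smax$ powers cancel, because $\log(\smax/a)$ was kept intact. One check is needed: an $o(1)$ relative error in $u^p$ only perturbs $a$ by a factor $1+o(1)$, since $u$ enters polynomially. At $a_M$, the stationarity relation gives $r\cdot(\text{first term})\approx\nu\cdot(\text{second term})$. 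Both terms are therefore comparable, of size $a_M^\nu\asymp M^{-p\nu/(r+\nu)}(\log M)^{p\nu/(r+\nu)}$.

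\textbf{Case $r=0$.} This is the delicate step, because $u^p$ alone is concave for $p<1$ (excluded here) and convex in $u$ for $p\ge1$ only on the relevant range. I would avoid relying on convexity directly. Instead, I show the derivative $p\varepsilon'u^{p-1}-\nu B'e^{-\nu u}$ has a unique zero. For $p=1$ the first term is constant and the second strictly decreasing. For $p>1$ I rewrite the condition as $u^{p-1}e^{\nu u}=\nu B'/(p\varepsilon')$, whose left side is strictly increasing from $0$ to $\infty$. Either way the sign changes once, from negative to positive, giving a unique minimizer for large $M$. Solving gives $u\sim\frac p\nu\log M$, and then
\[
a^\nu=p\varepsilon' u^{p-1}\smax^\nu/(\nu B')\,(1+o(1)).
\]
This yields \eqref{eq:nfe-scale-zero-r} with the factor $(p/\nu)^{(p-1)/\nu}(\log M)^{(p-1)/\nu}$. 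At the minimum, the second term is $B a_M^\nu=O(M^{-p}(\log M)^{p-1})$, which is lower order. So the minimum is asymptotic to the first term, $AM^{-p}u^p\sim A(p/\nu)^pM^{-p}(\log M)^p$.

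I expect the main obstacle to be bookkeeping rather than concept. Two points need care. First, the iterated-logarithm corrections in $u$, of the form $u=\frac{p}{r+\nu}\log M+O(\log\log M)$, must affect $a_M$ only through $u^p$ and $u^{p-1}$ with relative error $o(1)$. Second, the exponential inversion must not introduce a spurious constant factor. To handle both, I would write $u=u_0+\delta$ with $u_0=\frac{p}{r+\nu}\log M$ and $\delta=O(\log\log M)$. Then $u^p/u_0^p\to1$, while the constant factors are carried exactly by the closed-form solve for $a$ rather than for $u$.
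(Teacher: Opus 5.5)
Your proposal is correct and follows essentially the same route as the paper. It uses the same four steps: the substitution $u=\log(\smax/a)$, the critical-point equation $e^{(r+\nu)u}u^{p-1}(p+ru)=\frac{B\nu}{A}\smax^{r+\nu}M^p$, logarithmic inversion to $u_M\sim\frac{p}{r+\nu}\log M$ (respectively $\frac{p}{\nu}\log M$), and the term-ratio argument for the minimum value (ratio $\nu/r$, respectively $p/(\nu u_M)\to0$).

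The only deviation is that for $r>0$ you get uniqueness from strict convexity rather than from monotonicity of the left side of the critical equation. Your aside about $r=0$ is inaccurate: $u^p$ is convex on all of $(0,\infty)$ for $p\ge1$, so convexity would also work there. This does no harm, because you do not rely on that remark.
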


For the smooth-manifold exponent $\nu=2$, a second-order base method with
$r=0$ has
\[
a_M\sim\sqrt{A/B}\,M^{-1}(\log M)^{1/2},
\qquad
\min_a\mathcal B_M(a)\sim A M^{-2}(\log M)^2.
\]

\begin{proof}[Proof of Theorem~\ref{thm:sharp-rate-transfer}]
The upper bound in \eqref{eq:metric-sandwich} is the triangle inequality, and
the lower bound is its reverse form. If
$p-\alpha r<\alpha\nu$, then $e_{\rm cl}/e_{\rm tr}\to0$; dividing the
sandwich by $e_{\rm tr}$ gives $e_{\rm end}/e_{\rm tr}\to1$. The case
$\alpha\nu<p-\alpha r$ is symmetric. On the balance line, divide the
sandwich by $h^{\alpha\nu}$ and pass to the lower and upper limits to obtain
\eqref{eq:balanced-sandwich}.
\end{proof}

\begin{proof}[Proof of Theorem~\ref{thm:nfe-switching}]
Set
\[
x=\log(\smax/a),\qquad a=\smax e^{-x}.
\]
Then minimizing \eqref{eq:nfe-objective} over $0<a<\smax$ is equivalent to
minimizing
\[
F_M(x)=A\smax^{-r}M^{-p}x^p e^{rx}
+B\smax^\nu e^{-\nu x},qquad x>0.
\]
For all sufficiently large $M$, the right derivative at zero is negative.
Moreover, $F_M(x)\to\infty$ as $x\to\infty$. Its critical points satisfy
\begin{equation}
e^{(r+\nu)x}x^{p-1}(p+rx)
=\frac{B\nu}{A}\smax^{r+\nu}M^p.
\label{eq:nfe-critical-point}
\end{equation}
The left-hand side is strictly increasing on $(0,\infty)$ for $p\ge1$.
Hence the critical point exists, is unique, and is the global minimizer.

Taking logarithms in \eqref{eq:nfe-critical-point} yields
\[
(r+\nu)x+(p-1)\log x+\log(p+rx)=p\log M+O(1).
\]
Consequently,
\[
x_M\sim\frac{p}{r+\nu}\log M\quad(r>0),
\qquad
x_M\sim\frac p\nu\log M\quad(r=0).
\]
Rearranging the critical-point equation gives
\[
B\nu a_M^{r+\nu}
=A M^{-p}x_M^{p-1}(p+rx_M).
\]
For $r>0$, $p+rx_M\sim rx_M$; substitution of the preceding asymptotic for
$x_M$ gives \eqref{eq:nfe-scale-positive-r}. The ratio of the base term in
$\mathcal B_M$ to the completion term tends to $\nu/r$, which gives the
stated order of the minimum. For $r=0$, the same identity gives
\eqref{eq:nfe-scale-zero-r}; the completion term is smaller than the base term
by the factor $p/(\nu x_M)$, and the displayed asymptotic for the minimum
follows.
\end{proof}

\subsection{The Gaussian terminal-map error is second order in the switching scale}

\begin{proposition}[Gaussian fitted-map error]
\label{prop:gaussian-closure}
Let $p_0=\mathcal N(0,C)$, with positive eigenvalues
$c_1,\ldots,c_d$ and any number of zero eigenvalues. For every law $q$ with
finite second moment and every $0\le\varepsilon\le a$,
\begin{equation}
\Wtwo\big((T_{a\to\varepsilon})_\#q,p_\varepsilon\big)
\le \Wtwo(q,p_a)
+\frac{a^2}{2}\left(\sum_{i=1}^{d}\frac1{c_i}\right)^{1/2}.
\label{eq:gaussian-closure}
\end{equation}
The fitted map is exact on the zero-eigenvalue normal coordinates and is
nonexpansive on every coordinate. For input law $p_a$ at zero floor,
\begin{equation}
m(a)^2=\sum_{i=1}^{d}
\left(\sqrt{c_i}-\frac{c_i}{\sqrt{c_i+a^2}}\right)^2,
\qquad
m(a)=\frac{a^2}{2}\left(\sum_{i=1}^{d}\frac1{c_i}\right)^{1/2}
+O(a^4).
\label{eq:gaussian-sharp-closure}
\end{equation}
\end{proposition}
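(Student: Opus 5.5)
We diagonalize $C$ in an orthonormal eigenbasis. Everything then decouples coordinatewise, and the Gaussian $W_2$ distance between centered laws with commuting covariances is explicit.

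In each coordinate with eigenvalue $c\ge0$, the denoiser at scale $a$ is linear, $D(x,a)=\frac{c}{c+a^2}x$. Hence
\[
T_{a\to\varepsilon}(x)=\Big(\tfrac{\varepsilon}{a}+\big(1-\tfrac{\varepsilon}{a}\big)\tfrac{c}{c+a^2}\Big)x=:\kappa_c(a,\varepsilon)\,x .
\]
When $c=0$ this gives $\kappa_0=\varepsilon/a$, which is exactly the normal-mode transport $x\mapsto(\varepsilon/a)x$ sending $p_a$'s normal marginal $\mathcal N(0,a^2)$ to $\mathcal N(0,\varepsilon^2)$. So the map is exact on the zero-eigenvalue coordinates. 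For every $c$, $\kappa_c$ is a convex combination of two numbers in $[0,1]$, so $|\kappa_c|\le1$ and the map is nonexpansive coordinatewise. In particular it is globally $1$-Lipschitz.

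By the triangle inequality, pushing a near-optimal coupling of $q$ and $p_a$ through $T$ gives
\[
\Wtwo(T_\#q,p_\varepsilon)\le\Wtwo(q,p_a)+\Wtwo(T_\#p_a,p_\varepsilon).
\]
Now $T_\#p_a$ is Gaussian with diagonal variances $\kappa_c^2(c+a^2)$, while $p_\varepsilon$ has variances $c+\varepsilon^2$. For commuting covariances, the true-input term is therefore
\[
\Big(\sum_i\big(\kappa_{c_i}\sqrt{c_i+a^2}-\sqrt{c_i+\varepsilon^2}\big)^2\Big)^{1/2},
\]
and the zero-eigenvalue coordinates contribute nothing. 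Setting $\varepsilon=0$ gives $\kappa_c=c/(c+a^2)$, and $\kappa_c\sqrt{c+a^2}=c/\sqrt{c+a^2}$, which is exactly the formula for $m(a)^2$. Expanding,
\[
\sqrt c-\frac{c}{\sqrt{c+a^2}}=\sqrt c\Big(1-\big(1+a^2/c\big)^{-1/2}\Big)=\frac{a^2}{2\sqrt c}+O(a^4).
\]
This gives the stated asymptotic for $m(a)$.

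The main step is the uniform-in-$\varepsilon$ per-coordinate bound
\[
g(\varepsilon):=\sqrt{c+\varepsilon^2}-\kappa_c\sqrt{c+a^2}\le\frac{a^2}{2\sqrt c},\qquad 0\le\varepsilon\le a,
\]
together with $g\ge0$, so that no absolute value is lost. Write $t=\varepsilon/a\in[0,1]$ and $s=\sqrt{c+a^2}$. Then
\[
\kappa_c s=t s+(1-t)\frac{c}{s},
\]
which is affine in $t$, while $\sqrt{c+a^2t^2}$ is convex in $t$. The two agree at $t=1$, where both equal $s$. At $t=0$ we have $g(0)=\sqrt c-c/s\ge0$. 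A convex function minus an affine one is convex, so $g$ is maximized over $[0,1]$ at an endpoint. Since $g(1)=0$, this gives $g\le g(0)=\sqrt c-c/s$.

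For the bound on $g(0)$, write
\[
\sqrt c-\frac{c}{s}=\frac{\sqrt c\,(s-\sqrt c)}{s}=\frac{\sqrt c\,a^2}{s(s+\sqrt c)}\le\frac{a^2}{2\sqrt c},
\]
using $s\ge\sqrt c$.

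For nonnegativity of $g$, I would use concavity of $\sqrt{\cdot}$. Jensen gives $\sqrt{c+a^2t^2}\ge\sqrt c+t(\sqrt{c+a^2}-\sqrt c)$, but the affine term has the endpoint value $c/s\le\sqrt c$ at $t=0$ instead, so I would compare directly. We need
\[
\sqrt{c+a^2t^2}\ge ts+(1-t)\frac{c}{s}.
\]
Multiplying by $s$, the right side becomes $ts^2+(1-t)c=c+ta^2$. The inequality is then $s\sqrt{c+a^2t^2}\ge c+ta^2$, which is Cauchy--Schwarz applied to $(\sqrt c,a)\cdot(\sqrt c,at)$.

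Summing the squared per-coordinate bounds over $i$ yields \eqref{eq:gaussian-closure}.
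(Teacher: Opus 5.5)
Your proof is correct and follows the paper's route: you diagonalize $C$, use the coordinate multiplier $(c+a\varepsilon)/(c+a^2)\in[0,1]$ (which equals $\varepsilon/a$, and is exact, on normal coordinates), transfer the input error by nonexpansiveness, and compare the standard deviations $\sqrt{c+\varepsilon^2}$ and $(c+a\varepsilon)/\sqrt{c+a^2}$ mode by mode; your Cauchy--Schwarz step is exactly the paper's identity $(c+a^2)(c+\varepsilon^2)-(c+a\varepsilon)^2=c(a-\varepsilon)^2$. The only variation is that you bound the gap by convexity in $\varepsilon/a$ and evaluation at the $\varepsilon=0$ endpoint, whereas the paper rationalizes the difference of square roots and obtains the slightly sharper per-mode bound $(a-\varepsilon)^2/(2\sqrt c)$, which also vanishes at $\varepsilon=a$.
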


\begin{proposition}[Sharp Gaussian order transfer under non-cancellation]
\label{prop:sharp-gaussian-transfer}
Suppose the base output at $a=a(h)$ is a centered Gaussian diagonal in the
eigenbasis of $C$, and let $s_{h,i}(a)$ be its standard deviation on a positive
mode $c_i>0$. Define
\[
b_{h,i}:=\frac{c_i}{c_i+a^2}
\left(s_{h,i}(a)-\sqrt{c_i+a^2}\right),
\qquad
g_{a,i}:=\frac{c_i}{\sqrt{c_i+a^2}}-\sqrt{c_i}.
\]
Then $\Wtwo(q_{h,0},p_0)=\norm{b_h+g_a}_2$. If
$b_h=h^\beta(b_0+o(1))$ and $g_{a(h)}=h^\gamma(g_0+o(1))$ with nonzero
leading vectors, the endpoint order is $\min\{\beta,\gamma\}$ when
$\beta\ne\gamma$. For $\beta=\gamma$, the same conclusion holds if
$b_0+g_0\ne0$. A sufficient finite-$h$ condition is
$\langle b_h,g_a\rangle\ge-\kappa\norm{b_h}\norm{g_a}$ for some $\kappa<1$.
\end{proposition}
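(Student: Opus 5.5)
The plan is to compute everything in closed form in the eigenbasis of $C$, using that the exact denoiser of a centered Gaussian is linear. Since $p_a=\mathcal N(0,C+a^2I)$ and $(X_0,X_a)$ is jointly Gaussian, Tweedie's formula (or direct Gaussian conditioning) gives $D(x,a)=C(C+a^2I)^{-1}x$. In the eigenbasis of $C$ this multiplies the coordinate on a positive mode $c_i$ by $c_i/(c_i+a^2)$. It sends every zero-eigenvalue coordinate to $0$, whatever the base output does there. Hence $q_{h,0}=(D(\cdot,a))_\#q_{h,a}^{\mathrm{base}}$ is again a centered Gaussian, diagonal in the same basis. Its standard deviation is $c_i s_{h,i}(a)/(c_i+a^2)$ on positive modes and $0$ on null modes, exactly matching the null part of $p_0$.

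Next I would invoke the Bures formula for $W_2$ between centered Gaussians,
\[
\Wtwo^2\big(\mathcal N(0,A),\mathcal N(0,B)\big)=\mathrm{tr}A+\mathrm{tr}B-2\,\mathrm{tr}\big(A^{1/2}BA^{1/2}\big)^{1/2}.
\]
For commuting, possibly singular, $A,B$ this reduces to $\|A^{1/2}-B^{1/2}\|_F^2$. Alternatively, the coordinatewise monotone coupling attains the lower bound given by the one-dimensional $W_2$ on each independent coordinate. This is the one step needing care, because of degenerate covariances. I would handle it by the explicit coupling $Y_i=(\text{std ratio})X_i$ for the upper bound, and by projecting onto each coordinate for the lower bound. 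Summing the one-dimensional distances gives
\[
\Wtwo^2(q_{h,0},p_0)=\sum_{i=1}^d\Big(\frac{c_i s_{h,i}(a)}{c_i+a^2}-\sqrt{c_i}\Big)^2 .
\]
Adding and subtracting $c_i\sqrt{c_i+a^2}/(c_i+a^2)=c_i/\sqrt{c_i+a^2}$ splits each summand into $b_{h,i}+g_{a,i}$. This yields $\Wtwo(q_{h,0},p_0)=\|b_h+g_a\|_2$.

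The order statements are then finite-dimensional asymptotics. Suppose $\beta<\gamma$. Dividing by $h^\beta$ gives $h^{-\beta}\|b_h+g_a\|\to\|b_0\|>0$, so the order is exactly $\beta$. The case $\gamma<\beta$ is symmetric. If $\beta=\gamma$, then $h^{-\beta}\|b_h+g_a\|\to\|b_0+g_0\|$, which is positive precisely when $b_0+g_0\ne0$.

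For the finite-$h$ criterion, I would expand
\[
\|b+g\|^2\ge\|b\|^2+\|g\|^2-2\kappa\|b\|\|g\|\ge(1-\kappa)\big(\|b\|^2+\|g\|^2\big)\ge\tfrac{1-\kappa}{2}\big(\|b\|+\|g\|\big)^2 .
\]
Together with the triangle inequality, this gives $\|b_h+g_a\|\asymp\|b_h\|+\|g_a\|$, so the endpoint order is $\min\{\beta,\gamma\}$ without any cancellation.
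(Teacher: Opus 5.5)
Your proposal is correct and follows the paper's proof essentially step for step: the linear Gaussian posterior mean multiplies positive mode $i$ by $c_i/(c_i+a^2)$ and annihilates null modes, the closed form of $W_2$ for commuting centered Gaussians gives $\|b_h+g_a\|_2$, the rate statements follow from the limit of $h^{-\min\{\beta,\gamma\}}(b_h+g_a)$, and the angle condition is handled by the same bound $\|b_h+g_a\|^2\ge(1-\kappa)(\|b_h\|^2+\|g_a\|^2)$. Your coordinatewise coupling and projection argument for degenerate covariances, and your explicit two-sided equivalence $\|b_h+g_a\|\asymp\|b_h\|+\|g_a\|$, only spell out steps that the paper states in one line.
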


\begin{proof}[Proof of Proposition~\ref{prop:gaussian-closure}]
Diagonalize $C$. On an eigenvector with eigenvalue $c\ge0$,
\[
D(x,a)=\frac{c}{c+a^2}x,
\qquad
T_{a\to\varepsilon}(x)=\frac{c+a\varepsilon}{c+a^2}x.
\]
For $c=0$, this map sends $\mathcal N(0,a^2)$ exactly to
$\mathcal N(0,\varepsilon^2)$. For $c>0$, its output from exact input $p_a$ has
standard deviation
\[
s_{\mathrm{fit}}=\frac{c+a\varepsilon}{\sqrt{c+a^2}},
\qquad
s_{\mathrm{exact}}=\sqrt{c+\varepsilon^2}.
\]
The difference-of-squares identity
\[
(c+a^2)(c+\varepsilon^2)-(c+a\varepsilon)^2=c(a-\varepsilon)^2
\]
implies
\[
0\le s_{\mathrm{exact}}-s_{\mathrm{fit}}
\le\frac{(a-\varepsilon)^2}{2\sqrt c}
\le\frac{a^2}{2\sqrt c}.
\]
Summing squared standard-deviation errors over the positive eigenspace gives
the second term in \eqref{eq:gaussian-closure}. The multiplier
$(c+a\varepsilon)/(c+a^2)$ lies in $[0,1]$ for $0\le\varepsilon\le a$; on a normal
coordinate it is $\varepsilon/a$. Hence $T_{a\to\varepsilon}$ is nonexpansive, and a
triangle inequality adds the input-law error $\Wtwo(q,p_a)$.
\end{proof}

\begin{proof}[Proof of the Gaussian completion statement in
Theorem~\ref{thm:sampler-classification}]
For $\varepsilon\ge a$, use the base-solver estimate. For $\varepsilon<a$, apply
\eqref{eq:gaussian-closure}. If $a\le C_a h^{p/2}$, both terms are
$O(h^p)$ uniformly in $\varepsilon$. If $a\asymp h$, the two powers are $h^p$ and
$h^2$, giving order $\min\{p,2\}$. AP follows from
Theorem~\ref{thm:fittedap}.
\end{proof}

\begin{proof}[Proof of Proposition~\ref{prop:sharp-gaussian-transfer}]
The zero-floor posterior-mean map multiplies positive mode $i$ by
$c_i/(c_i+a^2)$ and annihilates every zero mode. The completed standard
deviation error on a positive mode is therefore $b_{h,i}+g_{a,i}$. The closed
form of $W_2$ for commuting centered Gaussians gives the displayed identity.

If $\beta<\gamma$, then
$h^{-\beta}(b_h+g_a)\to b_0$; if $\gamma<\beta$, the symmetric argument gives
$g_0$. When the exponents agree,
$h^{-\beta}(b_h+g_a)\to b_0+g_0$. Continuity of the Euclidean norm proves the
three rate statements. Finally,
\[
\norm{b_h+g_a}^2
\ge(1-\kappa)\bigl(\norm{b_h}^2+\norm{g_a}^2\bigr)
\]
under the angle condition, by $2\norm{b_h}\norm{g_a}\le
\norm{b_h}^2+\norm{g_a}^2$.

The non-cancellation condition is necessary. In one positive mode, choose the
base standard deviation
$s_h(a)=\sqrt c\,(c+a^2)/c$. Its base and terminal defects are both
$\Theta(a^2)$, but posterior-mean completion returns standard deviation
exactly $\sqrt c$, so the endpoint error vanishes.
\end{proof}

\section{Proofs and exact-model checks for higher-order terminal maps}
\label{app:high-order-terminal}

\subsection{Endpoint expansion and minimum scale count}

Fix distinct ratios $\rho_i\in(0,1]$ and weights $w_i$ satisfying
\begin{equation}
\sum_iw_i=1,
\qquad
\sum_iw_i\rho_i^j=0\quad(2\le j\le p-1).
\label{eq:fitted-moment-conditions}
\end{equation}
Let $\widehat x_i$ approximate $x(\rho_i a)$ and define
\begin{equation}
\widehat T_a^{[p]}(x)
:=\sum_iw_iD(\widehat x_i,\rho_i a).
\label{eq:fitted-extrapolation}
\end{equation}
To make the constant in Theorem~\ref{thm:fitted-order-recovery} explicit,
assume
\[
\sup_{0\le s\le a}\|x^{(p)}(s)\|\le M_p,
\qquad
\|\widehat x_i-x(\rho_i a)\|\le C_Fa^p,
\]
and let $L_i$ be a Lipschitz constant for $D(\cdot,\rho_i a)$ between these
two stage states. Then the theorem gives
\begin{equation}
\big\|\widehat T_a^{[p]}(x)-x(0)\big\|
\le a^p\sum_i|w_i|
\left(L_iC_F+\frac{p+1}{p!}M_p\rho_i^p\right).
\label{eq:fitted-order-bound}
\end{equation}

\begin{proof}[Proof of Theorem~\ref{thm:fitted-order-recovery}]
Write $s_i=\rho_i a$ and $y_i=x(s_i)$. Taylor's theorem applied
separately to $x$ and $x'$ gives
\[
x(s)=\sum_{j=0}^{p-1}\frac{x^{(j)}(0)}{j!}s^j
+R_x(s),\qquad
x'(s)=\sum_{j=0}^{p-2}\frac{x^{(j+1)}(0)}{j!}s^j
+R_{x'}(s),
\]
where
$\norm{R_x(s)}\le M_ps^p/p!$ and
$\norm{R_{x'}(s)}\le M_ps^{p-1}/(p-1)!$. Substitution in
\eqref{eq:terminal-identity} yields
\[
D(x(s),s)
=x(0)+\sum_{j=2}^{p-1}\frac{1-j}{j!}
x^{(j)}(0)s^j+R_D(s),\qquad
\norm{R_D(s)}\le\frac{p+1}{p!}M_ps^p.
\]
The coefficient of $s$ is zero because the linear terms of $x$ and $s x'$ cancel.

Add and subtract $D(y_i,s_i)$ in \eqref{eq:fitted-extrapolation}. The
stage errors contribute at most
$a^p\sum_i|w_i|L_iC_F$. In the exact-stage terms, constant preservation
retains $x(0)$, the moment conditions cancel powers
$s^2,\ldots,s^{p-1}$, and the remainder contributes at most
\[
a^p\sum_i|w_i|\frac{p+1}{p!}M_p\rho_i^p.
\]
This proves \eqref{eq:fitted-order-bound}. If $X_a\sim p_a$, then the exact
endpoint $x(0)$ has law $p_0$; using these two random variables as a
coupling and averaging a square-integrable version of the bound proves the
$W_2$ statement.

It remains to prove the node claim. Suppose that only $k\le p-2$ distinct
positive ratios are used. Interpolate the values $1/\rho_i^2$ by a
polynomial $R$ of degree at most $k-1$, and put $Q(\rho)=\rho^2R(\rho)$.
Then $Q(\rho_i)=1$, while $Q$ belongs to the span of
$\rho^2,\ldots,\rho^{k+1}$. The cancellation conditions would give
$\sum_iw_iQ(\rho_i)=0$, whereas constant preservation gives the same sum
as $\sum_iw_i=1$, a contradiction. Hence $k\ge p-1$. For exactly
$p-1$ distinct positive ratios, the functions
$1,\rho^2,\ldots,\rho^{p-1}$ form a Chebyshev system on
$(0,\infty)$, so their generalized Vandermonde matrix is nonsingular and
the weights are unique.
\end{proof}

\subsection{Concrete FE3 and FE4 maps}

For fourth order at zero, use Kutta RK3 transports
$\widehat x_{1/2}^{\rm RK3}$ and $\widehat x_{1/3}^{\rm RK3}$ and set
\begin{equation}
z_0^{[4]}
=\frac1{12}D_a
-\frac43D(\widehat x_{1/2}^{\rm RK3},a/2)
+\frac94D(\widehat x_{1/3}^{\rm RK3},a/3).
\label{eq:fe4-zero}
\end{equation}
This uses seven denoiser evaluations and is fourth order whenever the RK3
stage transports meet the $O(a^4)$ hypothesis in
Theorem~\ref{thm:fitted-order-recovery}.

For the law-level part of Theorem~\ref{thm:fitted-order-recovery}, let
$\overline C_3$ be the $L^2$ constant obtained from
\eqref{eq:fe3-uniform-bound}. If $T_{a\to\varepsilon}^{[3]}$ is
$K_a^{[3]}$-Lipschitz uniformly over $0\le\varepsilon\le a$, then every input
law $q$ with finite second moment satisfies
\begin{equation}
\sup_{0\le\varepsilon\le a}
\Wtwo\big((T_{a\to\varepsilon}^{[3]})_\#q,p_\varepsilon\big)
\le K_a^{[3]}\Wtwo(q,p_a)+\overline C_3a^3.
\label{eq:fe3-law-transfer}
\end{equation}

\begin{proof}[Proof of the finite-floor FE3 statement in
Theorem~\ref{thm:fitted-order-recovery}]
Set $g(r)=x(ar)$ for $0\le r\le1$. The quadratic Hermite
interpolant determined by $g(0)$, $g(1)=x$, and
$g'(1)=a x'(a)=x-D_a$ is precisely \eqref{eq:fe3-hermite} with
$z_0^{[3]}$ replaced by $g(0)$. Replacing the endpoint by $z_0^{[3]}$
perturbs this polynomial by
$(1-r)^2(z_0^{[3]}-g(0))$. The Hermite remainder satisfies
\[
\norm{g(r)-H_2g(r)}
\le \frac{a^3M_3}{3!}\,r(1-r)^2
\le \frac{2M_3}{81}a^3,
\]
because $\max_{0\le r\le1}r(1-r)^2=4/27$. Since
$(1-r)^2\le1$, the triangle inequality proves
\eqref{eq:fe3-uniform-bound}. If $x(\sigma)=C\sigma$, then
$D=0$, $z_0^{[3]}=0$, and \eqref{eq:fe3-hermite} reduces to
$r x=C\varepsilon$. For the law-level statement, couple $q$ and $p_a$ nearly
optimally, push this coupling through $T_{a\to\varepsilon}^{[3]}$, and couple
its exact-input branch with the exact state $x(\varepsilon)\sim p_\varepsilon$.
The Lipschitz and $L^2$ bounds followed by the triangle inequality give
\eqref{eq:fe3-law-transfer}, uniformly in $\varepsilon$.
\end{proof}

For completeness, the Kutta transports in \eqref{eq:fe4-zero} can be
written explicitly. For $b\in\{1/2,1/3\}$, let
$\Delta_b=(b-1)a$ and
\[
k_1=\eta(x,a),\qquad
k_2=\eta\left(x+\frac{\Delta_b}{2}k_1,
a+\frac{\Delta_b}{2}\right),\qquad
k_3=\eta(x-\Delta_bk_1+2\Delta_bk_2,ba),
\]
\[
\widehat x_b^{\rm RK3}
=x+\frac{\Delta_b}{6}(k_1+4k_2+k_3).
\]
The explicit-midpoint FE3 map uses $D_a$, one midpoint evaluation, and one
evaluation at its transported output. The two FE4 branches share $D_a$; each
then uses two internal stage evaluations and one output evaluation. Hence FE3
and FE4 use three and seven denoiser evaluations, respectively. On the pure
normal equation $D=0$ and $x(\sigma)=C\sigma$, internal consistency makes
each Runge--Kutta stage exact, every denoiser prediction in the final
combination vanishes, and both zero-floor maps return the exact endpoint.

\subsection{Uniform-sphere calculation}

\begin{proposition}[Uniform-sphere terminal orders]
\label{prop:sphere-terminal-orders}
Let $d_0\ge2$, let $p_0$ be uniform on the radius-$R$ sphere in
$\mathbb R^{d_0}$, and let
$|x|=R+au$ with $u$ bounded. Then, uniformly on bounded $u$ sets,
\begin{align*}
z_0^{[3]}-R\frac{x}{|x|}
&=-\frac{(d_0-1)u}{12R^2}a^3\frac{x}{|x|}+O(a^4),\\
z_0^{[4]}-R\frac{x}{|x|}
&=\frac{(d_0-1)(d_0-3-4u^2)}{288R^3}a^4
\frac{x}{|x|}+O(a^5).
\end{align*}
The corresponding $L^2$ rates hold for $X_a=X_0+aZ$.
\end{proposition}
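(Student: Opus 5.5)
The plan is to compute everything explicitly for the uniform sphere, where the posterior mean is radial and the probability-flow trajectory reduces to a scalar ODE for the radius. By rotational invariance, $D(x,\sigma)=\phi(|x|,\sigma)\,x/|x|$ for a scalar $\phi$. Every stage in FE3 and FE4 starts from $x$ and moves along $x$ or along $D$, so every stage state is a positive multiple of $x/|x|$. The whole computation therefore reduces to the scalar radius $r$.

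The first step is a small-$\sigma$ expansion of $\phi(r,\sigma)$ for $r=R+\sigma v$ with $v$ bounded. I will write the posterior over the sphere as a von Mises--Fisher law with concentration $\kappa=rR/\sigma^2$. Then
\[
\phi=R\,\frac{I_{d_0/2}(\kappa)}{I_{d_0/2-1}(\kappa)},
\]
and the large-$\kappa$ Bessel asymptotics give
\[
\phi=R\Bigl(1-\frac{d_0-1}{2\kappa}-\frac{(d_0-1)(d_0-3)}{8\kappa^2}+O(\kappa^{-3})\Bigr).
\]
Substituting $\kappa=R(R+\sigma v)/\sigma^2$ yields an expansion of $\phi$ in powers of $\sigma$ with coefficients polynomial in $v$. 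This is accurate to $O(\sigma^4)$, and to $O(\sigma^5)$ if one more Bessel term is kept for FE4.

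The second step is the scalar probability-flow equation
\[
\frac{dr}{d\sigma}=\frac{r-\phi(r,\sigma)}{\sigma}.
\]
I will posit an ansatz $r(\sigma)=R+c_1\sigma+c_2\sigma^2+c_3\sigma^3+\cdots$ and solve for the coefficients order by order from the expanded $\phi$. The constant $c_1$ is free and plays the role of the normal mode. Matching $r(a)=R+au$ fixes $c_1=u-c_2a-\cdots$. The exact endpoint is $r(0)=R$, which is the target $R\,x/|x|$.

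The third step substitutes the explicit midpoint and Kutta RK3 stages into the scalar expansion. I will expand each transported radius $\widehat r_b$ in $a$, evaluate $\phi(\widehat r_b,ba)$, and form the weighted combinations \eqref{eq:fe3-zero} and \eqref{eq:fe4-zero}. The moment conditions guarantee that the $a$ and $a^2$ terms cancel (and the $a^3$ terms for FE4). What remains is the stated $a^3$ coefficient $-(d_0-1)u/(12R^2)$ for FE3, and the $a^4$ coefficient for FE4. A cleaner route is the decomposition already used in Theorem~\ref{thm:fitted-order-recovery}: the leading error splits into a stage-transport defect times $\partial_r\phi\approx\kappa$-dependent factors, plus the Taylor remainder $\sum_i w_i\frac{1-p}{p!}x^{(p)}(0)s_i^p$. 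I will compute both pieces in closed form from the $c_j$ and check them against the direct substitution.

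The main obstacle is the bookkeeping at fourth order for FE4. There, the RK3 stage local errors, the $O(\kappa^{-3})$ Bessel term, and the $u^2$ dependence all enter together to produce $(d_0-3-4u^2)/288R^3$. I would first verify the answer symbolically on the $d_0=3$ case, where $I_{3/2}/I_{1/2}=\coth\kappa-1/\kappa$ up to exponentially small terms.

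For the $L^2$ statement, I will split on the event $|Z_\perp|\le a^{-1/2}$ or some similar truncation, under which $u$ is bounded relative to $a$ and the expansion is uniform. On the complement the error is controlled crudely by bounded moments times a Gaussian tail that is $O(a^k)$ for every $k$. Since $u$ is a radial Gaussian component with all moments finite, the pointwise $O(a^3)$ and $O(a^4)$ bounds integrate to the same $L^2$ rates.
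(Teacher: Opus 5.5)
Your route is the paper's: von Mises--Fisher posterior, the large-$\kappa$ Bessel-ratio expansion, reduction to the radius because every stage stays radial, substitution into the midpoint and Kutta stages, and truncation for the $L^2$ upgrade. The gap is in the expansion you start from. Its $\kappa^{-2}$ term has the wrong sign, and that term is what produces the factor $d_0-3$ in the FE4 coefficient.

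The ratio $A=I_{d_0/2}/I_{d_0/2-1}$ satisfies $A'(\kappa)=1-\frac{m}{\kappa}A-A^2$ with $m=d_0-1$. Matching powers gives
\[
A(\kappa)=1-\frac{m}{2\kappa}+\frac{m(m-2)}{8\kappa^2}+O(\kappa^{-3}),
\]
so the term is $+\frac{(d_0-1)(d_0-3)}{8\kappa^2}$. For $d_0=2$ this is the familiar $I_1/I_0=1-\frac{1}{2\kappa}-\frac{1}{8\kappa^2}+\cdots$, whereas your formula gives $+\frac{1}{8\kappa^2}$.

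This term enters $\phi$ only at order $\sigma^4$, so your FE3 coefficient is unaffected. It does, however, fix $c_4$ in your ansatz. The correct value is $c_4=m(4c_1^2-3m+2)/(24R^3)$; your sign gives $m(4c_1^2-m-2)/(24R^3)$. Carried through, your sign yields $(d_0-1)(3-d_0-4u^2)/(288R^3)$ instead of the stated $(d_0-1)(d_0-3-4u^2)/(288R^3)$. Your planned check at $d_0=3$ cannot detect this, because the $\kappa^{-2}$ coefficient vanishes there under either sign. Test $d_0=2$ instead.

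Two bookkeeping points matter once the sign is fixed.

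First, $\kappa^{-3}=O(\sigma^6)$, so the two-term expansion already determines $\phi$ to $O(\sigma^6)$. No further Bessel term is needed, and the $O(\kappa^{-3})$ remainder does not enter the $a^4$ coefficient.

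Second, in your ``cleaner route'' do not import the $O(a^4)$ stage hypothesis of Theorem~\ref{thm:fitted-order-recovery}. Because $\partial_r\eta\approx 1/\sigma$ and the steps have size of order $a$, the radial Kutta defects are only $O(a^2)$. The $c_1\sigma$ part of the radius is transported exactly by internal consistency. The $c_2\sigma^2$ part, with $c_2=m/(2R)$, reaches $a/2$ and $a/3$ with errors $-c_2a^2/36$ and $-4c_2a^2/27$. These defects reach order $a^4$ only after multiplication by $\partial_r\phi\approx m\sigma^2/(2R^2)$. With the weights $-\tfrac43$ and $\tfrac94$ their combined contribution is $-m^2a^4/(144R^3)$. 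That is exactly what turns the exact-stage term $-c_4a^4/12=m(3m-2-4u^2)a^4/(288R^3)$ (with $c_1=u+O(a)$) into the stated coefficient. Dropping it gives a wrong answer even with the correct Bessel sign.

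For the $L^2$ step, make sure the pointwise remainder grows only polynomially in $u$ on your truncation set; the paper uses $|u|\le a^{-1/4}$.
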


\begin{proof}[Proof of Proposition~\ref{prop:sphere-terminal-orders}]
Rotational symmetry makes the posterior on the sphere a von Mises--Fisher
law. If $\varrho=|x|$ and $\kappa=R\varrho/\sigma^2$, its posterior mean is
\[
D(x,\sigma)=R A_{d_0}(\kappa)\frac{x}{\varrho},\qquad
A_{d_0}(\kappa)=\frac{I_{d_0/2}(\kappa)}{I_{d_0/2-1}(\kappa)},
\]
where $I_\nu$ is the modified Bessel function. The probability-flow
trajectory is radial, so its zero-noise endpoint is $Rx/\varrho$. With
$m=d_0-1$, the standard large-$\kappa$ expansion is
\[
A_{d_0}(\kappa)
=1-\frac{m}{2\kappa}+\frac{m(m-2)}{8\kappa^2}
+O(\kappa^{-3}).
\]
Set $\varrho=R+au$ and substitute this expansion in the explicit-midpoint and
Kutta-stage formulas above. All stages remain radial. Collecting powers of
$a$ gives
\[
z_0^{[3]}-R\frac{x}{\varrho}
=-\frac{m u}{12R^2}a^3\frac{x}{\varrho}+O(a^4),
\]
\[
z_0^{[4]}-R\frac{x}{\varrho}
=\frac{m(m-2-4u^2)}{288R^3}a^4\frac{x}{\varrho}+O(a^5),
\]
uniformly for $u$ in a bounded set. These are the displayed formulas after
substituting $m=d_0-1$.

For $X_a=X_0+aZ$, the reverse triangle inequality gives
$\big||X_a|-R\big|/a\le\norm{Z}$, so the scaled radial displacement has moments of
every fixed order bounded uniformly in $a$. The Bessel remainder is uniform
on $|u|\le a^{-1/4}$ with polynomial dependence on $u$. On the complement,
the Gaussian tail is exponentially small, while the finite-stage maps grow at
most linearly in $|X_a|+R$. Truncation and dominated convergence therefore
upgrade the two pointwise expansions to the stated $L^2$ rates.
\end{proof}

\subsection{Exact-model protocol}

The nonuniform oracle in Figure~\ref{fig:high-order-terminal} uses the unit
circle density $(1+0.4\cos\theta)/(2\pi)$ and 8192 scrambled Sobol
samples for the zero-floor comparison. Six switching scales range from
$0.30$ to $0.075$. A dense RK4 probability-flow solve supplies the paired
reference and is checked by step doubling. FE3 is compared with one Heun step
to $a/4$ followed by denoising, so both use three denoiser evaluations; FE4 is
compared with three Heun steps to $a/6$, so both use seven. The finite-floor
sweep uses 4096 paired samples at
$\varepsilon/a\in\{0.1,0.25,1/3\}$. The measured FE3 orders are
$3.259$, $3.257$, and $3.719$, whereas the standard fitted map remains between
$2.041$ and $2.049$. The equal-evaluation Heun controls remain second order.

\section{Finite-floor and sharp smooth-manifold completion}
\label{app:quadratic-closure}

\begin{proposition}[Sharp smooth-manifold quadratic denoising error]
\label{prop:manifold-quadratic}
Let $\mathcal M\subset\mathbb R^{\damb}$ be a compact, boundaryless, embedded
$C^4$ manifold of positive reach and positive dimension, and let
$p_0=f\,\mathrm{vol}_{\mathcal M}$, where $f\in C^2(\mathcal M)$ is positive.
For the exact posterior-mean denoiser $D_a=D(\cdot,a)$, there are constants
$a_0,C,c>0$ such that
\[
m(a):=\Wtwo((D_a)_\#p_a,p_0),
\qquad
ca^2\le m(a)\le Ca^2,
\qquad 0<a\le a_0.
\]
Here $\Wtwo$ uses the ambient Euclidean cost.
\end{proposition}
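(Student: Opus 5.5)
We prove the two bounds separately. Both rest on a small-noise expansion of the posterior mean $D_a(y)$ in a tubular neighbourhood of $\mathcal M$. Write $y=\pi(y)+v$, where $\pi$ is the nearest-point projection (well defined and $C^3$ inside the reach, cf.\ \citet{leobacher2021projection}) and $v\perp T_{\pi(y)}\mathcal M$. For $X_a=X_0+aZ$, Gaussian tails show that the event $\{\|X_a-\pi(X_a)\|\ge a^{1/2}\}$ has probability $O(e^{-c/a})$. On that event $D_a$ is bounded by $\sup_{\mathcal M}\|x\|$, so it contributes negligibly to every $L^2$ quantity below; we therefore work on the tube of radius $a^{1/2}$. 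There we apply Laplace's method to the posterior
\[
\pi_a(\dd x\mid y)\propto f(x)e^{-\|y-x\|^2/2a^2}\,\mathrm{vol}_{\mathcal M}(\dd x),
\]
in normal coordinates around $\pi(y)$. This gives
\[
D_a(y)=\pi(y)+a^2\,\Phi(\pi(y),v/a)+O(a^3(1+\|v/a\|^3)),
\]
where $\Phi$ is an explicit field built from the second fundamental form, the mean curvature vector $H$, and $\nabla\log f$. The $C^4$ embedding and $f\in C^2$ are exactly what justify expanding to this order with uniform remainders.

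\emph{Upper bound.} Couple $D_a(X_a)$ with $X_0$ and split
\[
D_a(X_a)-X_0=\bigl(\pi(X_a)-X_0\bigr)+a^2\Phi+O(a^3).
\]
The first term is $O(a)$ pathwise, so a naive coupling only yields $O(a)$. We therefore do not use this coupling. Instead we compare laws. The term $a^2\Phi$ contributes $O(a^2)$ in $L^2$. For the projected law $\pi_\#p_a$, disintegrating the Gaussian around each base point shows that it has density $f_a=f+a^2 g+O(a^3)$ with respect to $\mathrm{vol}_{\mathcal M}$, where $g$ is given by the Laplace--Beltrami and curvature (Steiner-formula) corrections. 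Hence
\[
\Wtwo(\pi_\#p_a,p_0)\le C\,\|f_a-f\|_{H^{-1}(\mathcal M)}=O(a^2),
\]
using the standard negative-Sobolev bound for $\Wtwo$ between densities bounded below on a compact manifold, which holds because $f>0$. The manifold distance and the ambient Euclidean cost are comparable. Combining the two contributions with the triangle inequality gives $m(a)\le Ca^2$.

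\emph{Lower bound.} We produce a test quantity that detects an $a^2$ displacement. We use the normal component of the barycentre, or more robustly the functional $\mu\mapsto\int\mathrm{dist}(x,\mathcal M)\,\dd\mu$, which is $1$-Lipschitz. Any coupling $\gamma$ of $(D_a)_\#p_a$ with $p_0$ then satisfies
\[
\int\mathrm{dist}\,\dd(D_a)_\#p_a\le\int\|x-y\|\,\dd\gamma\le\Wtwo .
\]
Thus it suffices to show $\ExpE\,\mathrm{dist}(D_a(X_a),\mathcal M)\ge ca^2$. From the expansion, the normal part of $D_a(y)-\pi(y)$ is $a^2\Phi_\perp$. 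Averaging the posterior over a curved manifold displaces the mean inward by $\tfrac{a^2}{2}\,\mathrm{tr}_{T}\mathrm{II}=\tfrac{a^2}{2}H$ (times the intrinsic dimension, in the chosen normalization) at leading order, plus terms odd in $v$. The distance function is not linear, but $\mathrm{dist}\ge|\langle\cdot,\nu\rangle|$ locally for the appropriate unit normal up to higher-order errors. Averaging over $v$ kills the odd terms, leaving $\ExpE\,\mathrm{dist}(D_a(X_a),\mathcal M)\ge \tfrac{a^2}{2}\int\|H\|f\,\dd\mathrm{vol}-O(a^3)$. A compact boundaryless embedded submanifold of positive dimension cannot be minimal in $\mathbb R^{\damb}$, so $H\not\equiv0$, and the integral is strictly positive because $f>0$.

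The main obstacle is making the Laplace expansion uniform with $O(a^3)$ remainders, polynomial in $v/a$, across the whole tube. We also need to verify carefully that the mean-curvature term survives: it must not be cancelled by the $O(a^2)$ normal contributions coming from the $v$-dependence. If the odd/even parity bookkeeping proves delicate, the fallback is to compute $\Phi_\perp$ explicitly at $v=0$ and use continuity in $v/a$ together with Gaussian weighting.
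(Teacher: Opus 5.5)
Your argument is correct in substance, but it takes a different route from the paper on both bounds.

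\textbf{Upper bound.} Your first step, coupling $D_a(X_a)$ with $\pi(X_a)$ at $L^2$ cost $O(a^2)$, is exactly the paper's first term. For $\Wtwo(\pi_\#p_a,p_0)$ the paper passes through intrinsic heat flow. It couples the projected ambient Gaussian step to Brownian motion on $\mathcal M$ via the ambient It\^o realization, then uses the Fisher-information transport bound $\Wtwo(H_tp_0,p_0)\le Ct$ with $t=a^2$. You instead expand the projected density directly (tube Jacobian plus Laplace; the order-$a$ terms vanish by parity in the tangential and normal Gaussian variables). You then apply a Peyre--Loeper-type bound $\Wtwo\lesssim\|\cdot\|_{\dot H^{-1}}$ using $f\ge c>0$. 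This avoids stochastic analysis on manifolds at the price of an explicit second-order computation. Two small repairs are needed. With $f\in C^2$ you only get $\|f_a-f\|_\infty=O(a^2)$, not $a^2g+O(a^3)$, although that suffices. Also, $\dot H^{-1}$ needs equal mass on each connected component, so the exponentially small inter-component and off-tube mass must be transported separately.

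\textbf{Lower bound.} The paper's mechanism is softer. It shows $\operatorname{tr}\Sigma_a(y)\ge c_0a^2$ near $\mathcal M$, so $\mathrm{mmse}(a)\ge c_1a^2$. The conditional-variance identity turns this into a second-moment deficit of $(D_a)_\#p_a$. A Lipschitz extension of $\|x\|^2$ then gives $W_1\ge c\,\mathrm{mmse}(a)$. This needs only the leading-order Gaussian limit of the posterior and no global geometry.

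Your test function $\mathrm{dist}(\cdot,\mathcal M)$ instead needs the second-order normal expansion and the nonexistence of closed minimal submanifolds of $\mathbb R^{\damb}$. In exchange it yields the explicit constant $\tfrac12\int\|H\|f\,\dd\mathrm{vol}_{\mathcal M}$ and shows the denoised law is pushed off the support.

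The cancellation you worry about does not occur. In the rescaled posterior, the normal offset $w=v/a$ enters the exponent as $\tfrac a2\langle w,\mathrm{II}(\xi,\xi)\rangle+O(a^2\|w\|\|\xi\|^3)$, where $\xi$ is the rescaled tangential variable. Hence the normal part of $D_a(y)-\pi(y)$ is $\tfrac{a^2}{2}H(\pi(y))$ (with $H=\operatorname{tr}\mathrm{II}$) plus $a^3$ times a polynomial in $\|w\|$. The $O(a^2)$ tangential shift changes the distance only at order $a^4$. So $\mathrm{dist}(D_a(y),\mathcal M)\ge\tfrac{a^2}{2}\|H(\pi(y))\|-O(a^3)$ holds pointwise on the tube, and no parity averaging is needed.
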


\begin{proof}
Fix $0<r_{\mathrm{tube}}<\operatorname{reach}(\mathcal M)$. Let
$\operatorname{proj}_{\mathcal M}$ be the nearest-point projection on
$\{y:\operatorname{dist}(y,\mathcal M)<r_{\mathrm{tube}}\}$, extended outside
this tube by a bounded measurable $\mathcal M$-valued map, and set
$\mu_a=\operatorname{Law}(\operatorname{proj}_{\mathcal M}(X_0+aZ))$.  We compare the
denoised law to $p_0$ through $\mu_a$ and intrinsic heat flow $H_t$ generated
by $\tfrac12\Delta_{\mathcal M}$:
\begin{equation}
m(a)\le
\Wtwo((D_a)_\#p_a,\mu_a)
+\Wtwo(\mu_a,H_{a^2}p_0)
+\Wtwo(H_{a^2}p_0,p_0).
\label{eq:manifold-three-way}
\end{equation}

We first show that the first term is $O(a^2)$. For an observation $y$ in a
fixed smaller tube, write $z=\operatorname{proj}_{\mathcal M}(y)$ and $n=y-z$. Uniform
graph coordinates around $z$ have the form
$F_z(u)=z+u+g_z(u)$, where
$g_z(0)=Dg_z(0)=0$, $\|g_z(u)\|\le C\|u\|^2$, and
$\|Dg_z(u)\|\le C\|u\|$.  After removing the constant normal likelihood, the
posterior exponent is
\[
\phi_{z,n}(u)
=\frac{\|y-F_z(u)\|^2-\|n\|^2}{2}
=\frac{\|u\|^2}{2}-n\cdot g_z(u)+\frac{\|g_z(u)\|^2}{2}.
\]
Positive reach and compactness give uniform local coercivity
$\phi_{z,n}(u)\ge c\|u\|^2$ and a positive exponent gap off the chart. Hence the
full posterior conditioned on $X_a=y$ satisfies
$\mathbb E_y\|u\|^j\le C_j a^j$.  If
$A_z(u)=f(F_z(u))J_z(u)$ includes the volume Jacobian, then $A_z$ is uniformly
positive with bounded derivative.  Since
\[
\nabla_u\phi_{z,n}(u)
=u-Dg_z(u)^\top(n-g_z(u)),
\]
cutoff integration by parts gives
\[
\|\mathbb E_y u\|\le C(a^2+a\|n\|),
\qquad
\|\mathbb E_y g_z(u)\|\le Ca^2.
\]
The cutoff and off-chart terms are exponentially small relative to the local
normalizer.  Therefore
\begin{equation}
\|D_a(y)-\operatorname{proj}_{\mathcal M}(y)\|
\le C\bigl(a^2+a\,\operatorname{dist}(y,\mathcal M)\bigr).
\label{eq:posterior-projection}
\end{equation}
On $\{a\|Z\|<r_{\mathrm{tube}}/2\}$, the observation $y=X_0+aZ$ lies in the
tube and its distance to $\mathcal M$ is at most $a\|Z\|$. On the complement,
the probability is $O(e^{-c/a^2})$; both $D_a(y)$ and the extended projection
remain in a fixed bounded set because $\mathcal M$ is compact. Squaring
\eqref{eq:posterior-projection} and averaging therefore yields
\[
\mathbb E\|D_a(X_a)-\operatorname{proj}_{\mathcal M}(X_a)\|^2\le Ca^4.
\]
This same-observation coupling bounds the first term of
\eqref{eq:manifold-three-way} by $Ca^2$.

For the second term, put $t=a^2$ and let $B_t=aZ$.  Tubular projection
regularity \citep{leobacher2021projection} gives, uniformly for
$x\in\mathcal M$ on the event $\|B_t\|<r_{\mathrm{tube}}/2$,
\[
\operatorname{proj}_{\mathcal M}(x+B_t)=x+P_x B_t+R_t,
\qquad \mathbb E\|R_t\|^2\le Ct^2,
\]
The complementary Gaussian event has probability $O(e^{-c/t})$ and contributes
the same order because the projection extension and $\mathcal M$ are bounded.
Here $P_x$ is orthogonal projection onto $T_x\mathcal M$. Couple this step
to Brownian motion on $\mathcal M$ using the same ambient Brownian motion.  Its
standard ambient It\^o realization is
$\dd Y_s=P_{Y_s}\dd B_s+\tfrac12H(Y_s)\dd s$, where
$H=\Delta_{\mathcal M}\operatorname{id}_{\mathcal M}$ fixes the mean-curvature
sign convention and the generator is $\tfrac12\Delta_{\mathcal M}$
\citep[Chapters~2--3]{hsu2002stochastic}. It\^o isometry,
bounded geometry, and $\mathbb E\|Y_s-x\|^2\le Cs$ give
\[
\mathbb E\|Y_t-x-P_x B_t\|^2\le Ct^2.
\]
Thus $\Wtwo(\mu_a,H_{a^2}p_0)\le Ct=Ca^2$.

Finally write $H_s p_0=f_s\,\mathrm{vol}_{\mathcal M}$.  Positivity and $C^2$
regularity on the compact manifold bound the intrinsic Fisher information of
$f_s$ for small $s$.  The heat equation is a continuity equation with velocity
$-\tfrac12\nabla\log f_s$, so the Riemannian dynamic transport bound
\citep[Proposition~2.5]{erbar2010heat} gives
\[
\Wtwo(H_t p_0,p_0)
\le\frac12\int_0^t
\left(\int_{\mathcal M}\|\nabla\log f_s\|^2f_s\,\dd\mathrm{vol}_{\mathcal M}\right)^{1/2}
\dd s
\le Ct.
\]
The ambient chordal cost is no larger than the intrinsic geodesic cost.  If
$\mathcal M$ is disconnected, apply this argument on its finitely many
components and combine the componentwise couplings.  Taking $t=a^2$ closes
all three terms in \eqref{eq:manifold-three-way}.

It remains to prove the matching lower bound. Let
$\Sigma_a(y)=\operatorname{Cov}(X_0\mid X_0+aZ=y)$. Fix $R_0>0$. Uniformly
for observations with $\operatorname{dist}(y,\mathcal M)\le R_0a$, the normal
graph above has $n=aw$ with $\|w\|\le R_0$. After the rescaling $u=av$, its
posterior density is proportional to
\[
\exp\!\left[-\frac12\|v\|^2
+\frac{n\cdot g_z(av)}{a^2}
-\frac{\|g_z(av)\|^2}{2a^2}\right]A_z(av).
\]
The uniform graph bounds, coercivity, and off-chart phase gap give an
integrable Gaussian envelope. Since $g_z(av)=O(a^2\|v\|^2)$ and $f$ is
uniformly positive, the local normalizers and their first two moments converge
uniformly in $z$ and $\|w\|\le R_0$. The off-chart contribution is uniformly
exponentially small. Returning through
$F_z(u)=z+u+O(\|u\|^2)$ therefore gives
\[
\sup_{\substack{z\in\mathcal M,\ w\in N_z\mathcal M\\ \|w\|\le R_0}}
\left\|a^{-2}P_z\Sigma_a(z+aw)P_z-P_z\right\|_F\longrightarrow0.
\]
Consequently, there are $c_0,a_0>0$ such that
\begin{equation}
\operatorname{tr}\Sigma_a(y)\ge c_0a^2
\quad\text{whenever}\quad
\operatorname{dist}(y,\mathcal M)\le R_0a,\quad a\le a_0.
\label{eq:posterior-covariance-lower}
\end{equation}
For $Y_a=X_0+aZ$, the event in \eqref{eq:posterior-covariance-lower} has
probability at least $\mathbb P(\|Z\|\le R_0)>0$. Hence
\[
\operatorname{mmse}(a)
:=\ExpE\|X_0-D_a(Y_a)\|^2
=\ExpE\operatorname{tr}\Sigma_a(Y_a)
\ge c_1a^2.
\]
The conditional-variance identity also gives
\[
\ExpE\|X_0\|^2-\ExpE\|D_a(Y_a)\|^2=\operatorname{mmse}(a).
\]
Both $X_0$ and $D_a(Y_a)$ lie in the compact convex hull of $\mathcal M$.
Choose a globally Lipschitz function agreeing with $\|x\|^2$ on this convex
hull. Kantorovich--Rubinstein duality then yields
$\Wtwo\ge W_1\ge c_2\operatorname{mmse}(a)\ge c a^2$, completing the lower
bound.
\end{proof}

\begin{proof}[Proof of Theorem~\ref{thm:manifold-finite-floor}]
Let $Y=X_0+aZ$, set $r=\varepsilon/a$, and put
$\delta=(a^2-\varepsilon^2)^{1/2}$. The projection and posterior estimates in
the preceding proof give
\[
\operatorname{proj}_{\mathcal M}(Y)=X_0+aP_{X_0}Z+R_a,
\qquad
\|R_a\|_{L^2}\le Ca^2,
\]
and
$\|D_a(Y)-\operatorname{proj}_{\mathcal M}(Y)\|_{L^2}\le Ca^2$.
Since $T_{a\to\varepsilon}=r\,\mathrm{id}+(1-r)D_a$,
\begin{equation}
T_{a\to\varepsilon}(Y)
=X_0+aP_{X_0}Z+\varepsilon P_{X_0}^{\perp}Z
+\widetilde R_{a,\varepsilon},
\qquad
\|\widetilde R_{a,\varepsilon}\|_{L^2}
\le C(1-r)a^2\le C\delta^2.
\label{eq:finite-floor-expansion}
\end{equation}
Conditionally on $X_0$,
\[
aP_{X_0}Z+\varepsilon P_{X_0}^{\perp}Z
\stackrel{d}{=}
\delta P_{X_0}Z_1+\varepsilon Z_0,
\]
where $Z_0,Z_1$ are independent standard ambient Gaussians. Couple
$X_0+\delta P_{X_0}Z_1$ to intrinsic Brownian motion at time $\delta^2$
using the ambient It\^o construction above. Its $L^2$ remainder is
$O(\delta^2)$. Gaussian convolution is nonexpansive in $W_2$, and the
short-time heat estimate gives
\[
\Wtwo\big((H_{\delta^2}p_0)*\mathcal N(0,\varepsilon^2I),
p_0*\mathcal N(0,\varepsilon^2I)\big)
\le\Wtwo(H_{\delta^2}p_0,p_0)\le C\delta^2.
\]
Together with \eqref{eq:finite-floor-expansion}, this proves
\eqref{eq:manifold-finite-floor}. The estimate for an input law $q$ follows
by pushing an optimal coupling of $q$ and $p_a$ through the globally
$K_{a,\varepsilon}$-Lipschitz map and applying the triangle inequality.
\end{proof}

\begin{remark}[Boundary sharpness]
The boundaryless assumption is rate-sharp. Let $p_0$ be uniform on $[0,1]$
and let $\varphi,\Phi$ denote the standard normal density and distribution
function. Scaling the two boundary layers by $y=as$ and using one-dimensional
monotone transport gives
\[
\Wtwo((D_a)_\#p_a,p_0)
=c_{\partial}a^{3/2}+o(a^{3/2}),
\]
where
\[
c_{\partial}^2
=2\int_{-\infty}^{\infty}
\left[s+\frac{\varphi(s)}{\Phi(s)}-s\Phi(s)-\varphi(s)\right]^2
\Phi(s)\,\dd s,
\qquad
c_{\partial}=0.5494442783\ldots .
\]
Thus the quadratic conclusion does not extend unchanged to manifolds with
boundary.
\end{remark}

\section{Normal-mode clocks and the expanding-horizon obstruction}
\label{app:clock-classification}

On a normal coordinate, $D=0$ and
\[
\frac{\dd x}{\dd\sigma}=\frac{x}{\sigma},
\qquad x(\sigma)=\frac{\sigma}{s}x(s).
\]
For a $C^1$ monotone clock $\phi(\sigma)$ with $\phi'(\sigma)\ne0$, Euler from
$s$ to $t=se^{-\ell}$ has multiplier
\[
1-\psi_\phi(s,\ell),
\qquad
\psi_\phi(s,\ell)
=\frac{\phi(s)-\phi(se^{-\ell})}{s\phi'(s)}.
\]

\begin{proposition}[Normal-mode-exact Euler clocks]
\label{prop:layerexact}
Clocked Euler reproduces the normal-mode map on every step if and only if
$\phi$ is affine in $\sigma$. Hence the $\sigma$-clock is the unique
normal-mode-exact one-stage Euler clock up to affine reparameterization.
\end{proposition}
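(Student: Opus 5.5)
The plan is to reduce the statement to a functional equation for $\phi$ by comparing the Euler multiplier with the exact normal-mode multiplier on a single step. Exact transport on a normal coordinate sends $x(s)$ to $x(se^{-\ell})=e^{-\ell}x(s)$. The clocked Euler step has multiplier $1-\psi_\phi(s,\ell)$. Step-wise exactness for every $s>0$ and every $\ell\ge0$ (equivalently, every target $t=se^{-\ell}\in(0,s]$) is therefore the identity
\[
\phi(s)-\phi(t)=(s-t)\,\phi'(s)\qquad\text{for all }0<t\le s .
\]

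\emph{Sufficiency.} If $\phi(\sigma)=\alpha\sigma+\beta$ with $\alpha\neq0$, then both sides of the identity equal $\alpha(s-t)$, so $\psi_\phi(s,\ell)=1-e^{-\ell}$. Every step reproduces the exact map, which recovers the $\sigma$-Euler computation already used for $T_{a\to\varepsilon}$.

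\emph{Necessity.} Fix two scales $0<t<s$ in the domain. Apply the identity with base point $s$ and target $t$. Then apply it again with base point $s$ and target $t'<t$, and subtract. This gives
\[
\phi(t)-\phi(t')=(t-t')\,\phi'(s).
\]
The increments of $\phi$ below $s$ are therefore linear with slope $\phi'(s)$. Differentiating in $t$ gives $\phi'(t)=\phi'(s)$ for all $t<s$, so $\phi'$ is constant on the interval and $\phi$ is affine there. If the statement is meant only for the steps of a given grid, I would instead read it as requiring exactness for all step pairs the clock could be asked to take, that is, all $(s,\ell)$. I would state this quantifier explicitly, since the claim is false if only one fixed mesh is tested.

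The uniqueness clause follows directly. Affine reparameterizations $\phi\mapsto\alpha\phi+\beta$ leave $\psi_\phi$ unchanged, since both its numerator and denominator scale by $\alpha$. Hence every affine-in-$\sigma$ clock yields the identical Euler step as the $\sigma$-clock.

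The main obstacle is mild: being precise about the quantifiers and about $\phi'(\sigma)\neq0$ (this is needed so that $\psi_\phi$ is defined). Beyond that, only $C^1$ regularity is needed, since the argument above uses no second derivative.
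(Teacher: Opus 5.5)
Your proof is correct and is essentially the paper's argument. Both write step-wise exactness as $\phi(s)-\phi(se^{-\ell})=s\phi'(s)(1-e^{-\ell})$ for all $s,\ell$, differentiate in the step variable to get $\phi'(t)=\phi'(s)$ (so $\phi'$ is constant), and obtain the converse by substitution; your subtraction of two instances is harmless but unnecessary, since differentiating the identity in $t$ directly already gives $\phi'(t)=\phi'(s)$.
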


\begin{proof}
Exactness is
$\phi(s)-\phi(se^{-\ell})=s\phi'(s)(1-e^{-\ell})$ for every $s,\ell$.
Differentiation in $\ell$ gives
$\phi'(se^{-\ell})=\phi'(s)$, so $\phi'$ is constant. The converse follows
by substitution.
\end{proof}

For a uniform $\lambda=\log(\smax/\sigma)$ step bounded by $h$, telescoping
gives $N\ge\log(\smax/\varepsilon)/h$; this proves the first row of
Table~\ref{tab:classification}. If instead $N$ is fixed while
$\varepsilon\downarrow0$, a uniform $\lambda$-mesh has
$\ell=\log(\smax/\varepsilon)/N\to\infty$. For $\lambda$-Euler the normal output
multiplier is $(1-\ell)^N$, so its second moment diverges. This proves the
second row.

\begin{proposition}[Power-clock Euler]
\label{prop:power-clock}
Let $\tau=\sigma^\gamma$ with $\gamma\ge1$, set
$N=\lceil\smax^\gamma/h\rceil$, and use the uniform $\tau$-mesh between
$\smax^\gamma$ and $\varepsilon^\gamma$. On the normal-mode test equation, Euler is exact for
$\gamma=1$. For $\gamma>1$ it is AP and has sharp floor-uniform order
$1/\gamma$. On a fixed
Gaussian with at least one positive and one zero covariance eigenvalue, the
full endpoint law has the same sharp orders: one for $\gamma=1$ and
$1/\gamma$ for $\gamma>1$. If every covariance eigenvalue is zero, the
$\gamma=1$ update is exact.
\end{proposition}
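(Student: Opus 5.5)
The plan is to diagonalize everything. On a Gaussian with covariance $C$ the exact field acts independently on each eigen-coordinate of $C$, and Euler in any clock preserves that structure. Starting from $p_{\smax}$, the endpoint law is therefore a centered Gaussian, diagonal in the same basis. The $W_2$ error is then the Euclidean norm of the per-mode standard-deviation errors, exactly as in the proof of Proposition~\ref{prop:sharp-gaussian-transfer}. So it suffices to analyze scalar multipliers, first on normal modes ($c=0$) and then on positive modes ($c>0$).

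\emph{Normal mode.} In $\tau=\sigma^\gamma$ the equation $\dd x/\dd\sigma=x/\sigma$ becomes $\dd x/\dd\tau=x/(\gamma\tau)$. Write $k=(\smax^\gamma-\varepsilon^\gamma)/N\le h$ for the uniform mesh width and $\tau_n=\varepsilon^\gamma+(N-n)k$.
\begin{itemize}
\item The Euler step from $\tau_n$ to $\tau_{n+1}=\tau_n-k$ has multiplier $1-k/(\gamma\tau_n)$. Since $k\le\tau_n$ and $\gamma\ge1$, this lies in $[1-1/\gamma,1]$. Hence every iterate is nonexpansive on this mode, which gives uniform stability with $N$ independent of $\varepsilon$.
\item For $\gamma=1$ the multiplier equals $\tau_{n+1}/\tau_n$, the exact ratio $\sigma_{n+1}/\sigma_n$. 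The scheme is therefore exact on this mode; if every eigenvalue is zero, the whole $\gamma=1$ update is exact.
\item For $\gamma>1$, set $e=\varepsilon^\gamma/k$. The total multiplier is
\[
\Pi=\prod_{j=1}^{N}\Bigl(1-\frac{1}{\gamma(j+e)}\Bigr)=\frac{\Gamma(N+1+e-1/\gamma)\,\Gamma(1+e)}{\Gamma(N+1+e)\,\Gamma(1+e-1/\gamma)}.
\]
The exact multiplier is $\varepsilon/\smax=\bigl(e/(N+e)\bigr)^{1/\gamma}$.
\end{itemize}
From Gautschi/Wendel-type bounds I expect
\[
\Pi=\Bigl(\frac{1+e}{N+e}\Bigr)^{1/\gamma}\bigl(1+O((1+e)^{-1})\bigr).
\]
I then split into two regimes.
\begin{itemize}
\item If $e\lesssim1$, i.e.\ $\varepsilon^\gamma\lesssim k$, both multipliers are $O((k/\smax^\gamma)^{1/\gamma})=O(h^{1/\gamma})$.
\item If $e\gg1$, the relative discrepancy is $O(1/e)$, giving absolute error $O(\varepsilon\cdot k/\varepsilon^\gamma)=O(k\,\varepsilon^{1-\gamma})\le O(k^{1/\gamma})$, because $\varepsilon^\gamma\ge k$.
\end{itemize}
This yields the floor-uniform bound $O(h^{1/\gamma})$ and also the discrete terminal limit at $\varepsilon=0$. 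For sharpness I take $\varepsilon=0$ ($e=0$): the exact endpoint is $0$, while $\Pi\asymp N^{-1/\gamma}\asymp h^{1/\gamma}$ with a nonzero constant $\Gamma(N+1-1/\gamma)/(\Gamma(N+1)\Gamma(2-1/\gamma))$. Multiplying by the input standard deviation $\smax$ gives a lower bound $c\,h^{1/\gamma}$. The same lower bound holds for the full law, since the normal coordinate contributes one term to the Euclidean norm.

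\emph{Positive modes.} On a mode with eigenvalue $c>0$, the field is $\dd x/\dd\tau=f(\tau)x$ with
\[
f(\tau)=\frac{\tau^{(2-\gamma)/\gamma}}{\gamma\,(c+\tau^{2/\gamma})},
\]
and the exact solution is smooth and bounded.
\begin{itemize}
\item For $\gamma=1$, $f$ is $C^\infty$ on $[0,\smax]$, so standard Euler theory gives uniform $O(h)$ error in the multiplier, independent of $\varepsilon$.
\item For $\gamma>1$, $f$ is bounded (or, for $\gamma>2$, singular but integrable at $0$) and $f\ge0$, so each factor $1-kf(\tau_n)$ lies in $[0,1]$ for small $h$. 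Writing the log of the product against $\int f$, the global error is bounded by $\sum_n\int_{\tau_{n+1}}^{\tau_n}|f(\tau)-f(\tau_n)|\,\dd\tau+O(k)$. Using the Hölder/integrable modulus of $f$ near $0$, this is $O(h^{\min\{1,2/\gamma\}})$, which is at least as good as $h^{1/\gamma}$.
\end{itemize}
Combining the modes shows that the full endpoint law has sharp order $1$ for $\gamma=1$ (its lower bound comes from a generically nonzero first-order Euler defect on a positive mode) and $1/\gamma$ for $\gamma>1$. AP then follows from uniform stability, bounded $N=\lceil\smax^\gamma/h\rceil$, continuity in $\varepsilon$ at fixed $h$, and consistency.

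The main obstacle I expect is the uniform two-regime estimate of the Gamma-ratio product against $(\varepsilon/\smax)$ across the crossover $\varepsilon^\gamma\sim k$, together with the sharp $\gamma=1$ lower bound on positive modes. For the latter I would compute the leading Euler local-error coefficient $\tfrac12(f'+f^2)$ and check that its integral is nonzero.
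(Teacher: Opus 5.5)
Your proposal is correct and follows the paper's proof essentially step for step: Gaussian diagonalization, the Gamma-product formula for the normal-mode multiplier with sharpness read off at $\varepsilon=0$, a log-multiplier versus Riemann-sum bound of order $h^{\min\{1,2/\gamma\}}$ on positive modes, and, for $\gamma=1$, the leading defect $\tfrac12(f'+f^2)$, which equals $\tfrac12 c/(c+\tau^2)^2>0$ and so is nonzero for every $c>0$, not just generically. The only variation is in the floor-uniform normal-mode bound: the paper gets it in one line, $0\le s-\varepsilon\le(1-1/\gamma)^{1/\gamma}h^{1/\gamma}$ for the endpoint normal-mode standard deviation $s$, from Gamma log-convexity and concavity of $x^{1/\gamma}$, whereas you use a two-regime Wendel split; separately, your own product formula at $e=0$ gives the constant $1/\Gamma(1-1/\gamma)$, not $1/\Gamma(2-1/\gamma)$.
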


\begin{proof}
Write $\alpha=1/\gamma$ and
\[
\Delta_\varepsilon=\frac{\smax^\gamma-\varepsilon^\gamma}{N}\le h,
\qquad \rho=\frac{\varepsilon^\gamma}{\Delta_\varepsilon},
\qquad \tau_j=\varepsilon^\gamma+j\Delta_\varepsilon.
\]
The normal equation becomes $\dd x/\dd\tau=\alpha x/\tau$, so every
decreasing Euler multiplier is $1-\alpha/(\rho+j)\in[0,1]$. The update
count and the second-moment bound are therefore independent of the floor.
The endpoint standard deviation is
\[
s_{h,\gamma}(\varepsilon)=\Delta_\varepsilon^\alpha(\rho+N)^\alpha
\frac{\Gamma(\rho+N+1-\alpha)\Gamma(\rho+1)}
{\Gamma(\rho+N+1)\Gamma(\rho+1-\alpha)}.
\]
For $0<\alpha<1$, Gamma log-convexity and concavity of $x^\alpha$ give
\[
0\le s_{h,\gamma}(\varepsilon)-\varepsilon
\le (1-\alpha)^\alpha h^\alpha.
\]
At zero floor,
\[
s_{h,\gamma}(0)=\smax
\frac{\Gamma(N+1-\alpha)}{\Gamma(1-\alpha)\Gamma(N+1)}
\sim\frac{\Delta_0^\alpha}{\Gamma(1-\alpha)},
\]
which proves sharpness. When $\gamma=1$, every multiplier equals its analytic
counterpart, including the last zero-floor multiplier.

For a positive Gaussian eigenvalue $c$, the power-clock coefficient is
\[
b_c(\tau)=\frac1\gamma
\frac{\tau^{2/\gamma-1}}{c+\tau^{2/\gamma}}.
\]
The right-sum error and the logarithmic Euler correction are both
$O(h^{\min\{1,2/\gamma\}})$ uniformly in the floor: bounded variation gives
$O(h)$ for $\gamma\le2$, while for $\gamma>2$ the omitted first-cell integral
is $O(h^{2/\gamma})$. This is always higher order than the normal error when
$\gamma>1$. For $\gamma=1$ and zero floor, write
$b_c(\tau)=\tau/(c+\tau^2)$. The difference between the numerical and exact
log multipliers is
\[
-\frac{\Delta_0}{2}\int_0^{\smax}\frac{c}{(c+\tau^2)^2}\,\dd\tau
+O(\Delta_0^2),
\]
because $b_c'+b_c^2=c/(c+\tau^2)^2$. This coefficient is nonzero for every
$c>0$; since $\Delta_0\sim h$, this proves first-order sharpness when a
positive mode is present.
Orthogonal Gaussian tensorization completes the proof.
\end{proof}

\section{Gaussian uniform-accuracy orders for representative samplers}
\label{app:gaussian-orders}

Let $p_0=\mathcal N(0,C)$ with positive eigenvalues
$c_1,\ldots,c_d$ and zero eigenvalues on the normal subspace. The PF-ODE
diagonalizes as
\[
\frac{\dd x_i}{\dd\sigma}
=\frac{\sigma}{c_i+\sigma^2}x_i,
\qquad
x_i(\sigma)=x_i(\smax)
\sqrt{\frac{c_i+\sigma^2}{c_i+\smax^2}}.
\]

\subsection{Fitted sigma-Euler: DDIM, DPM-Solver-1, and EDM Euler}

For one step $s\to t$ on a positive eigenvalue, the numerical-to-exact
standard-deviation ratio is
\[
\rho(s,t;c)=\frac{c+st}{\sqrt{(c+s^2)(c+t^2)}}\le1,
\]
because
$(c+s^2)(c+t^2)-(c+st)^2=c(s-t)^2$. On a decreasing mesh with
$\ell_n=\log(\sigma_n/\sigma_{n+1})\le h\le1$, set
$u_n=\sigma_n\sigma_{n+1}/c$. A Taylor expansion gives
\[
-\log\rho_n
=\frac{\ell_n^2}{2}\frac{u_n}{(1+u_n)^2}+O(\ell_n^3)
\frac{u_n}{(1+u_n)^2}.
\]
The local-error weight is integrable over log noise:
\[
\int_{-\infty}^{\infty}\frac{e^y}{(1+e^y)^2}\,\dd y=1.
\]
Therefore $-\sum_n\log\rho_n=O(h)$ uniformly in the floor and in every
positive eigenvalue. Normal modes are exact by
Proposition~\ref{prop:layerexact}. This proves first-order Gaussian UA for
the fitted $\sigma$-Euler row.

\subsection{Midpoint DPM-Solver-2}

For one step $s\to t=se^{-\ell}$, set $m=se^{-\ell/2}$. In VE
coordinates, Algorithm~1 of \citet{lu2022dpmsolver} reads
\[
U=x+(m-s)\eta(x,s),
\qquad
x^+=x+(t-s)\eta(U,m).
\]
For a positive Gaussian eigenvalue $c$, put $u=s^2/c$ and
$q=e^{-\ell/2}$. The numerical and exact scalar multipliers are
\[
\Psi_2
=1-\frac{uq(1-q^2)(1+uq)}{(1+uq^2)(1+u)},
\qquad
\Psi_\star=\sqrt{\frac{1+uq^4}{1+u}}.
\]
Both are positive: after clearing the positive denominator of $\Psi_2$, its
numerator is $1+u(1+q^2-q+q^3)+u^2q^4$. With $y=u/(1+u)$, the log ratio is analytic on the
compactified domain $0\le y\le1$, $0\le\ell\le\ell_0$, and direct
differentiation gives
\[
\log\frac{\Psi_2}{\Psi_\star}
=\frac18y(1-y)\ell^3+O\!\left(y(1-y)\ell^4\right).
\]
The remainder is uniform because the log ratio vanishes at $y=0$ and $y=1$
and to third order at $\ell=0$. Along a decreasing mesh,
\[
y_n-y_{n+1}
=\frac{u_n(1-e^{-2\ell_n})}
{(1+u_n)(1+u_ne^{-2\ell_n})},
\qquad
\sum_n\ell_n y_n(1-y_n)\le C.
\]
Therefore
$\sum_n|\log(\Psi_{2,n}/\Psi_{\star,n})|\le Ch^2$ whenever
$\max_n\ell_n\le h$. The base-integration $W_2$ error is $O(h^2)$ uniformly
in its lower endpoint. On a normal mode, $\eta(x,s)=x/s$, so
$U=(m/s)x$ and the final multiplier is exactly $t/s$. A fitted terminal map
at switching scale $a=O(h)$ adds $O(a^2)$ tangential error and preserves the second
order at every lower floor.

\subsection{Sigma-Heun: EDM}

For $t=se^{-\ell}$ and $u=s^2/c$, the ratio of the Heun multiplier to the
exact multiplier is
\begin{equation}
R_H(u,\ell)
=\frac{2u^2+ue^{3\ell}+3ue^\ell+2e^{3\ell}}
{2\sqrt{u+1}(u+e^{2\ell})^{3/2}}.
\label{eq:heun-ratio}
\end{equation}
Differentiating at $\ell=0$ gives
\[
\log R_H(u,\ell)
=\frac{u^2}{2(1+u)^3}\ell^3
+O\!\left(\frac{u^2}{(1+u)^3}\ell^4\right),
\]
uniformly for $0\le\ell\le\ell_0<1$. Along log-noise time,
\[
\int_0^\infty\frac{u(\lambda)^2}{(1+u(\lambda))^3}\,\dd\lambda
=\frac12\int_0^\infty\frac{u}{(1+u)^3}\,\dd u=\frac14.
\]
Consequently $|\sum_n\log R_H|\le Ch^2$. Normal modes are exact because
both Heun stage derivatives equal the conserved $x/\sigma$; EDM's skipped
terminal correction leaves the same exact Euler step at zero. A switching scale
$a=O(h)$ adds $O(a^2)$ by Proposition~\ref{prop:gaussian-closure}, proving
second-order Gaussian UA.

\subsection{Affine Rectified Flow}

For $X_t=(1-t)X_0+tZ$, set
$t_{\max}=\smax/(1+\smax)<1$, so the VE ratio at the initial endpoint is
$t_{\max}/(1-t_{\max})=\smax$. A Gaussian coordinate has variance
$V_c(t)=(1-t)^2c+t^2$ and exact velocity
\[
v(t,x)=b_c(t)x,
\qquad
b_c(t)=\frac{t-(1-t)c}{V_c(t)}.
\]
For $c>0$, $b_c$ is smooth on the bounded interval $[0,t_{\max}]$, so the standard
local-error plus discrete-Gronwall argument gives classical global order
$p$ for an order-$p$ Runge--Kutta method, uniformly in the terminal floor.
For $c=0$, $x(t)=tZ$ has constant velocity $Z$. Internal consistency of the
Runge--Kutta stages makes every positive-time stage exact. If a stage would evaluate
the singular formula at zero, stop at native time $a_t>0$ and append the
fitted terminal map. Its VE switching scale is
$a_\sigma=a_t/(1-a_t)$. The condition
$a_\sigma=O(h^{p/2})$, equivalently $a_t=O(h^{p/2})$ as $h\to0$, preserves
order $p$ by the Gaussian completion estimate \eqref{eq:gaussian-closure}; a switching scale
$a_t\asymp h$ caps $p>2$ at second order.

\subsection{Euler--Maruyama}

In log-noise time, one Gaussian coordinate obeys
\[
\dd X=-(1+\beta)\frac{\sigma^2}{c+\sigma^2}X\,\dd\lambda
+\sqrt{2\beta}\,\sigma\,\dd B_\lambda.
\]
An Euler--Maruyama step of length $\ell_n$ gives the variance recurrence
\[
V_{n+1}=A_n^2V_n+2\beta\ell_n\sigma_n^2,
\qquad
A_n=1-(1+\beta)\ell_n\frac{\sigma_n^2}{c+\sigma_n^2}.
\]
Substitution of the exact variance $c+\sigma_n^2$ shows that the one-step
variance defect is $O(\ell_n^2\sigma_n^2)$, uniformly in
$\sigma_n^2/c$. For sufficiently small $h=\max_n\ell_n$, $|A_n|\le1$,
and
\[
|V_N-(c+\sigma_N^2)|
\le C\sum_n\ell_n^2\sigma_n^2
\le Ch\sum_n\ell_n\sigma_n^2\le Ch.
\]
For $c>0$, this is first-order $W_2$ error between the scalar Gaussian laws.
For $c=0$ and fixed $\beta>0$, the relative variance
$R_n=V_n/\sigma_n^2$ satisfies
\[
|R_{n+1}-1|
\le(1-c_0\ell_n)|R_n-1|+C\ell_n^2,
\]
for $h\le h_0(\beta)$, and hence $\sup_n|R_n-1|\le Ch$. The fitted map contracts these errors and
completes the terminal interval. This proves first-order UA for the Gaussian
endpoint laws.

\begin{proof}[Proof of Theorem~\ref{thm:sampler-classification}]
The first two rows are the step-count and stability counterexamples in
Appendix~\ref{app:clock-classification}. The third row follows from the
power-clock calculation in that appendix. The fourth through sixth rows follow
from the sigma-Euler, midpoint DPM-Solver-2, and sigma-Heun calculations
above. The seventh row is the Euler--Maruyama covariance calculation, and the
eighth and ninth rows are
the affine Rectified Flow calculations. The first conditional row applies
Theorem~\ref{thm:fittedap} and \eqref{eq:gaussian-closure} to
a uniformly order-$p$ base solver on $[a,\smax]$. The final row combines a
uniformly third-order base solver with
Theorem~\ref{thm:fitted-order-recovery}. In every AP
row, the displayed uniform error bound supplies stability and consistency;
the explicit grid rule gives a floor-independent step count and a discrete
terminal limit.
\end{proof}

\section{Published protocols and analyzed sampler specifications}
\label{app:source-policies}

Table~\ref{tab:classification} separates update formulas from grids and
terminal rules because the source papers combine them in different ways.

\paragraph{DDIM and DPM-Solver.}
The deterministic DDIM process ends at the clean variable
\citep{song2021ddim}. DPM-Solver-1 is algebraically identical to DDIM
\citep[Section~4.1]{lu2022dpmsolver}. The theoretical DPM-Solver description
uses a zero endpoint, whereas its practical experiments stop at a positive
cutoff and explicitly omit final denoising. The table classifies the displayed
specification with the fitted terminal map; a positive-cutoff protocol is a
different specification.

\paragraph{EDM.}
EDM appends $\sigma_N=0$ after its positive schedule
\citep[Algorithm~1]{karras2022edm}. The Heun correction is skipped on that
last interval, so the terminal update is precisely fitted sigma-Euler. The
stochastic EDM algorithm uses the same terminal rule. To obtain an AP family,
the last positive level is interpreted as $a(h)\downarrow0$, and the same
formula is used for every requested $0\le\varepsilon\le a(h)$.

\paragraph{Higher-order diffusion solvers.}
Midpoint DPM-Solver-2 provides the two-stage update used in the proved
Gaussian row; its floor-uniform proof is in
Appendix~\ref{app:gaussian-orders}. DPM-Solver-3 has a third-order
fixed-interval result \citep{lu2022dpmsolver}. DEIS uses polynomial extrapolation or transformed
Runge--Kutta and Adams--Bashforth methods; its basic exponential integrator
specializes to DDIM for the VP parameterization \citep{zhang2023deis}. PNDM combines a DDIM transfer with
higher-order estimates of the noise field \citep{liu2022pndm}.
DPM-Solver++ uses data prediction, and UniPC supplies order-$p$ predictors
and higher-order correctors \citep{lu2022dpmsolverplusplus,zhao2023unipc}.
The remaining cited order results are fixed-interval statements, with $p$ restricted
to the order established for the chosen member. The AP and UA rows
in Table~\ref{tab:classification} apply after the explicit fitted terminal
completion and under the uniform base-solver hypotheses stated there.

\paragraph{Euler--Maruyama.}
Euler--Maruyama is the standard explicit baseline for the reverse-time SDE
\citep{song2021sde}. The table applies it only on $[a,\smax]$ and uses
the fitted deterministic map for the remaining terminal interval. Its stated
order is the endpoint-law covariance order proved in
Appendix~\ref{app:gaussian-orders}.

\paragraph{Rectified Flow.}
Rectified Flow uses the affine interpolation
$X_t=(1-t)X_0+tX_1$ and standard ODE integration
\citep{liu2023rectifiedflow}. For Gaussian noise $X_1=Z$, the noise amplitude
is affine in the bounded native time. The table starts at
$t_{\max}=\smax/(1+\smax)<1$, exactly matching the finite VE high-noise level
used throughout the analysis. This is why its Euler normal update is
normal-mode exact and why standard Runge--Kutta order applies on positive Gaussian
modes.

\section{Learned-checkpoint switching-scale protocol}
\label{app:checkpoint-study}

\paragraph{Terminal-limit checks.}
A separate 64-seed sweep (seeds 3000--3063) varied the requested terminal
floor at fixed resolution. The fitted diffusion and bounded-native-time
Rectified Flow configurations kept floor-independent update counts; their
largest endpoint second-moment ratios relative to the dense references were
$1.036$ and $0.973$,
and the log--log slopes of same-seed RMS distance to the zero-floor output
against $\varepsilon$ were $1.000$ and $0.999$ at fixed resolution.
For bounded log-noise stepping without a terminal map, the regression-estimated
count slope against
$\log(\smax/\varepsilon)$ was $5.0099$ at $h=0.2$, compared with $1/h=5$.

The rate-transfer study uses 64 paired seeds (4000--4063) for each checkpoint.
For EDM, the step targets are
$\{0.28,0.20,0.14,0.10\}$ and the switching scales are
$\{0.40,0.20,0.10,0.05\}$. Rectified Flow uses step targets
$\{1/12,1/18,1/27,1/40\}$ and switching scales
$\{0.10,0.05,0.025,0.0125\}$ in its native coordinate. Dense integrations
with 1024 and 2048 steps provide reference states from the same checkpoint
and initial noise.

At each grid point we record the two brackets in
\eqref{eq:checkpoint-decomposition}, their full sum, and their cross term.
Writing these brackets as $U_{\rm base}$ and $U_{\rm term}$, their paired
root-mean-square errors satisfy
\[
E_{\rm end}^2=E_{\rm base}^2+E_{\rm term}^2
+2\rho E_{\rm base}E_{\rm term},
\qquad
\rho=\frac{\ExpE\langle U_{\rm base},U_{\rm term}\rangle}
{E_{\rm base}E_{\rm term}}.
\]
We fit $E_{\rm base}(h,a)=K_b h^p a^{-r}$ and
$E_{\rm term}(a)=K_m a^\nu$, together with the correlation $\rho$, on the coarsest
$3\times3$ subgrid. This gives nine fitting configurations; the union of the
finest-step row and smallest-switching-scale column gives seven held-out
configurations. The estimated parameters, with 5--95\% paired-bootstrap
intervals, are
\begin{quote}
\small\raggedright
EDM: $(p,r,\nu,\rho)=(1.967,0.008,0.757,-0.182)$, with intervals
$[1.929,2.011]$, $[-0.009,0.031]$, $[0.740,0.773]$, and
$[-0.207,-0.161]$, respectively.\par
Rectified Flow: $(p,r,\nu,\rho)=(1.926,-0.008,0.898,-0.052)$, with intervals
$[1.576,2.365]$, $[-0.013,-0.001]$, $[0.881,0.914]$, and
$[-0.067,-0.036]$, respectively.
\end{quote}
The empirical regression permits a signed effective exponent. The Rectified
Flow estimate $r=-0.008$ indicates an essentially switching-scale-independent
base-error amplitude over the sampled range and corresponds to the $r=0$
branch of Theorem~\ref{thm:nfe-switching}.
The finest-step row and smallest-switching-scale column are predicted without
refitting. Intervals
come from 2000 paired-seed bootstrap replicates. The algebraic decomposition
holds to relative error below $5\times10^{-16}$, and the 1024--2048 reference
differences are below $8.0\times10^{-6}$ for EDM and
$2.5\times10^{-5}$ for Rectified Flow.
Across all 16 leave-one-step/leave-one-switching-scale sensitivity splits, the
estimated base-integration order remains above $1.92$ for both checkpoints and
the terminal-map exponent
remains in $[0.757,0.823]$ for EDM and $[0.898,1.006]$ for Rectified
Flow. Prediction errors vary more across these sensitivity splits for Rectified
Flow; the reported held-out split was fixed before the final runs.

\begin{quote}
\small\raggedright
\textbf{EDM.} Public CIFAR-10 unconditional VP checkpoint
\nolinkurl{edm-cifar10-32x32-uncond-vp.pkl}\footnote{\href{https://nvlabs-fi-cdn.nvidia.com/edm/pretrained/edm-cifar10-32x32-uncond-vp.pkl}{Official checkpoint download.}},
released with EDM \citep{karras2022edm}.\par
\textbf{Rectified Flow.} Configuration
\nolinkurl{configs/rectified_flow/cifar10_rf_gaussian_ddpmpp.py}, training step
800001, commit \nolinkurl{5a1fd4dd3ea7db764ce370a84ce35f9c8b15fde6}, and
checkpoint SHA-256
\texttt{27d1463f573556765d380b1983664d24e}\allowbreak
\texttt{00a194853e0778f5af18fcdf34500de}.
\end{quote}
The implementation is validated against a closed-form synthetic decomposition.

\end{document}